%
%
%

\documentclass[graybox]{svmult}


\usepackage{type1cm}        
%
\usepackage{makeidx}         
\usepackage{graphicx}        
\usepackage{multicol}        
\usepackage[bottom]{footmisc}
\usepackage{nicefrac}

\usepackage{newtxtext}       %
\usepackage[varvw]{newtxmath}       
\usepackage{caption}

\usepackage{pgfplots}
\pgfplotsset{compat=newest}
\usetikzlibrary{plotmarks}
\usetikzlibrary{arrows.meta}
\usepgfplotslibrary{patchplots}
\usepackage{grffile}
\newlength\fwidth

\newlength\height
\newlength\width

\newcommand*{\MAT}[1]  {\ensuremath{\boldsymbol{#1}}}

\newcommand*{\HPD}  {\ensuremath{\mathcal{H}^{++}}}

\DeclareMathOperator{\Diff}{D}
\DeclareMathOperator{\grad}{grad}

\DeclareMathOperator{\symm}{Herm}
\DeclareMathOperator{\tr}{Tr}

\DeclareMathOperator*{\argmin}{arg\,min}
\DeclareMathOperator*{\minimize}{minimize}
\DeclareMathOperator*{\reel}{\mathfrak{Re}}
\DeclareMathOperator*{\vech}{vech}

\newcommand*{\realSpace}{\ensuremath{\mathbb{R}}}
\newcommand*{\complexSpace}{\ensuremath{\mathbb{C}}}

\newcommand*{\expectation}[1]{\text{E}\left[#1\right]}

\newcommand*{\nfeatures}{\ensuremath{p}}
\newcommand*{\nsamples}{\ensuremath{n}}
\newcommand*{\nbatches}{\ensuremath{m}}
\newcommand*{\nclasses}{\ensuremath{z}}

\newcommand*{\ambientSpace}[1]{\ensuremath{\mathcal{H}_{#1}}}
\newcommand*{\manifold}[1]{\ensuremath{\mathcal{H}^{++}_{#1}}}
\newcommand*{\tangentSpace}[2]{\ensuremath{T_{#2}\mathcal{H}^{++}_{#1}}}
\newcommand*{\point}{\ensuremath{\mathbf{\Sigma}}}
\newcommand*{\pointBis}{\ensuremath{\mathbf{\hat{\Sigma}}}}
\newcommand*{\tangentVector}{\ensuremath{\boldsymbol{\xi}}}
\newcommand*{\tangentVectorBis}{\ensuremath{\boldsymbol{\eta}}}
\newcommand*{\tangentVectorTer}{\ensuremath{\boldsymbol{\nu}}}

\newcommand*{\metric}[3]{\ensuremath{\langle#2,#3\rangle_{#1}}}
\newcommand*{\metricCES}[3]{\ensuremath{\reel(\alpha\tr(#1^{-1}#2#1^{-1}#3) + \beta\tr(#1^{-1}#2)\tr(#1^{-1}#3))}}

\newcommand*{\squaredist}[2]{\ensuremath{\delta^2(#1,#2)}}

\newcommand*{\metricGeneric}[3]{\ensuremath{\prec#2,#3\succ_{#1}}}
\newcommand*{\squaredistGeneric}[2]{\ensuremath{\mathfrak{d}^2(#1,#2)}}
\newcommand*{\logGeneric}{\ensuremath{\mathfrak{log}}}

\newcommand*{\errVec}{\ensuremath{\boldsymbol{\varepsilon}}}

\newcommand*{\dataSpace}[1]{\ensuremath{\complexSpace^{#1}}}
\newcommand*{\data}{\ensuremath{\mathbf{x}}}
\newcommand*{\Data}{\ensuremath{\mathbf{X}}}
\newcommand*{\AugmentedData}{\ensuremath{\mathbf{\tilde{X}}}}

\newcommand*{\distribution}[2]{\ensuremath{\text{C-CES}(\mathbf{0},#1,#2)}}
\newcommand*{\densityGenerator}{\ensuremath{g}}
\newcommand*{\loglikelihood}{\ensuremath{\mathcal{L}_g}}

\newcommand*{\costfunction}{\ensuremath{L}}

\newcommand*{\estimator}{\ensuremath{\mathbf{\hat{\Sigma}}}}

\newcommand*{\labels}{\ensuremath{y}}
\newcommand*{\classifier}{\ensuremath{\mathcal{C}}}

\newcommand*{\dataset}{\ensuremath{\mathcal{T}}}
\newcommand*{\train}{\ensuremath{\mathcal{T}_{\textup{train}}}}
\newcommand*{\test}{\ensuremath{\mathcal{T}_{\textup{test}}}}
\newcommand*{\nbatchestrain}  {{\ensuremath{m_{\textup{train}}}}}
\newcommand*{\nbatchestest}  {{\ensuremath{m_{\textup{test}}}}}
\newcommand*{\classCenter}{\ensuremath{\mathbf{\bar{\Sigma}}}}

\newcommand*{\mtest}  {{\ensuremath{m_\text{test}}}}

\usepackage{xcolor}

\usepackage{pgfplotsthemetol}
\definecolor{TolDarkPurple}{HTML}{332288}
\definecolor{TolDarkBlue}{HTML}{6699CC}
\definecolor{TolLightBlue}{HTML}{88CCEE}
\definecolor{TolLightGreen}{HTML}{44AA99}
\definecolor{TolDarkGreen}{HTML}{117733}
\definecolor{TolDarkBrown}{HTML}{999933}
\definecolor{TolLightBrown}{HTML}{DDCC77}
\definecolor{TolDarkRed}{HTML}{661100}
\definecolor{TolLightRed}{HTML}{CC6677}
\definecolor{TolLightPink}{HTML}{AA4466}
\definecolor{TolDarkPink}{HTML}{882255}
\definecolor{TolLightPurple}{HTML}{AA4499}

\definecolor{mDarkBrown}{HTML}{604c38}
\definecolor{mDarkTeal}{HTML}{23373b}
\definecolor{mLightBrown}{HTML}{EB811B}
\definecolor{mLightGreen}{HTML}{14B03D}

\colorlet{mLightTeal}{mDarkTeal!75}
\colorlet{mAltBrown}{mLightBrown!80!mDarkBrown!90!red}
\colorlet{mLightBlue}{TolDarkBlue!70!mDarkTeal}
\colorlet{mLightYellow}{TolLightBrown!70!mLightBrown}
\colorlet{mDarkOrange}{mLightBrown!40!red!75!mDarkTeal}
%

\definecolor{myblue}{HTML}{1f77b4}
\definecolor{myorange}{HTML}{ff7f0e}
\definecolor{myred}{HTML}{d62728}

\colorlet{mCiteColor}{mLightBlue}



\usepackage[ruled,vlined]{algorithm2e}


\makeindex             


\begin{document}

\title*{The Fisher-Rao geometry of CES distributions}
\author{Florent Bouchard, Arnaud Breloy, Antoine Collas, Alexandre Renaux, Guillaume Ginolhac}
\authorrunning{F. Bouchard, A. Breloy, A. Collas, A. Renaux, G. Ginolhac} 
\institute{
Florent Bouchard \at Universit\'e Paris-Saclay, CNRS, CentraleSup\'elec, L2S, \\ \email{florent.bouchard@centralesupelec.fr}
\and Arnaud Breloy \at LEME, Universit\'e Paris Nanterre, \\ \email{arnaud.breloy@parisnanterre.fr}
\and Antoine Collas \at Universit\'e Paris-Saclay, Inria, CEA, \\ \email{antoine.collas@inria.fr}
\and Alexandre Renaux \at Universit\'e Paris-Saclay, CNRS, CentraleSup\'elec, L2S, \\ \email{alexandre.renaux@centralesupelec.fr}
\and Guillaume Ginolhac \at LISTIC, Universit\'e Savoie Mont-Blanc, \\ \email{guillaume.ginolhac@univ-smb.fr}
}
%
%
\maketitle

\abstract{
When dealing with a parametric statistical model, a Riemannian manifold can naturally appear by endowing the parameter space with the Fisher information metric. 
The geometry induced on the parameters by this metric is then referred to as the Fisher-Rao information geometry.
Interestingly, this yields a point of view that allows for leveraging many tools from differential geometry. 
After a brief introduction about these concepts, we will present some practical uses of these geometric tools in the framework of elliptical distributions. 
This second part of the exposition is divided into three main axes: Riemannian optimization for covariance matrix estimation, Intrinsic Cram\'er-Rao bounds, and classification using Riemannian distances.
}

\section{Introduction: from CES distributions to information geometry}
\label{sec:intro}

This section starts with reminders on complex elliptically symmetric distributions (CES)\footnote{
Note that this chapter considers the case where the data and covariance matrix can be complex-valued for the sake of generality.
However, we focus solely on the circular case (referred to as C-CES in the background chapter). 
Hence, most of the presented results can also be obtained in the real-valued case (RES) with proper adjustments. 
}.
This part is concluded by introducing the Fisher information matrix of this model, which acts as a transition point to information geometry.
Indeed, the Fisher information matrix actually represents a metric that induces an inherent geometry for statistical models, which is referred to as the Fisher-Rao information geometry.
In the specific case of CES, this will yield a particular geometry for the space of covariance matrices.
After evidencing this transition point, this section concludes by outlining the rest of the chapter.

\subsection{Reminders on CES distributions}

\label{sec:background_ces}


Circular complex elliptically symmetric (C-CES) distributions \cite{kai1990generalized} refer to a large family of multivariate distributions.
Very comprehensive and detailed reviews on the topic can be found in the references \cite{ollila2011complex, ollila2012complex}, and of course, the background chapter of this book.
A vector $\data\in\dataSpace{\nfeatures}$ follows a centered (zero-mean) C-CES distribution, denoted $\data\sim\distribution{\point}{\densityGenerator}$, if it admits the following stochastic representation:
\begin{equation}
\label{eq:ces_repr_thm}
\data {=}_{d} \sqrt{\mathcal{Q}} ~\point^{\frac{1}{2}} ~\mathbf{u}
,
\end{equation}
where:
\begin{itemize}
    
    \item[$\bullet$]  The notation ${=}_{d}$ means that random variables on both sides have the same cumulative distribution function.
    
    \item[$\bullet$] The vector $\mathbf{u}\in\dataSpace{\nfeatures}$ follows a uniform distribution on the complex unit sphere $\complexSpace\mathcal{S}^{\nfeatures} = \left\{ \mathbf{u}\in\dataSpace{\nfeatures} ~|~ \left\|\mathbf{u} \right\| = 1  \right\}$, denoted $\mathbf{u} \sim \mathcal{U} \left( \complexSpace\mathcal{S}^{\nfeatures} \right)$. 
    
    \item[$\bullet$] The scalar $\mathcal{Q}\in \realSpace^+$ is non-negative real random variable of probability density function $f_{\mathcal{Q}}$, independent of $\mathbf{u}$, and called the second-order modular variate (while $\sqrt{\mathcal{Q}}$ is called the modular variate).

    \item[$\bullet$] The matrix $\point^{\frac{1}{2}}\in \complexSpace^{\nfeatures\times\nfeatures}$ is a factorization of the scatter matrix $\point = \point^{\frac{1}{2}}\point^{\frac{H}{2}}$.
    If the covariance matrix of $\data$ exists, it is proportional to the scatter matrix, i.e., $\expectation{\data \data^H} \propto \point$.
    If we then choose the normalization convention $\expectation{\mathcal{Q}}=1$, these two matrices are equal.
    Thus we will abusively refer to the scatter matrix $\point$ as the covariance matrix, as it is a more familiar terminology. 
\end{itemize}
We focus only on the absolutely continuous case where the covariance matrix $\point$ is full rank (cf. Section 2.3 of the background chapter).
In this case, the probability density function of $\data$ is given as
\begin{equation}
f_{\data} \left(  \data | \point \right)  \propto |\point|^{-1} \densityGenerator \left( ~ \data^H \point^{-1} \data ~ \right)
,
\end{equation}
where the function $\densityGenerator:\realSpace^+\to\realSpace^+$ is called the density generator.
The density generator satisfies the finite moment condition
$\delta_{\nfeatures,\densityGenerator} = \int_0^{\infty} t^{\nfeatures-1}\densityGenerator(t) d t < \infty$.
This function $\densityGenerator$ is directly related to the probability density function of the second-order modular variate by the relation
\begin{equation}
\label{eq:pdf_2modular_variate}
f_{\mathcal{Q}}\left( \mathcal{Q} \right) = \delta_{\nfeatures,\densityGenerator}^{-1} \mathcal{Q}^{\nfeatures-1} \densityGenerator \left( \mathcal{Q} \right)
.
\end{equation}
Given a $\nsamples$-sample $\{\data_i\}_{i=1}^{\nsamples}$, assumed to be independent and identically distributed (iid) from $\data\sim\distribution{\point}{\densityGenerator}$, its log-likelihood is given as:
\begin{equation}
\label{eq:log_likelihood}
\loglikelihood\left(\{ \data_i \}_{i=1}^{\nsamples}  | \point \right)   =   \sum_{i=1}^{\nsamples} \log \left(  \densityGenerator \left( \data_i^H  \point^{-1} \data_i \right) \right) - \nsamples \log |\point|
.
\end{equation}
%



The C-CES model being defined, we move to the notion of information brought by the Fisher information matrix: intuitively, the more the sample set $ \{ \data_i \}_{i=1}^{\nsamples} $ depends on $\point$, the more sampling from the likelihood \eqref{eq:log_likelihood} (increasing $\nsamples$) will reveal information about $\point$.
The score vector is a tool that will help in quantifying this notion of information:
to define this quantity, we now consider a parameterization of the covariance matrix through a real-valued vector $\boldsymbol{\nu}$ of appropriate dimension\footnote{
In this chapter, $\point$ is not assumed to have a specific structure, so $\boldsymbol{\nu}$ is typically of dimension $\nfeatures^2$ and stores the entries of the diagonal and upper triangle of the covariance matrix (where the coordinates are split in terms of real and imaginary part).
However, the definition extends to any parameterization, e.g., from a choice of decomposition in the case of structured matrices \cite{meriaux2019robust}.
}, denoted $\point(\boldsymbol{\nu})$.
The score vector $ \mathbf{s} $ is then defined entry-wise as
\begin{equation}
    \left[ \mathbf{s} \right]_j = \frac{\partial \loglikelihood \left(   \{ \data_i \}_{i=1}^{\nsamples}  | \point(\boldsymbol{\nu}) \right)   }{\partial \nu_j },
\end{equation}
which therefore reflects the variation of the log-likelihood of the sample set $ \{ \data_i \}_{i=1}^{\nsamples} $ with respect to the parameter $\nu_j$.
Under mild regularity conditions (satisfied by $\loglikelihood$ in our case), this vector has zero mean, i.e., $\expectation{\mathbf{s}} = \mathbf{0}$.
However, its covariance matrix is a fundamental quantity referred to as the Fisher information matrix, denoted $\mathbf{F}$, and defined as: 
\begin{equation} \label{eq:fisher_matrix_def}
    \left[ \mathbf{F}\right]_{j,k} 
    =
    \expectation{
        \frac{\partial \loglikelihood \left(   \{ \data_i \}_{i=1}^{\nsamples} | \point(\boldsymbol{\nu}) \right)   }{\partial \nu_j }
        \frac{\partial \loglikelihood \left( \{ \data_i \}_{i=1}^{\nsamples}  | \point(\boldsymbol{\nu}) \right)   }{\partial \nu_k }
    }.
\end{equation}
This matrix quantifies, on average, how much information about the vector $\boldsymbol{\nu}$ we can obtain from a sample set $\{\data_i\}_{i=1}^{\nsamples}$.
In practice, the entries of the Fisher information matrix $\mathbf{F}$ for centered C-CES can be obtained thanks to Slepian-Bangs type formula from \cite{besson2013fisher}, also presented in the Section 6.5 of the background chapter.
The latter is briefly recalled below using an alternate expression that is consistent whit the upcoming discussions:


\begin{theorem}(Fisher Information matrix of centered C-CES)\\
\label{thm:fisher_metric_ces}
Let $\point\overset{\text{def}}{=}\point(\boldsymbol{\nu})$ be a covariance matrix parameterized by the real-valued vector $\boldsymbol{\nu}$.
Let $\{\data_i\}_{i=1}^{\nsamples}$ be a $\nsamples$-sample of iid from $\data \sim \distribution{\point}{\densityGenerator}$.
The entries of the Fisher information matrix are
\begin{equation}
\label{eq:SB_metric_CES}
\left[ \mathbf{F}\right]_{j,k}  =
    n \alpha_{\densityGenerator}
    \tr(\point^{-1}\tangentVector_j\point^{-1}\tangentVector_k)
    + 
    n \beta_{\densityGenerator}
    \tr(\point^{-1}\tangentVector_j)\tr(\point^{-1}\tangentVector_k),
\end{equation}
with 
\begin{equation} \label{eq:tangent_xi}
    \tangentVector_j = \frac{\partial \point(\boldsymbol{\nu}) }{\partial \nu_j },
\end{equation}
and where the coefficients $\alpha_g$ and $\beta_g$ are defined by
\begin{equation}
\label{eq:coefficients_metric}
\begin{aligned}
&\alpha_g =    1   -  \frac{\mathbb{E} \left[ \mathcal{Q}^2 \phi' \left( \mathcal{Q}  \right) \right]  
}{\nfeatures(\nfeatures+1)}  
&~~~~\text{and}~~~~&
&\beta_g  =  \alpha_g -1,
\end{aligned}
\end{equation}
with $\phi(t) = g'(t)/g(t)$.
\end{theorem}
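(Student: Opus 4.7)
The plan is to start directly from the log-likelihood in \eqref{eq:log_likelihood}. Since the samples are iid, the score is a sum of $\nsamples$ iid zero-mean terms, so the Fisher information matrix is $\nsamples$ times the single-sample Fisher information. It will therefore suffice to compute the expected outer product of the per-sample score, and multiplication by $\nsamples$ at the end gives the stated factor.

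For a single observation $\data$, applying the chain rule together with the standard matrix differential identities
\[
\frac{\partial \log|\point|}{\partial \nu_j} = \tr(\point^{-1}\tangentVector_j),
\qquad
\frac{\partial \point^{-1}}{\partial \nu_j} = -\point^{-1}\tangentVector_j\point^{-1},
\]
yields
\[
\frac{\partial \loglikelihood}{\partial \nu_j}
= -\phi(\data^H\point^{-1}\data)\,\data^H\point^{-1}\tangentVector_j\point^{-1}\data - \tr(\point^{-1}\tangentVector_j).
\]
I would then substitute the stochastic representation \eqref{eq:ces_repr_thm}, which gives $\data^H\point^{-1}\data = \mathcal{Q}$ and $\data^H\point^{-1}\tangentVector_j\point^{-1}\data = \mathcal{Q}\,\mathbf{u}^H\mathbf{A}_j\mathbf{u}$ with $\mathbf{A}_j = \point^{-H/2}\tangentVector_j\point^{-1/2}$. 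Expanding the product of two such scores and using the independence of $\mathcal{Q}$ and $\mathbf{u}$ decouples the radial and angular expectations.

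The angular piece is handled by the classical moments of $\mathbf{u}\sim\mathcal{U}(\complexSpace\mathcal{S}^{\nfeatures})$,
\[
\expectation{\mathbf{u}^H\mathbf{A}_j\mathbf{u}} = \tr(\mathbf{A}_j)/\nfeatures,
\quad
\expectation{\mathbf{u}^H\mathbf{A}_j\mathbf{u}\,\mathbf{u}^H\mathbf{A}_k\mathbf{u}} = \frac{\tr(\mathbf{A}_j)\tr(\mathbf{A}_k) + \tr(\mathbf{A}_j\mathbf{A}_k)}{\nfeatures(\nfeatures+1)},
\]
and the traces reduce via the cyclic property to $\tr(\point^{-1}\tangentVector_j)$ and $\tr(\point^{-1}\tangentVector_j\point^{-1}\tangentVector_k)$. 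The radial piece requires two integration-by-parts identities against the density \eqref{eq:pdf_2modular_variate}: using $\phi g = g'$ and the boundary-vanishing assumption one obtains $\expectation{\mathcal{Q}\phi(\mathcal{Q})} = -\nfeatures$ and, by a second integration by parts applied to $\int t^{\nfeatures+1}\phi(t)g'(t)\,dt$, the identity $\expectation{\mathcal{Q}^2\phi(\mathcal{Q})^2} = \nfeatures(\nfeatures+1) - \expectation{\mathcal{Q}^2\phi'(\mathcal{Q})}$.

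Assembling these pieces, the $\tr(\point^{-1}\tangentVector_j)\tr(\point^{-1}\tangentVector_k)$ contribution from the first moment of $\mathcal{Q}\phi(\mathcal{Q})$ together with the purely deterministic cross term $\tr(\point^{-1}\tangentVector_j)\tr(\point^{-1}\tangentVector_k)$ combine into a single coefficient $\alpha_\densityGenerator - 1 = \beta_\densityGenerator$, while the $\tr(\point^{-1}\tangentVector_j\point^{-1}\tangentVector_k)$ term carries precisely the coefficient $\alpha_\densityGenerator$ of \eqref{eq:coefficients_metric}. Multiplying by $\nsamples$ gives \eqref{eq:SB_metric_CES}. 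I expect the only real obstacle to be the second integration-by-parts identity on $\mathcal{Q}$: keeping track of the signs and the boundary term, and recognising that the residual $\expectation{\mathcal{Q}^2\phi'(\mathcal{Q})}$ produced there is exactly what makes the $\alpha_\densityGenerator/\beta_\densityGenerator$ structure appear.
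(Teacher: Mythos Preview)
Your proposal is correct, but it takes a different route from the paper. The paper (in its proof of the metric version, Theorem~\ref{thm:fim_metric}) works with the \emph{second-derivative} form of the Fisher information, $-\expectation{\Diff^2\loglikelihood[\tangentVector_j,\tangentVector_k]}$: differentiating the score once more produces $\phi'$ directly, so after the stochastic representation only the single radial identity $\expectation{\mathcal{Q}\phi(\mathcal{Q})}=-\nfeatures$ is needed, together with the same second-order moment of $\mathbf{u}$ that you use. You instead compute the \emph{score outer product} $\expectation{s_j s_k}$, which naturally lands you on $\expectation{\mathcal{Q}^2\phi(\mathcal{Q})^2}$ and therefore requires the extra integration-by-parts identity $\expectation{\mathcal{Q}^2\phi(\mathcal{Q})^2}=\nfeatures(\nfeatures+1)-\expectation{\mathcal{Q}^2\phi'(\mathcal{Q})}$ to recover the $\alpha_g,\beta_g$ coefficients in the form~\eqref{eq:coefficients_metric}. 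That identity is exactly right (one integration by parts on $\int t^{\nfeatures+1}\phi(t)g'(t)\,dt$, plugging in $\expectation{\mathcal{Q}\phi(\mathcal{Q})}=-\nfeatures$), and you correctly flagged it as the only delicate step. The Hessian route is marginally cleaner because it sidesteps this second identity, but your approach has the virtue of making the $\beta_g=\alpha_g-1$ relation appear as a genuine cancellation among the three pieces of $s_js_k$ rather than as a post-hoc definition.
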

In the statistical signal processing community \cite{kay1993fundamentals}, the Fisher information matrix has been extensively leveraged thanks to the Cramér-Rao inequality:
\begin{equation}
    \mathbb{E} \left[ (\hat{\boldsymbol{\nu}}-\boldsymbol{\nu})
    (\hat{\boldsymbol{\nu}}-\boldsymbol{\nu})^{\top}  \right]
    \succcurlyeq
    \mathbf{F}^{-1}
    \quad
    \Rightarrow
    \quad 
    \left\| \hat{\boldsymbol{\nu}}-\boldsymbol{\nu} \right\|_F^2 \geq \tr( \mathbf{F}^{-1}),
\end{equation}
that yields a lower bound for the mean squared error of any unbiased estimator $\hat{\boldsymbol{\nu}}$ of
$\boldsymbol{\nu}$ (built from a set of observations $ \{ \data_i \}_{i =1}^{\nsamples} $).
On the other hand, the seminal work of Rao \cite{rao1945information, rao1992information} also discusses using the Riemannian geometry of the parameter space when the Fisher information matrix is used as a metric tensor.
The study of such spaces is now broadly referred to as the Fisher-Rao information geometry, which is introduced in the next section.
Before this, we conclude this brief reminder by the example of multivariate (Student's) $t$ distribution (also discussed with more details in Section 5.2 of the background chapter).

\Example{
The $t$-distribution with $d \in \mathbb{N}^*$ degrees of freedom is obtained for the C-CES representation $\data\sim\distribution{\point}{\densityGenerator_d}$ with
\begin{equation}
g_d (t) = \left( 1 + \frac{t}{d} \right)^{-(d+\nfeatures)} ,
\end{equation}
and the second-order modular variate is distributed as
$\mathcal{Q} {=}_{d}  {\mathbb{C}_{\mathcal{X}_\nfeatures^2}}/{\mathbb{C}_{\mathcal{X}_d^2/d} }$
where $\mathbb{C}_{\mathcal{X}_x^2}$ denotes the Chi-squared distribution with $x$ degrees of freedom. Hence $\mathcal{Q}$ follows a scaled $\mathcal{F}$-distribution.
We have
\begin{equation}
\label{eq:phi_mle_t}
\phi(t) = - \frac{d+\nfeatures}{  d +  t } ,
\end{equation}
and the expectation
\begin{equation}
\mathbb{E} \left[ \mathcal{Q}^2 \phi^2 \left( \mathcal{Q}  \right) \right]  
= \frac{(d+\nfeatures)\nfeatures(\nfeatures+1)}{d+\nfeatures+1}
,
\end{equation}
that allows to obtain the coefficients 
\begin{equation}
\alpha_g = \frac{d+\nfeatures}{d+\nfeatures+1}
~~~~~~~~\text{and}~~~~~~~~
\beta_g  = \frac{-1}{d+\nfeatures+1}.
\end{equation}
for the Fisher information metric as in Theorem \ref{thm:fim_metric}.
The $t$-distribution also encompasses the well known multivariate Gaussian model 
$\mathbf{x}\sim\mathcal{N}(\mathbf{0}, \mathbf{\Sigma})$
(of density generator $g_{\mathcal{N}}(t)=\exp(-t)$) as limit case when $d\rightarrow \infty$.
The corresponding Fisher information metric coefficient are then $\alpha_g = 1$ and $\beta_g  = 0$, which makes Theorem \ref{thm:fisher_metric_ces} coincide with the classical Slepian-Bangs formula \cite{slepian1954estimation, bangs1971array}.
}

\subsection{From the Fisher information matrix to information geometry}

\label{sec:from_fim_to_rg}

This section aims at linking notions of Riemannian geometry to the classical expression of the Fisher information matrix from Theorem \ref{thm:fisher_metric_ces}.
The goal is to shortly build the intuition on why the C-CES statistical model naturally induces a certain geometry for covariance matrices, while the corresponding framework will be presented in details in Section \ref{sec:RiemGeo}.
We first need to re-interpret the expression of the Fisher information matrix
from two key points:
\begin{itemize}
    \item 
    \textbf{Covariance matrices belong to the smooth manifold $\manifold{\nfeatures}$}\\
    The matrix $\point\overset{\text{def}}{=}\point(\boldsymbol{\nu})$ is a point in the space of covariance matrices, i.e., the set of $\nfeatures\times\nfeatures$ positive definite Hermitian matrices
    \begin{equation}
    \manifold{\nfeatures}
    = 
    \left\{ 
    \point
    \in 
    \ambientSpace{\nfeatures}:~
    ~\forall~\data\in \dataSpace{\nfeatures} \backslash \{ \mathbf{0} \},
    ~\data^H\point \data > 0
    \right\}
    ,
    \end{equation}
    where $\ambientSpace{\nfeatures}$ denotes the set of $\nfeatures\times\nfeatures$ Hermitian matrices.
    As it is an open of the linear space $\ambientSpace{\nfeatures}$, the space $\manifold{\nfeatures}$ is a smooth manifold.
    This means that it admits a differential structure, and notably, a tangent space at each point $\point$, denoted $\tangentSpace{\nfeatures}{\point}$.
    For any point $\point$, this tangent space $\tangentSpace{\nfeatures}{\point}$ turns out to be identifiable to be $\ambientSpace{\nfeatures}$ (which again, comes from the fact that $\manifold{\nfeatures}$  is an open subspace of $\ambientSpace{\nfeatures}$). 
    An abstract representation of these spaces is presented in Figure~\ref{fig:tangent_from_curves}. 

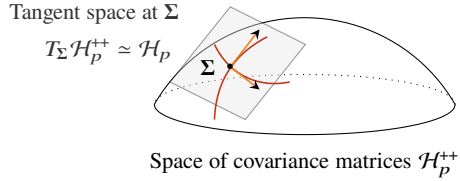
\begin{figure}
    \centering
    %
    \begin{center}
\begin{tikzpicture}[scale=2]
\draw  (1,0) arc (30:150:1.155)
      plot [smooth, domain=pi:2*pi] ({cos(\x r)},{0.2*sin(\x r)});
\draw [dotted] plot [smooth, domain=0:pi] ({cos(\x r)},{0.2*sin(\x r)});
\draw (0,-0.4) node {\small Space of covariance matrices $\mathcal{H}_p^{++}$};

\coordinate (x) at (-0.495,0.25);
\draw [fill=gray!20,opacity=0.4] (x)++(0.5,0.15) -- ++(-0.4,-0.5) -- ++(-0.5,+0.25) -- ++(+0.4,+0.5) -- cycle;
\draw [opacity=0.8] (-1.3,0.35) node {\small $T_{\mathbf{\Sigma}} \mathcal{H}_p^{++} \simeq \mathcal{H}_p $};
\draw [opacity=0.8] (-1.4,0.6) node {\small Tangent space at $\mathbf{\Sigma}$};
\draw (x)++(-0.25,0.1) node[anchor=north west] {\footnotesize$\mathbf{\Sigma}$};

\draw[draw=mDarkOrange,line width=0.6pt] (-0.25,0.45) to[bend right=30] (-0.6,-0.1);
\draw[draw=mLightBrown,line width=0.6pt,>=stealth,->] (x) -- (-0.3,0.5);

\draw[draw=mDarkOrange,line width=0.6pt] (-0.6,0.4) to[bend right=38] (-0.1,0.15);
\draw[draw=mLightBrown,line width=0.6pt,>=stealth,->] (x) -- (-0.3,0.1);

\draw (x) node {\tiny$\bullet$};

\end{tikzpicture}
\end{center}
    \caption{
    Space of covariance matrices represented as a smooth manifold.}
    \label{fig:tangent_from_curves}
\end{figure}

    \item \textbf{The Fisher information matrix represents an inner product on $\tangentSpace{\nfeatures}{\point}$}\\
    The entries of the Fisher information matrix of Theorem~\ref{thm:fisher_metric_ces} can be compactly denoted as
    $[\mathbf{F}]_{j,k}=\metric{\point}{\tangentVector_j}{\tangentVector_k}^{\textup{FIM}}$, whose expression is identified directly from~\eqref{eq:SB_metric_CES}.
    We then remark that the matrices $\tangentVector_j$ and $\tangentVector_k$ in~\eqref{eq:tangent_xi} are, in fact, elements of $\tangentSpace{\nfeatures}{\point}$.
    The expression in~\eqref{eq:SB_metric_CES} can thus be generalized to any pair of matrices $\tangentVector, \tangentVectorBis\in\tangentSpace{\nfeatures}{\point}$, which results in a bi-linear form, denoted $\metric{\point}{\cdot}{\cdot}^{\textup{FIM}}: \tangentSpace{\nfeatures}{\point}\times\tangentSpace{\nfeatures}{\point}\to\realSpace$.
    Because this bi-linear form is positive definite, it defines a metric, i.e., an inner product on the tangent space $\tangentSpace{\nfeatures}{\point}$.
    This inner product on $\tangentSpace{\nfeatures}{\point}$ is referred to as the Fisher information metric\footnote{
    Note that the Fisher information \textit{matrix} being obtained as $ \left[\mathbf{F}\right]_{j,k} = \metric{\point}{\tangentVector_j}{\tangentVector_k}^{\textup{FIM}}$, it is actually a matrix representation (metric tensor) of the Fisher information \textit{metric} $\metric{\point}{\cdot}{\cdot}^{\textup{FIM}}$ when the set $\{ \tangentVector_j \} $ is chosen as a basis of coordinates for the tangent space $\tangentSpace{\nfeatures}{\point}$.}.

\end{itemize}

\noindent

%

%

These two points being stated, we can last notice that the obtained Fisher information metric $\metric{\point}{\cdot}{\cdot}^{\textup{FIM}}$ varies smoothly with $\point$.
This enables the transition from statistical models to Riemannian geometry: the branch of differential geometry studying smooth manifolds endowed with smooth local inner products (referred to as Riemannian metrics).
Such framework indeed applies to parametric statistical models, as it allows us to investigate the geometry of the parameter space equipped with the Fisher information metric.
The resulting Riemannian geometry is generally referred to as the Fisher-Rao information geometry.
Back to our central example, we have presented enough elements to explicit that the title of this chapter ``The Fisher-Rao Geometry of CES distributions'' more precisely stands short for ``the Riemannian geometry of Hermitian positive definite matrices (covariance matrices) induced by the Fisher information metric of centered circular complex elliptically symmetric distributions'', which will be studied in the next sections.

\subsection{Outline of the chapter}

The previous section showed why an inherent geometry of the parameter space can naturally result from a statistical model.
Studying such geometry in detail requires introducing tools from the framework of Riemannian geometry, which is done in section \ref{sec:RiemGeo}.
The C-CES distributions will be used as an example throughout the exposition.
Hence, we will obtain most tools related to the Fisher-Rao Geometry of C-CES distributions: the Levi-Civita connection, the geodesics (and geodesic distance) between two covariance matrices, as well as the Riemannian exponential and logarithm mappings.

On a larger perspective, the second part of this chapter illustrates where tools obtained from the Fisher-Rao information geometry of C-CES can be leveraged within signal processing and machine learning tasks.
In details:
\begin{itemize}
    \item[$\bullet$] Section \ref{sec:Optim} addresses covariance matrix estimation problems, i.e., given a sample set $ \{ \data_i \}_{i=1}^{\nsamples} $, we infer $\point$ to perform a task (covariance analysis, filtering, metric learning, etc.).
    In this setup, we illustrate how the concepts of geodesic convexity and Riemannian optimization can be helpful in problems related to covariance matrix estimation. 

    \item[$\bullet$] 
    Still related to covariance matrix estimation problems, Section \ref{sec:iCRLB}
    presents how the statistical performance of an estimator can be evaluated with intrinsic Cramér-Rao lower bounds, which generalize the standard Cramér-Rao inequality for parameters that lie in a manifold.

    \item[$\bullet$] 
    Section \ref{sec:ClassifEEG} discusses how the Fisher-Rao geometry of C-CES provides a measure between the distributions of samples that can be leveraged in classification methodologies.
    This framework is then applied to Electroencephalography (EEG) signals.
    
\end{itemize}

\section{An introduction to Riemannian geometry through the Fisher-Rao geometry of CES distributions}
\label{sec:RiemGeo}

This section provides a short introduction to the concepts and tools of Riemannian geometry, while using the Fisher-Rao geometry of C-CES distributions as the main directive example.
Some elementary notions are also assumed to be known for the sake of conciseness (e.g., basics matrix differentiation).
For more detailed coverages of differential geometry, one can refer to the standard textbooks on the topic \cite{gallot1990riemannian,lang2012differential,lee2006riemannian}. 
The notations and definitions of this section are mostly inspired from the books \cite{absil2009optimization,boumal2023introduction}, which provide very good (optimization-oriented) entry points to smooth manifolds and Riemannian geometry. 
The Fisher-Rao geometries of multivariate Gaussian and CES models have been studied in, e.g., \cite{atkinson1981rao,berkane1997geodesic,breloy2018intrinsic, micchelli2005rao,mitchell1989information,smith2005covariance, skovgaard1984riemannian}.
%

\subsection{$\manifold{\nfeatures}$ as a Riemannian manifold}

\label{sec:hpd_as_riem}

The set of $\nfeatures\times\nfeatures$ Hermitian positive definite matrices $\manifold{\nfeatures}$ is an open subspace of the space of $\nfeatures\times\nfeatures$ Hermitian matrices $\ambientSpace{\nfeatures}$.
Since $\manifold{\nfeatures}$ has the same dimension as its embedding space $\ambientSpace{\nfeatures}$, it is a smooth manifold of dimension $\nfeatures^2$~\cite[Definition 3.10]{boumal2023introduction}.
As every smooth manifolds, $\manifold{\nfeatures}$ admits a differential structure, \emph{i.e.}, every point $\point\in\manifold{\nfeatures}$ possesses a tangent space $\tangentSpace{\nfeatures}{\point}$.
The elements of $\tangentSpace{\nfeatures}{\point}$ are called tangent vectors, and correspond to the directional derivatives of curves in $\manifold{\nfeatures}$ passing through $\point$ (cf. Figure~\ref{fig:tangent_from_curves} for an illustration).
Since $\manifold{\nfeatures}$ is open in $\ambientSpace{\nfeatures}$, the tangent space $\tangentSpace{\nfeatures}{\point}$ at every point $\point$ can be identified as $\ambientSpace{\nfeatures}$~\cite[Theorem 3.15]{boumal2023introduction}.
%
%
An illustration of the $1$-dimensional case $\manifold{1}=\realSpace^+_*$ is presented in Figure~\ref{fig:HPD_1_man}.

\begin{figure}
    \centering
    \begin{tikzpicture}
    \draw (-5,0) -- (5,0) node[very near start, above, anchor=south east] {$\ambientSpace{1}=\realSpace$};
    
    \draw[draw=myorange,line width=1.2pt] (0,0) -- (5,0) node[very near end, above] {\color{myorange}$\manifold{1}=\realSpace^+_*$};

    \draw[fill=black] (0,0)++(-0.02,0) arc (0:360:0.045) node[anchor=north east] {$0$};

    \draw[draw=myorange, line width=1.2pt] (0,0) arc(0:90:0.07);
    \draw[draw=myorange, line width=1.2pt] (0,0) arc(0:-90:0.07);

    \draw[draw=myorange, line width=1pt] (2,-0.1) -- (2,0.1) node[at start, anchor=north] {\color{myorange}$\sigma$};
\end{tikzpicture}
    \caption{Illustration of the manifold $\manifold{1} = \realSpace^+_*$, which is open in $\ambientSpace{1}=\realSpace$. We observe that the tangent space at every point $\sigma>0$ simply corresponds to $\ambientSpace{1}=\realSpace$.}
    \label{fig:HPD_1_man}
\end{figure}

\noindent
\Remark{
The space $\manifold{\nfeatures}$ is often referred to as the \textit{convex cone} of positive definite matrices.
It is indeed a cone because $\point \in \manifold{\nfeatures}$ implies that
$a \point \in \manifold{\nfeatures},~\forall~a \in \realSpace_+^*$.  It is furthermore a convex cone because any linear combination $a \point_1 + b\point_2 $ is also in $\manifold{\nfeatures}$, $\forall~\point_1, \point_2 \in \manifold{\nfeatures}$ and $\forall~a, b \in \realSpace_+^*$.
This cone visually appears in the real $2\times 2$ case of $\mathcal{S}_2^{++}$, which is often used to represent $\manifold{\nfeatures}$.
Still, this chapter will rely on the representation of Figure \ref{fig:tangent_from_curves},
which is more convenient to illustrate the generic concepts and tools of Riemannian geometry.
}


In order to further harness the differential structure of $\manifold{\nfeatures}$, we endow it with a Riemannian metric.
This consists in a mapping that equips every tangent space $\tangentSpace{\nfeatures}{\point}$ with an inner product\footnote{An inner product is a bilinear, symmetric, positive definite function.} $\metric{\point}{\cdot}{\cdot}$ that varies smoothly with respect to the point $\point$.
This allows notably for locally defining the notion of angle and length for vectors in $\tangentSpace{\nfeatures}{\point}$.
%
%
A smooth manifold equipped with such Riemmanian metric is then referred to as a Riemannian manifold.
Notice that the definition of the metric is a choice that induces a corresponding geometry.
In particular, if $\manifold{\nfeatures}$ is endowed with the Euclidean metric $\metric{\point}{\tangentVector}{\tangentVectorBis}^{\mathcal{E}}=\reel(\tr(\tangentVector\tangentVectorBis))$, where $\reel(\cdot)$ returns the real part of its argument, all the geometrical objects of the manifold $\manifold{\nfeatures}$ are exactly the same as those of the space $\ambientSpace{\nfeatures}$.
In this case, there is no distinction between $\manifold{\nfeatures}$ and $\ambientSpace{\nfeatures}$ from a geometrical point of view, and the true structure of $\manifold{\nfeatures}$ cannot be exploited.
This motivates the use of other metrics, that induce a more meaningful geometry on $\manifold{\nfeatures}$ (e.g., ensuring that the boundaries of the space are not reachable).
In this scope, various options have been considered, such as the affine invariant metric~\cite{bhatia2009positive,moakher2005differential}, the log-Euclidean metric~\cite{arsigny2006log}, or the Bures-Wasserstein one~\cite{bhatia2019bures,han2021riemannian}.
Overviews of the different metrics and their corresponding geometries can be found in \cite{thanwerdas22theseis,thanwerdas2023n}.


        
    
    

%
When dealing with a statistical model the Fisher information metric is generally to be favored, as it is naturally suited to the underlying geometry of the data.
Without resorting to the tedious parameterization and identification of Section \ref{sec:from_fim_to_rg}, a general expression of this metric can directly be obtained following~\cite[Theorem 1]{smith2005covariance} as:
\begin{equation}
    \metric{\point}{\tangentVector}{\tangentVectorBis}^{\textup{FIM}}
     = \expectation{\Diff\loglikelihood(\data|\point)[\tangentVector]\cdot\Diff\loglikelihood(\data|\point)[\tangentVectorBis]}
     = - \expectation{\Diff^2\loglikelihood(\data|\point)[\tangentVector,\tangentVectorBis]},
     \label{eq:fim_with_expectations}
\end{equation}
where $\Diff\loglikelihood$ and $\Diff^2\loglikelihood$ are the first and second order directional derivatives of the log-likelihood $\loglikelihood$ of the distribution with respect to $\point$.
Recall from~\cite{higham2008functions} that the first and second derivatives of a function $L:\manifold{\nfeatures}\to\mathbb{R}$ at $\point\in\manifold{\nfeatures}$ in directions $\tangentVector$ and $\tangentVectorBis\in\tangentSpace{\nfeatures}{\point}$ are defined as
\begin{equation}
    \begin{array}{rcl}
         \Diff L(\point)[\tangentVector] & = & L(\point+\tangentVector) - L(\point) + o(\|\tangentVector\|) \\
         \Diff^2 L(\point)[\tangentVector,\tangentVectorBis] & = & \Diff L(\point+\tangentVectorBis)[\tangentVector] - \Diff L(\point)[\tangentVector] + o(\|\tangentVector\|).
    \end{array}
\end{equation}
Notice that $\Diff^2 L(\point)[\tangentVector,\tangentVectorBis]$ is symmetrical with respect to $\tangentVector$ and $\tangentVectorBis$.
In the case of the CES distributions, the Fisher information metric was studied in \cite{atkinson1981rao,berkane1997geodesic,breloy2018intrinsic, micchelli2005rao,mitchell1989information}, and its derivation is reported in the following Theorem:

%
\begin{theorem}[Fisher Information metric of centered CES]
    Let $\point \in \manifold{\nfeatures}$.
    Let $\{\data_i\}_{i=1}^{\nsamples}$ be a $\nsamples$-sample of iid from $\data \sim \distribution{\point}{\densityGenerator}$.
    The Fisher information metric is obtained $\forall~\tangentVector,\tangentVectorBis\in\tangentSpace{\nfeatures}{\point}$ as
    \begin{equation*}
    \metric{\point}{\tangentVector}{\tangentVectorBis}^{\textup{FIM}}
    =
    n \alpha_{\densityGenerator} \tr(\point^{-1}\tangentVector\point^{-1}\tangentVectorBis) 
    +
    n \beta_g \tr(\point^{-1}\tangentVector)\tr(\point^{-1}\tangentVectorBis),
\end{equation*}
with $\alpha_{\densityGenerator}$ and $\beta_{\densityGenerator}$ defined in \eqref{eq:coefficients_metric}.
\label{thm:fim_metric}
\end{theorem}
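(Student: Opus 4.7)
The plan is to verify the claim starting from the intrinsic expression \eqref{eq:fim_with_expectations}, namely $\metric{\point}{\tangentVector}{\tangentVectorBis}^{\textup{FIM}} = -\expectation{\Diff^2\loglikelihood(\data|\point)[\tangentVector,\tangentVectorBis]}$. Because the $\nsamples$-sample log-likelihood \eqref{eq:log_likelihood} is a sum of iid single-sample contributions, its Hessian decomposes additively and its expectation factorises per sample, so the overall factor $\nsamples$ factors out and it suffices to carry out the computation for one sample.

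First I would compute $\Diff^2\loglikelihood(\data|\point)[\tangentVector,\tangentVectorBis]$ by direct matrix calculus, using $\Diff\point^{-1}[\tangentVector] = -\point^{-1}\tangentVector\point^{-1}$, $\Diff\log|\point|[\tangentVector] = \tr(\point^{-1}\tangentVector)$, and the chain rule applied to $\log\densityGenerator(\data^H\point^{-1}\data)$. Writing $q = \data^H\point^{-1}\data$, bookkeeping produces three contributions:
\[
\phi'(q)(\data^H\point^{-1}\tangentVector\point^{-1}\data)(\data^H\point^{-1}\tangentVectorBis\point^{-1}\data) + \phi(q)\bigl(\data^H\point^{-1}\tangentVector\point^{-1}\tangentVectorBis\point^{-1}\data + \data^H\point^{-1}\tangentVectorBis\point^{-1}\tangentVector\point^{-1}\data\bigr) + \tr(\point^{-1}\tangentVector\point^{-1}\tangentVectorBis).
\]

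Next I would inject the stochastic representation \eqref{eq:ces_repr_thm}, taking $\point^{1/2}$ to be the Hermitian positive square root, and introduce the whitened Hermitian matrices $\widetilde{\tangentVector} = \point^{-1/2}\tangentVector\point^{-1/2}$ and $\widetilde{\tangentVectorBis} = \point^{-1/2}\tangentVectorBis\point^{-1/2}$. All quadratic forms collapse to $\mathcal{Q}\,\mathbf{u}^H\widetilde{\tangentVector}\mathbf{u}$ or $\mathcal{Q}\,\mathbf{u}^H\widetilde{\tangentVector}\widetilde{\tangentVectorBis}\mathbf{u}$, while the traces satisfy $\tr(\point^{-1}\tangentVector) = \tr(\widetilde{\tangentVector})$ and $\tr(\point^{-1}\tangentVector\point^{-1}\tangentVectorBis) = \tr(\widetilde{\tangentVector}\widetilde{\tangentVectorBis})$. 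Taking expectation I would use independence of $\mathcal{Q}$ and $\mathbf{u}$, combined with the classical uniform-sphere identities $\expectation{\mathbf{u}^H\mathbf{A}\mathbf{u}} = \tr(\mathbf{A})/\nfeatures$ and $\expectation{(\mathbf{u}^H\mathbf{A}\mathbf{u})(\mathbf{u}^H\mathbf{B}\mathbf{u})} = [\tr(\mathbf{A})\tr(\mathbf{B}) + \tr(\mathbf{A}\mathbf{B})]/[\nfeatures(\nfeatures+1)]$ valid for $\mathbf{u}\sim\mathcal{U}(\complexSpace\mathcal{S}^{\nfeatures})$.

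The main obstacle is the reconciliation of the $\phi(q)$-term with the log-det contribution. This hinges on the integration-by-parts identity $\expectation{\mathcal{Q}\phi(\mathcal{Q})} = -\nfeatures$, obtained from \eqref{eq:pdf_2modular_variate} by writing $\expectation{\mathcal{Q}\phi(\mathcal{Q})} = \delta_{\nfeatures,\densityGenerator}^{-1}\int_0^{\infty}q^{\nfeatures}\densityGenerator'(q)\,dq$ and integrating by parts, exploiting the boundary decay $q^{\nfeatures}\densityGenerator(q) \to 0$. After applying it, the $\phi(q)$-contribution averages to $-2\tr(\widetilde{\tangentVector}\widetilde{\tangentVectorBis})$, which combines with the log-det term $\tr(\widetilde{\tangentVector}\widetilde{\tangentVectorBis})$ to leave a net $-\tr(\widetilde{\tangentVector}\widetilde{\tangentVectorBis})$ inside $\expectation{\Diff^2\loglikelihood}$. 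The $\phi'(q)$-contribution averages to $\expectation{\mathcal{Q}^2\phi'(\mathcal{Q})}[\tr(\widetilde{\tangentVector})\tr(\widetilde{\tangentVectorBis}) + \tr(\widetilde{\tangentVector}\widetilde{\tangentVectorBis})]/[\nfeatures(\nfeatures+1)]$. Negating the sum and matching against the coefficients in \eqref{eq:coefficients_metric} directly yields $\alpha_\densityGenerator\tr(\widetilde{\tangentVector}\widetilde{\tangentVectorBis}) + \beta_\densityGenerator\tr(\widetilde{\tangentVector})\tr(\widetilde{\tangentVectorBis})$. Undoing the whitening and multiplying by $\nsamples$ reproduces the stated formula.
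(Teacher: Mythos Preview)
Your proposal is correct and follows essentially the same approach as the paper's proof: both compute the second directional derivative of the log-likelihood, inject the stochastic representation \eqref{eq:ces_repr_thm}, and evaluate the expectation using independence of $\mathcal{Q}$ and $\mathbf{u}$ together with the identity $\expectation{\mathcal{Q}\phi(\mathcal{Q})}=-\nfeatures$ and the second-order moments of $\mathbf{u}\sim\mathcal{U}(\complexSpace\mathcal{S}^{\nfeatures})$. The only cosmetic differences are that you whiten via $\widetilde{\tangentVector}=\point^{-1/2}\tangentVector\point^{-1/2}$ and quote the bilinear sphere moment directly, whereas the paper keeps the unwhitened trace expressions and obtains that moment by polarization.
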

\begin{proof}
    The first things to compute to obtain the Fisher information metric are the derivatives $\Diff\loglikelihood(\{\data_i\}_{i=1}^{\nsamples}|\point)[\tangentVector]$ and $\Diff^2\loglikelihood(\{\data_i\}_{i=1}^{\nsamples}|\point)[\tangentVector,\tangentVectorBis]$ at $\point\in\manifold{\nfeatures}$ in directions $\tangentVector$ and $\tangentVectorBis\in\tangentSpace{\nfeatures}{\point}$.
    To do so, recall that $\Diff\log\det(\point)[\tangentVector]=\tr(\point^{-1}\tangentVector)$ and $\Diff(\point^{-1})[\tangentVector]=-\point^{-1}\tangentVector\point^{-1}$.
    It follows that
    \begin{equation*}
        \Diff\loglikelihood(\{\data_i\}_{i=1}^{\nsamples}|\point)[\tangentVector]
        = -\nsamples\tr(\point^{-1}\tangentVector) - \sum_{i=1}^{\nsamples} \phi(\data_i^H\point^{-1}\data_i) \tr(\point^{-1}\tangentVector\point^{-1}\data_i\data_i^H).
    \end{equation*}
    Moreover, also recall that the trace is invariant to any permutation of the product of three Hermitian matrices.
    Thus,
    \begin{multline*}
        \Diff^2\loglikelihood(\{\data_i\}_{i=1}^{\nsamples}|\point)[\tangentVector,\tangentVectorBis]
        =
        \tr(\point^{-1}\tangentVector\point^{-1}\tangentVectorBis)
        \\
        + 2\sum_{i=1}^{\nsamples} \phi(\tr(\point^{-1}\data_i\data_i^H)) \tr(\point^{-1}\tangentVector\point^{-1}\tangentVectorBis\point^{-1}\data_i\data_i^H)
        \\
        + \sum_{i=1}^{\nsamples} \phi'(\tr(\point^{-1}\data_i\data_i^H)) \tr(\point^{-1}\tangentVector\point^{-1}\data_i\data_i^H) \tr(\point^{-1}\tangentVectorBis\point^{-1}\data_i\data_i^H).
    \end{multline*}
    We now need to compute the expectation.
    To do so, we exploit the stochastic representation $\data_i {=}_d \sqrt{\mathcal{Q}_i}\point^{\nicefrac12}\mathbf{u}_i$.
    Recall that $\mathcal{Q}_i$ and $\mathbf{u}_i$ are independent, $\mathbf{u}_i^H\mathbf{u}_i=1$ and $\expectation{\mathbf{u}_i\mathbf{u}_i^H}=\frac1p\mathbf{I}_p$ (since $\mathbf{u}_i\sim\mathcal{U}(\complexSpace\mathcal{S}^\nfeatures)$).
    Furthermore, from~\eqref{eq:pdf_2modular_variate}, $\expectation{\mathcal{Q}_i\phi(\mathcal{Q}_i)}=-\nfeatures$.
    It follows that
    \begin{equation*}
    \begin{aligned}
        &\expectation{\phi(\tr(\point^{-1}\data_i\data_i^H)) \tr(\point^{-1}\tangentVector\point^{-1}\tangentVectorBis\point^{-1}\data_i\data_i^H) }
        \\
        &\quad=
        \tr(\point^{-1}\tangentVector\point^{-1}\tangentVectorBis\expectation{\phi(\mathcal{Q}_i)\mathcal{Q}_i\mathbf{u}_i\mathbf{u}_i^H})
        = \tr(\point^{-1}\tangentVector\point^{-1}\tangentVectorBis\expectation{\phi(\mathcal{Q}_i)\mathcal{Q}_i}\expectation{\mathbf{u}_i\mathbf{u}_i^H})
        \\
        &\quad= -\tr(\point^{-1}\tangentVector\point^{-1}\tangentVectorBis).
    \end{aligned}
    \end{equation*}
    For the second expectation, from~\cite{besson2013fisher}, we need
    \begin{equation*}
        \expectation{(\mathbf{u}_i^H\MAT{A}\mathbf{u}_i)^2}
        = \frac{\tr(\MAT{A}^2)+(\tr(\MAT{A}))^2}{p(p+1)}.
    \end{equation*}
    Applying the polarization formula $\frac14[(\mathbf{u}_i^H(\MAT{A}+\MAT{B})\mathbf{u}_i)^2-(\mathbf{u}_i^H(\MAT{A}-\MAT{B})\mathbf{u}_i)^2]$, we get
    \begin{equation*}
        \expectation{(\mathbf{u}_i^H\MAT{A}\mathbf{u}_i)(\mathbf{u}_i^H\MAT{B}\mathbf{u}_i)}
        = \frac{\tr(\MAT{A}\MAT{B})+\tr(\MAT{A})\tr(\MAT{B})}{p(p+1)}.
    \end{equation*}
    Therefore,
    \begin{equation*}
        \begin{aligned}
            & \expectation{\phi'(\tr(\point^{-1}\data_i\data_i^H))\tr(\point^{-1}\tangentVector\point^{-1}\data_i\data_i^H) \tr(\point^{-1}\tangentVectorBis\point^{-1}\data_i\data_i^H)}
            \\
            &\quad = \expectation{\mathcal{Q}_i^2\phi'(\mathcal{Q}_i)(\mathbf{u}_i^H\point^{-1}\tangentVector\mathbf{u}_i)(\mathbf{u}_i^H\point^{-1}\tangentVectorBis\mathbf{u}_i)}
            \\
            &\quad = \expectation{\mathcal{Q}_i^2\phi'(\mathcal{Q}_i)}\expectation{(\mathbf{u}_i^H\point^{-1}\tangentVector\mathbf{u}_i)(\mathbf{u}_i^H\point^{-1}\tangentVectorBis\mathbf{u}_i)}
            \\
            &\quad = \frac{\expectation{\mathcal{Q}_i^2\phi'(\mathcal{Q}_i)}}{p(p+1)}\left( \tr(\point^{-1}\tangentVector\point^{-1}\tangentVectorBis) + \tr(\point^{-1}\tangentVector)\tr(\point^{-1}\tangentVectorBis) \right)
        \end{aligned}
    \end{equation*}
    From there, basic manipulations yield the result with coefficients $\alpha_g$ and $\beta_g$ defined in \eqref{eq:coefficients_metric}.
    Notice that the dependency on $i$ in $\mathcal{Q}$ is omitted since these parameters are assumed iid.
\end{proof}

\noindent
The Fisher information metric of C-CES thus corresponds to a general form of the well known affine invariant metric on $\manifold{\nfeatures}$~\cite{bhatia2009positive,skovgaard1984riemannian}.
Hence, if not specified otherwise the remainder of this chapter will use the more common generic denotation:
\begin{equation} \label{eq:AI_metric}
    \metric{\point}{\tangentVector}{\tangentVectorBis}
    \overset{\text{def}}{=}
    \mathfrak{g}_{\point}(\tangentVector,\tangentVectorBis)
    =
    \metricCES{\point}{\tangentVector}{\tangentVectorBis}
\end{equation}
and study the corresponding Riemannian geometry of $\manifold{\nfeatures}$ for any $\alpha\in\realSpace^+_*$ and $\beta > - \alpha / p $ (necessary conditions so that $\metric{\point}{\cdot}{\cdot}$ is positive definite).
The Fisher-Rao information geometry of the considered C-CES model is then recovered by fixing $\alpha$ and $\beta$ according to Theorem~\ref{thm:fim_metric}.
We can also point out that the most studied case corresponds to $\alpha=1$ and $\beta=0$, which coincide with the Fisher information metric of the Gaussian distribution, as $\alpha_g = 1$ and $\beta_g=0 $ in this case~\cite{breloy2018intrinsic,smith2005covariance}. 

\Remark{
    Taking the real part in the metric~\eqref{eq:AI_metric} defines a proper inner product on $\tangentSpace{\nfeatures}{\point}$ from the original Hermitian inner product.
    This way, we implicitly identify the complex space as its underlying real vector space ($\complexSpace\sim \realSpace^2$), so that we can use the usual derivatives (defined as those used on $\realSpace$).
    As a direct consequence, in this chapter, both $\ambientSpace{\nfeatures}$ and $\manifold{\nfeatures}$ are of dimension $\nfeatures^2$.
    Notice that, even though it is not always stated, most works that deal with complex-valued matrices (e.g., \cite{smith2005covariance}) also implicitly use the real part of the Fisher information metric.
}

\Remark{
Among many other properties, the Fisher information metric from Theorem~\ref{thm:fim_metric} has a notable quadratic dependence on $\point^{-1}$.
This makes the norm of tangent vectors $\|\tangentVector\|^2_{\point}=\metric{\point}{\tangentVector}{\tangentVector}$ tend to infinity when the point $\point$ tends to the boundaries of the manifold (i.e, when any number of its eigenvalues tend $0$).
This Riemannian metric thus allows to actually perceive the boundary of $\manifold{\nfeatures}$ as being infinitely far, which was not the case for the Euclidean metric.
An illustration of the effect of the metric is displayed for $\manifold{1}=\realSpace^+_*$ in Figure~\ref{fig:HPD_1_metric}.
}

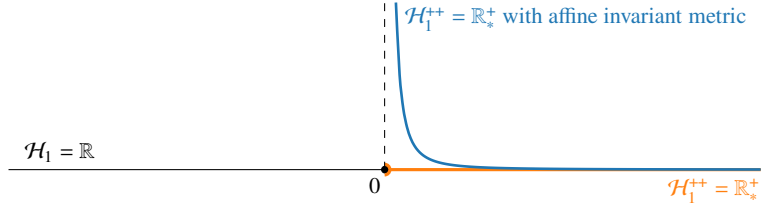
\begin{figure}[!h]
    \centering
    \begin{tikzpicture}
    \draw (-5,0) -- (5,0) node[very near start, above, anchor=south east] {$\ambientSpace{1}=\realSpace$};
    
    \draw[draw=myorange,line width=1.2pt] (0,0) -- (5,0) node[very near end, below] {\color{myorange}$\manifold{1}=\realSpace^+_*$};

    \draw[fill=black] (0.05,0) arc (0:360:0.045) node[anchor=north east] {$0$};

    \draw[draw=myorange, line width=1.2pt] (0.07,0) arc(0:90:0.07);
    \draw[draw=myorange, line width=1.2pt] (0.07,0) arc(0:-90:0.07);

    \draw[dashed] (0,0) -- (0,2.3);
    
    \draw[draw=myblue,line width=1pt,smooth,samples=100,domain=0.15:5] plot(\x,{(0.05/\x^2)});

    \node[anchor=west] at (0.15,2) {\color{myblue}$\manifold{1}=\realSpace^+_*$ with affine invariant metric};
\end{tikzpicture}
    \caption{Illustration of the effect of the affine invariant metric on $\manifold{1} = \realSpace^+_*$. Thanks to the metric, the excluded point $0$ becomes truly unreachable.}
    \label{fig:HPD_1_metric}
\end{figure}

\subsection{Levi-Civita connection}

One of the most -- if not the most -- important tools of Riemannian geometry is the Levi-Civita connection, which generalizes the notion of directional derivatives of vector fields on manifolds.
%
A vector field is a function that associates a unique tangent vector $\tangentVector_{\point} \in \tangentSpace{\nfeatures}{\point}$ to every point $\point\in\manifold{\nfeatures}$, which is illustrated in Figure~\ref{fig:vector_field}.
An example of a vector field that will be involved in Section~\ref{sec:Optim} is the gradient of a cost function.
The set of vector fields on $\manifold{\nfeatures}$ is denoted $\mathfrak{X}(\manifold{\nfeatures})$.
%

\begin{figure}
    \centering
    \begin{tikzpicture}[scale=2.5]
\draw  (1,0) arc (30:150:1.155)
      plot [smooth, domain=pi:2*pi] ({cos(\x r)},{0.2*sin(\x r)});
\draw [dotted] plot [smooth, domain=0:pi] ({cos(\x r)},{0.2*sin(\x r)});

\draw[->,>=stealth,draw=myorange,line width=1pt] (-0.7,0.2) -- ++(0.15,-0.08) node[at start] {\scriptsize$\bullet$};

\draw[->,>=stealth,draw=myorange,line width=1pt] (-0.5,0) -- ++(0.17,-0.02) node[at start] {\scriptsize$\bullet$};

\draw[->,>=stealth,draw=myorange,line width=1pt] (-0.3,0.3) -- ++(0.1,-0.1) node[at start] {\scriptsize$\bullet$};

\draw[->,>=stealth,draw=myorange,line width=1pt] (-0.1,0.1) -- ++(0.15,-0.04) node[at start] {\scriptsize$\bullet$};

\draw[->,>=stealth,draw=myorange,line width=1pt] (0,0.4) -- ++(0.05,-0.15) node[at start] {\scriptsize$\bullet$};

\draw[->,>=stealth,draw=myorange,line width=1pt] (0.2,0) -- ++(0.18,0.01) node[at start] {\scriptsize$\bullet$};

\draw[->,>=stealth,draw=myorange,line width=1pt] (0.3,0.2) -- ++(0.1,-0.06) node[at start] {\scriptsize$\bullet$};

\draw[->,>=stealth,draw=myorange,line width=1pt] (0.4,0.4) -- ++(0.03,-0.15) node[at start] {\scriptsize$\bullet$};

\draw[->,>=stealth,draw=myorange,line width=1pt] (0.5,0.1) -- ++(0.1,-0.01) node[at start] {\scriptsize$\bullet$};

\node[anchor=north] at (0.2,0) {\scriptsize$\point$};
\node[anchor=north] at (0.4,0.03) {\color{myorange}\scriptsize$\tangentVector_{\point}$};

\end{tikzpicture}
    \caption{Illustration of a vector field on $\manifold{\nfeatures}$.}
    \label{fig:vector_field}
\end{figure}

To differentiate a vector field on a manifold, one needs to resort to an affine connection.
This is an application from $\mathfrak{X}(\manifold{\nfeatures})\times \mathfrak{X}(\manifold{\nfeatures})$ onto $\mathfrak{X}(\manifold{\nfeatures})$.
The connection of $\tangentVectorBis_{\point}$ in the direction $\tangentVector_{\point}$ is denoted $\nabla_{\tangentVector_{\point}} \, \tangentVectorBis_{\point}$ and generalizes the directional derivative of $\tangentVectorBis_{\point}$ in the direction $\tangentVector_{\point}$ (i.e., $\Diff\tangentVectorBis_{\point}[\tangentVector_{\point}]$).
Such generalization is needed because the tangent space changes when one moves from one point to another on a manifold.
Thus, the usual directional derivative might not be properly defined, as it does not account for the structure of the manifold (constraints, Riemannian metric, etc.).
This specificity is illustrated in Figure~\ref{fig:connection}.
\begin{figure}[!h]
    \centering
    \begin{center}
\scriptsize
\begin{tikzpicture}[scale=1.4]

\draw[line width=0.7pt] (0,0) -- (2,0);

\draw[color=myblue, line width=1pt, ->, >=stealth] (0,0) -- ++(0.8,0) node[near end, below] {$\tangentVector_{\point}$};
\draw[color=myorange, line width=1pt, ->, >=stealth] (0,0) -- ++(0.4,0.7) node[very near end, above, anchor=south east] {$\tangentVectorBis_{\point}$};
\draw[color=myred, line width=1pt, ->, >=stealth] (0,0) -- ++(0.3,-0.3) node[at end, below, anchor=north] {$\Diff\tangentVectorBis_{\point}[\tangentVector_{\point}]$};
\draw[color=myorange, dashed, line width=0.7pt, ->, >=stealth] (0,0) -- ++(0.7,0.4);
\draw[color=myred, dashed, line width=0.7pt, ->, >=stealth] (0.4,0.7) -- ++(0.3,-0.3);
\node at (0,0) {\tiny$\bullet$};
\node[anchor=north east] at (0,0) {$\point$};

\draw[color=myblue, line width=1pt, ->, >=stealth] (2,0) -- ++(0.8,0) node[near end, below] {$\tangentVector_{\pointBis}$};
\draw[color=myorange, dashed, line width=0.7pt, ->, >=stealth] (2,0) -- ++(0.4,0.7);
\draw[color=myorange, line width=1pt, ->, >=stealth] (2,0) -- ++(0.7,0.4) node[at end, below, anchor=west] {$\tangentVectorBis_{\pointBis}$};
\node at (2,0) {\tiny$\bullet$};
\node[anchor=north east] at (2,0) {$\pointBis$};

\begin{scope}[shift={(5.5,-0.4)},scale=2.1]
    \draw  (1,0) arc (30:150:1.155)
      plot [smooth, domain=pi:2*pi] ({cos(\x r)},{0.2*sin(\x r)});
    \draw [dotted] plot [smooth, domain=0:pi] ({cos(\x r)},{0.2*sin(\x r)});
\end{scope}

\begin{scope}[shift={(4.5,0)}]
    
    \draw[line width=0.7pt] (0,-0.2) to[bend right=17] (1.6,-0.25);

    \draw[color=myblue, line width=1pt, ->, >=stealth] (0,-0.2) -- ++(0.5,-0.2) node[near end, below, anchor=north west] {$\tangentVector_{\point}$};
    \draw[color=myorange, line width=1pt, ->, >=stealth] (0,-0.2) -- ++(0.27,0.45) node[near end, above, anchor=east] {$\tangentVectorBis_{\point}$};
    \draw[color=myorange, dashed, line width=0.7pt, ->, >=stealth] (0,-0.2) -- ++(0.47,0.19);
    \draw[color=myred, line width=1pt, ->, >=stealth] (0,-0.2) -- ++(0.2,-0.26) node[near end, below, anchor=north east] {$\nabla_{\tangentVector_{\point}}\tangentVectorBis_{\point}$};
    \draw[color=myred, dashed, line width=0.7pt, ->, >=stealth] (0.27,0.25) -- ++(0.2,-0.26);
    \node at (0,-0.2) {\tiny$\bullet$};
    \node[anchor=east] at (0,-0.2) {$\point$};

    \draw[color=myblue, line width=1pt, ->, >=stealth] (1.6,-0.25) -- ++(0.5,0.17) node[midway, below, anchor=north west] {$\tangentVector_{\pointBis}$};

    \draw[color=myorange, dashed, line width=0.7pt, ->, >=stealth] (1.6,-0.25) -- ++(-0.1,0.52);

    \draw[color=myorange, line width=1pt, ->, >=stealth] (1.6,-0.25) -- ++(0.22,0.4) node[at end, below, anchor=west] {$\tangentVectorBis_{\pointBis}$};
    
    \node at (1.6,-0.25) {\tiny$\bullet$};
    \node[anchor=north east] at (1.6,-0.25) {$\pointBis$};
\end{scope}

\end{tikzpicture}
\end{center}
    \caption{
    Illustration of directional derivative $\Diff\tangentVectorBis_{\point}[\tangentVector_{\point}]$ (left) and affine connection $\nabla_{\tangentVector_{\point}} \, \tangentVectorBis_{\point}$ (right) of a vector field $\tangentVectorBis$ in the direction $\tangentVector$ at $\point$.
    As the directional derivative, an affine connection describes how the vector field $\tangentVectorBis$ evolves in a given direction $\tangentVector$.
    In addition, the affine connection takes into account the structure of the manifold (curvature, and non-constant metric).
    }
    \label{fig:connection}
\end{figure}

\noindent
Many affine connections can be defined on a manifold.
However, there is a unique one that is in accordance with the chosen Riemannian metric, which is referred to as the Levi-Civita connection.
%
%
This Levi-Civita connection, denoted $\nabla_{\tangentVector_{\point}} \, \tangentVectorBis_{\point}$, is the unique solution in the tangent space $\tangentSpace{\nfeatures}{\point}$ to the Koszul formula
\begin{multline}\label{eq:Koszul}
    2 \mathfrak{g}_{\point}(\nabla_{\tangentVector_{\point}} \, \tangentVectorBis_{\point}, \tangentVectorTer_{\point}) = 
    2 \mathfrak{g}_{\point}(\Diff\tangentVectorBis_{\point}[\tangentVector_{\point}], \tangentVectorTer_{\point})
    \\
    + \Diff \mathfrak{g}_{\point}[\tangentVector_{\point}](\tangentVectorBis_{\point},\tangentVectorTer_{\point})
    + \Diff \mathfrak{g}_{\point}[\tangentVectorBis_{\point}](\tangentVector_{\point},\tangentVectorTer_{\point})
    - \Diff \mathfrak{g}_{\point}[\tangentVectorTer_{\point}](\tangentVectorBis_{\point},\tangentVector_{\point}),
\end{multline}
where we use the alternate notation of the metric, i.e., $\mathfrak{g}_{\point}(\cdot,\cdot)=\metric{\point}{\cdot}{\cdot}$.
Notice that the presented formula is simpler than the general case~\cite{absil2009optimization}.
It is because the Lie bracket is $[\tangentVector_{\point},\tangentVectorBis_{\point}] = \Diff\tangentVectorBis_{\point}[\tangentVectorBis_{\point}] - \Diff\tangentVector_{\point}[\tangentVectorBis_{\point}]$ since $\manifold{\nfeatures}$ is an open subset of a vector space, i.e., $\ambientSpace{\nfeatures}$.
%
%
%
%
%
The Levi-Civita connection of $\manifold{\nfeatures}$ associated with the Riemannian metric~\eqref{eq:AI_metric} is provided in Theorem~\ref{thm:connection}.

\begin{theorem}[Levi-Civita connection]
    The Levi-Civita connection on $\manifold{\nfeatures}$ associated with the affine invariant metric~\eqref{eq:AI_metric} is defined for $\tangentVector$, $\tangentVectorBis\in\mathfrak{X}(\manifold{\nfeatures})$ and $\point\in\manifold{\nfeatures}$, as
    \begin{equation*}
        \nabla_{\tangentVector_{\point}} \, \tangentVectorBis_{\point} = \Diff\tangentVectorBis_{\point}[\tangentVector_{\point}] - \symm(\tangentVectorBis_{\point}\point^{-1}\tangentVector_{\point}),
    \end{equation*}
    where $\symm(\cdot)$ returns the Hermitian part of its argument.
\label{thm:connection}
\end{theorem}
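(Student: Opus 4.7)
The strategy is to instantiate the Koszul formula \eqref{eq:Koszul} with the affine invariant metric \eqref{eq:AI_metric} and show that its right-hand side equals $2\mathfrak{g}_{\point}(\Diff\tangentVectorBis_{\point}[\tangentVector_{\point}] - \symm(\tangentVectorBis_{\point}\point^{-1}\tangentVector_{\point}), \tangentVectorTer_{\point})$ for every $\tangentVectorTer_{\point}\in\tangentSpace{\nfeatures}{\point}$. Non-degeneracy of $\mathfrak{g}_{\point}$ together with uniqueness of the Levi-Civita connection then yields the claimed formula.

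The first step is to compute the directional derivative of the metric. Using $\Diff(\point^{-1})[\tangentVector] = -\point^{-1}\tangentVector\point^{-1}$ and the product rule on the two $\point^{-1}$ factors in both the $\alpha$- and $\beta$-terms of \eqref{eq:AI_metric}, one obtains a closed expression for $\Diff \mathfrak{g}_{\point}[\tangentVector](\tangentVectorBis,\tangentVectorTer)$ as a sum of two $\alpha$-traces and two $\beta$-trace products, each carrying a minus sign. The second step is to assemble the Koszul sum
\[
\Diff \mathfrak{g}_{\point}[\tangentVector](\tangentVectorBis,\tangentVectorTer) + \Diff \mathfrak{g}_{\point}[\tangentVectorBis](\tangentVector,\tangentVectorTer) - \Diff \mathfrak{g}_{\point}[\tangentVectorTer](\tangentVectorBis,\tangentVector).
\]
Using the cyclic invariance of the trace to bring every contribution into the normalized form $\tr(\point^{-1}(\cdots)\point^{-1}\tangentVectorTer)$ or $(\cdots)\tr(\point^{-1}\tangentVectorTer)$, a straightforward bookkeeping shows that four of the six $\beta$-terms cancel pairwise, leaving only $-2\beta\tr(\point^{-1}\tangentVector\point^{-1}\tangentVectorBis)\tr(\point^{-1}\tangentVectorTer)$, and the six $\alpha$-terms collapse to $-\alpha\tr(\point^{-1}[\tangentVector\point^{-1}\tangentVectorBis+\tangentVectorBis\point^{-1}\tangentVector]\point^{-1}\tangentVectorTer)$.

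The third step is the crucial identification. Since $\tangentVector$, $\tangentVectorBis$, $\point$ are Hermitian, $(\tangentVectorBis\point^{-1}\tangentVector)^H = \tangentVector\point^{-1}\tangentVectorBis$, so $\tangentVector\point^{-1}\tangentVectorBis+\tangentVectorBis\point^{-1}\tangentVector = 2\symm(\tangentVectorBis\point^{-1}\tangentVector)$. Moreover, cyclicity gives $\tr(\point^{-1}\symm(\tangentVectorBis\point^{-1}\tangentVector)) = \tr(\point^{-1}\tangentVector\point^{-1}\tangentVectorBis)$, which exactly matches the coefficient appearing in the $\beta$-factor of $\mathfrak{g}_{\point}(\symm(\tangentVectorBis\point^{-1}\tangentVector),\tangentVectorTer)$. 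Therefore the Koszul sum of the three metric derivatives equals $-2\mathfrak{g}_{\point}(\symm(\tangentVectorBis\point^{-1}\tangentVector),\tangentVectorTer)$, and taking real parts does not alter this since both sides are already real. Combining with the $2\mathfrak{g}_{\point}(\Diff\tangentVectorBis_{\point}[\tangentVector_{\point}],\tangentVectorTer_{\point})$ term in \eqref{eq:Koszul} and invoking non-degeneracy concludes the proof.

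The only nontrivial obstacle is the bookkeeping of the six $\alpha$-traces and six $\beta$-trace products in the Koszul sum: one must carefully apply cyclic permutations to the $T_3$ term (where $\tangentVectorTer$ is differentiated) so that every contribution is reduced to the common form with $\tangentVectorTer$ appearing linearly on the right, and track the cancellations. No curvature or exponential map machinery is needed, as $\manifold{\nfeatures}$ being open in the vector space $\ambientSpace{\nfeatures}$ allowed us to use the simplified Koszul formula stated in \eqref{eq:Koszul}.
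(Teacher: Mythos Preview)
Your proposal is correct and follows essentially the same route as the paper: both instantiate the Koszul formula~\eqref{eq:Koszul}, differentiate the $\alpha$- and $\beta$-parts of the metric via $\Diff(\point^{-1})[\tangentVector]=-\point^{-1}\tangentVector\point^{-1}$, and identify the resulting combination as $-2\mathfrak{g}_{\point}(\symm(\tangentVectorBis_{\point}\point^{-1}\tangentVector_{\point}),\tangentVectorTer_{\point})$ before concluding by non-degeneracy. The only cosmetic difference is that the paper collapses the $\alpha$-terms using the permutation invariance of $\reel(\tr(\cdot))$ on products of Hermitian matrices, whereas you keep both orderings and recognize $\tangentVector\point^{-1}\tangentVectorBis+\tangentVectorBis\point^{-1}\tangentVector=2\symm(\tangentVectorBis\point^{-1}\tangentVector)$ directly; these are equivalent manipulations.
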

\begin{proof}
    First recall that for $\MAT{A}\in\ambientSpace{\nfeatures}$ and $\MAT{B}\in\realSpace^{p\times p}$, $\tr(\MAT{A}\MAT{B}) = \tr(\MAT{A}\symm(\MAT{B}))$.
    Further recall that the trace is invariant to any permutation of the product of three Hermitian matrices.
    Since $\Diff(\point^{-1})[\tangentVector]=-\point^{-1}\tangentVector\point^{-1}$, we have
    \begin{equation*}
        \Diff\reel(\tr(\point^{-1}\tangentVector\point^{-1}\tangentVectorBis))[\tangentVectorTer]
        =
        -2\reel(\tr(\point^{-1}\tangentVector\point^{-1}\tangentVectorBis\point^{-1}\tangentVectorTer))
    \end{equation*}
    and
    \begin{multline*}
        \Diff\reel(\tr(\point^{-1}\tangentVectorBis\point^{-1}\tangentVectorTer))[\tangentVector]
        + \Diff\reel(\tr(\point^{-1}\tangentVector\point^{-1}\tangentVectorTer))[\tangentVectorBis]
        - \Diff\reel\tr(\point^{-1}\tangentVector\point^{-1}\tangentVectorBis))[\tangentVectorTer]
        \\
        =
        -2\reel(\tr(\point^{-1}\tangentVector\point^{-1}\tangentVectorBis\point^{-1}\tangentVectorTer))
        =
        -2\reel(\tr(\point^{-1}\symm(\tangentVector\point^{-1}\tangentVectorBis)\point^{-1}\tangentVectorTer)).
    \end{multline*}
    We also have
    \begin{multline*}
        \Diff\left(\reel(\tr(\point^{-1}\tangentVector)\tr(\point^{-1}\tangentVectorBis))\right)[\tangentVectorTer]
        =
        - \reel(\tr(\point^{-1}\tangentVectorTer\point^{-1}\tangentVector)\tr(\point^{-1}\tangentVectorBis))
        \\
        - \reel(\tr(\point^{-1}\tangentVector)\tr(\point^{-1}\tangentVectorTer\point^{-1}\tangentVectorBis)).
    \end{multline*}
    It follows that
    \begin{multline*}
        \Diff\left(\reel(\tr(\point^{-1}\tangentVectorBis)\tr(\point^{-1}\tangentVectorTer))\right)[\tangentVector]
        + \Diff\left(\reel(\tr(\point^{-1}\tangentVector)\tr(\point^{-1}\tangentVectorTer))\right)[\tangentVectorBis]
        \\
        - \Diff\left(\reel(\tr(\point^{-1}\tangentVector)\tr(\point^{-1}\tangentVectorBis))\right)[\tangentVectorTer]
        = -2\reel(\tr(\point^{-1}\tangentVectorTer)\tr(\point^{-1}\tangentVector\point^{-1}\tangentVectorBis)) \hspace{7pt}
        \\
        = -2\reel(\tr(\point^{-1}\tangentVectorTer)\tr(\point^{-1}\symm(\tangentVector\point^{-1}\tangentVectorBis))).
    \end{multline*}
    From there, we can deduce that
    \begin{multline*}
        \Diff \mathfrak{g}_{\point}[\tangentVector_{\point}](\tangentVectorBis_{\point},\tangentVectorTer_{\point})
        + \Diff \mathfrak{g}_{\point}[\tangentVectorBis_{\point}](\tangentVector_{\point},\tangentVectorTer_{\point})
        - \Diff \mathfrak{g}_{\point}[\tangentVectorTer_{\point}](\tangentVectorBis_{\point},\tangentVector_{\point})
        \\
        = -2\mathfrak{g}_{\point}(\symm(\tangentVector_{\point}\point^{-1}\tangentVectorBis_{\point}), \tangentVectorTer_{\point}).
    \end{multline*}
    Injecting this into the Koszul formula yields the result.
\end{proof}

\Remark{
Notice that the Levi-Civita connection of $\manifold{\nfeatures}$ associated with the Riemannian metric of the metric in \eqref{eq:AI_metric} does not depend on $\alpha$ and $\beta$.
Hence it remains the same for any underlying C-CES distribution.
}

\subsection{Geodesics, Riemannian exponential, logarithm and distance}

\label{sec:geodesicx_exp_log_dist}

One of the main reasons why the Levi-Civita connection is so crucial is because it allows to define geodesics.
The geodesics generalize the concept of straight lines on a manifold.
These are curves $\gamma:[0,1]\to\manifold{\nfeatures}$ with no acceleration, where acceleration is defined thanks to the Levi-Civita connection.
They are parameterized by the choice of starting point $\gamma(0)=\point\in\manifold{\nfeatures}$ and either initial direction $\dot{\gamma}(0)=\tangentVector\in\ambientSpace{\nfeatures}$ or ending point $\gamma(1)=\pointBis\in\manifold{\nfeatures}$.
An illustration of geodesics is provided in Figure~\ref{fig:geodesic_exp_log}.
Formally, the geodesic $\gamma:[0,1]\to\manifold{\nfeatures}$ is the solution to the differential equation
\begin{equation}
    \nabla_{\dot{\gamma}(t)} \, \dot{\gamma}(t) = \MAT{0}.
\label{eq:geodesics_def}
\end{equation}
The geodesics on $\manifold{\nfeatures}$ according to the Levi-Civita connection of Theorem \ref{thm:connection} are given in Theorem~\ref{thm:geodesics} along with the proof.

\begin{theorem}[Geodesics]
    The geodesic $\gamma:[0,1]\to\manifold{\nfeatures}$ such that $\gamma(0)=\point\in\manifold{\nfeatures}$ and $\dot{\gamma}(0)=\tangentVector\in\ambientSpace{\nfeatures}$ is defined as
    \begin{equation*}
        \gamma(t)
        \; = \; \point\exp(t\point^{-1}\tangentVector)
        \; = \; \exp(t\tangentVector\point^{-1})\point
        \; = \; \point^{\nicefrac12}\exp(t\point^{-\nicefrac12}\tangentVector\point^{-\nicefrac12})\point^{\nicefrac12},
    \end{equation*}
    where $\exp(\cdot)$ denotes the matrix exponential.
    Equivalently, one can define the geodesic $\gamma:[0,1]\to\manifold{\nfeatures}$ such that $\gamma(0)=\point\in\manifold{\nfeatures}$ and $\gamma(1)=\pointBis\in\manifold{\nfeatures}$~by
    \begin{equation*}
        \gamma(t) = \point^{\nicefrac12}\left(\point^{-\nicefrac12}\pointBis\point^{-\nicefrac12}\right)^t\point^{\nicefrac12},
    \end{equation*}
    where $(\cdot)^t=\exp(t\log(\cdot))$, $\log(\cdot)$ denoting the matrix logarithm.
\label{thm:geodesics}
\end{theorem}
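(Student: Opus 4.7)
The plan is to verify that the proposed curves satisfy the geodesic equation~\eqref{eq:geodesics_def} with the prescribed initial conditions, then to establish equivalence of the three forms and to derive the endpoint reformulation.

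First I would substitute the Levi-Civita connection from Theorem~\ref{thm:connection} into~\eqref{eq:geodesics_def} with $\tangentVector_{\gamma(t)}=\tangentVectorBis_{\gamma(t)}=\dot\gamma(t)$. Since $\gamma(t)\in\manifold{\nfeatures}$ and $\dot\gamma(t)\in\tangentSpace{\nfeatures}{\gamma(t)}=\ambientSpace{\nfeatures}$ are Hermitian, the product $\dot\gamma(t)\gamma(t)^{-1}\dot\gamma(t)$ is itself Hermitian, so the operator $\symm$ acts as the identity on it. The geodesic equation thus reduces to the matrix ODE
\begin{equation*}
\ddot\gamma(t) \;=\; \dot\gamma(t)\,\gamma(t)^{-1}\,\dot\gamma(t),\qquad \gamma(0)=\point,\quad \dot\gamma(0)=\tangentVector.
\end{equation*}

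Next I would write the candidate as $\gamma(t)=\point^{\nicefrac12}\exp(tA)\point^{\nicefrac12}$ with $A=\point^{-\nicefrac12}\tangentVector\point^{-\nicefrac12}$, then differentiate to obtain $\dot\gamma(t)=\point^{\nicefrac12}A\exp(tA)\point^{\nicefrac12}$ and $\ddot\gamma(t)=\point^{\nicefrac12}A^{2}\exp(tA)\point^{\nicefrac12}$. Using $\gamma(t)^{-1}=\point^{-\nicefrac12}\exp(-tA)\point^{-\nicefrac12}$ together with the commutativity of $A$ with $\exp(tA)$, a short direct computation yields $\dot\gamma(t)\gamma(t)^{-1}\dot\gamma(t)=\point^{\nicefrac12}A^{2}\exp(tA)\point^{\nicefrac12}=\ddot\gamma(t)$. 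The initial conditions $\gamma(0)=\point$ and $\dot\gamma(0)=\point^{\nicefrac12}A\point^{\nicefrac12}=\tangentVector$ are immediate, and uniqueness of solutions to the above ODE identifies the candidate as the geodesic.

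The equivalence of the three expressions then follows from the similarity identity $\exp(PBP^{-1})=P\exp(B)P^{-1}$: rewriting $t\point^{-1}\tangentVector=\point^{-\nicefrac12}(tA)\point^{\nicefrac12}$ gives $\point\exp(t\point^{-1}\tangentVector)=\point^{\nicefrac12}\exp(tA)\point^{\nicefrac12}$, while $t\tangentVector\point^{-1}=\point^{\nicefrac12}(tA)\point^{-\nicefrac12}$ gives $\exp(t\tangentVector\point^{-1})\point=\point^{\nicefrac12}\exp(tA)\point^{\nicefrac12}$. For the boundary-value reformulation I would impose $\gamma(1)=\pointBis$ on the third form, solve $A=\log(\point^{-\nicefrac12}\pointBis\point^{-\nicefrac12})$, substitute back, and recognise $\exp(t\log(\cdot))=(\cdot)^{t}$ to obtain $\gamma(t)=\point^{\nicefrac12}(\point^{-\nicefrac12}\pointBis\point^{-\nicefrac12})^{t}\point^{\nicefrac12}$. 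No step presents a serious obstacle: the only care required is to handle the non-commutativity of $\point$ and $\tangentVector$ correctly when applying the similarity identity, and to exploit the Hermitian character of $\dot\gamma(t)\gamma(t)^{-1}\dot\gamma(t)$ to eliminate the $\symm$ inherited from Theorem~\ref{thm:connection}.
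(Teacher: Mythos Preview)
Your proposal is correct and follows essentially the same approach as the paper: both reduce the geodesic equation~\eqref{eq:geodesics_def} via Theorem~\ref{thm:connection} to the ODE $\ddot\gamma=\dot\gamma\,\gamma^{-1}\dot\gamma$, verify by direct computation that the candidate curve solves it with the right initial data, and obtain the equivalence of the three forms and the endpoint version from the similarity identity $\exp(PBP^{-1})=P\exp(B)P^{-1}$. The only cosmetic differences are that the paper verifies the ODE using the form $\gamma(t)=\point\exp(t\point^{-1}\tangentVector)$ (noting $\dot\gamma(t)\gamma(t)^{-1}=\tangentVector\point^{-1}$) rather than the symmetric form you chose, and does not spell out the $\symm$ elimination or the ODE uniqueness as explicitly as you do.
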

\begin{proof}
    We only provide the proof for $\gamma(0)=\point$ and $\dot{\gamma}(0)=\tangentVector$.
    The result for $\gamma(1)=\pointBis$ is obtained by choosing
    \begin{equation*}
        \tangentVector 
        \; = \; \point\log(\point^{-1}\pointBis)
        \; = \; \log(\pointBis\point^{-1})\point
        \; = \; \point^{\nicefrac12}\log(\point^{-\nicefrac12}\pointBis\point^{-\nicefrac12})\point^{\nicefrac12}.
    \end{equation*}
    Notice that the equality between the three versions of $\gamma(t)$ given in the theorem above and of $\tangentVector$ given here solely rely on the fact that $\MAT{A}\exp(\MAT{B})\MAT{A}^{-1} = \exp(\MAT{A}\MAT{B}\MAT{A}^{-1})$ and $\MAT{A}\log(\MAT{B})\MAT{A}^{-1} = \log(\MAT{A}\MAT{B}\MAT{A}^{-1})$.

    The differential equation~\eqref{eq:geodesics_def} for the Levi-Civita connection defined in Theorem~\ref{thm:connection}~is
    \begin{equation*}
        \Ddot{\gamma}(t) - \dot{\gamma}(t)\gamma(t)^{-1}\dot{\gamma}(t) = \MAT{0}.
    \end{equation*}
    Recall that $\frac{\D}{\D t}\exp(t\MAT{A})=\MAT{A}\exp(t\MAT{A})$.
    Thus, with $\gamma(t)=\point\exp(t\point^{-1}\tangentVector)$, we have $\dot{\gamma}(t) = \tangentVector\exp(t\point^{-1}\tangentVector)$ and $\Ddot{\gamma}(t)=\tangentVector\point^{-1}\tangentVector\exp(t\point^{-1}\tangentVector)$.
    From there, we easily obtain $\dot{\gamma}(t)\gamma(t)^{-1}=\tangentVector\point^{-1}$.
    Simple computations show that $\gamma(t)$ satisfies the differential equation above, which is enough to conclude.
\end{proof}

\Remark{Since the Levi-Civita connection does not depend on $\alpha$ and $\beta$, neither does geodesics.
Hence, the Fisher-Rao geometries of C-CES models share the same geodesics whatever the underlying distribution.
}

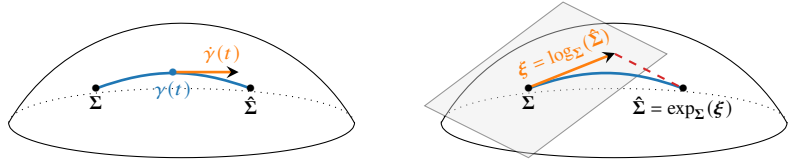
\begin{figure}
    \centering
    \begin{tikzpicture}[scale=2.3]
\draw  (1,0) arc (30:150:1.155)
      plot [smooth, domain=pi:2*pi] ({cos(\x r)},{0.2*sin(\x r)});
\draw [dotted] plot [smooth, domain=0:pi] ({cos(\x r)},{0.2*sin(\x r)});

\coordinate (x) at (-0.495,0.2);
\coordinate (y) at (0.4,0.2);

\draw[draw=myblue, line width=1pt] (x) to[bend left = 20] (y);
\node at (x) {\scriptsize$\bullet$};
\node[anchor=north] at (x) {\scriptsize$\point$};
\node at (y) {\scriptsize$\bullet$};
\node[anchor=north] at (y) {\scriptsize$\pointBis$};

\coordinate (z) at (-0.05,0.29);

\draw[draw=myorange, line width=1pt,->,>=stealth] (z)++(0,0.005) -- ++(0.38,0) node[near end, above] {\scriptsize\color{myorange}$\dot{\gamma}(t)$};

\node at (z) {\scriptsize\color{myblue}$\bullet$};
\node[anchor=north] at (z) {\scriptsize\color{myblue}$\gamma(t)$};

\begin{scope}[shift={(2.5,0)}]
    \draw  (1,0) arc (30:150:1.155)
          plot [smooth, domain=pi:2*pi] ({cos(\x r)},{0.2*sin(\x r)});
    \draw [dotted] plot [smooth, domain=0:pi] ({cos(\x r)},{0.2*sin(\x r)});

    \coordinate (x) at (-0.495,0.2);
    \coordinate (y) at (0.4,0.2);
    
    \draw [fill=gray!20,opacity=0.4] (x)++(0.8,0.2) -- ++(-0.8,-0.6) -- ++(-0.6,+0.3) -- ++(+0.8,+0.6) -- cycle;

    \draw[draw=myblue,line width=1pt] (x) to[bend left = 20] (y);

    \draw[draw=myred,dashed,line width=0.8pt] (x)++(0.5,0.2) -- (y);

    \draw[draw=myorange, line width=1pt,->,>=stealth] (x)++(0,0.005) -- ++(0.5,0.2) node[midway, above, sloped] {\scriptsize\color{myorange}$\MAT{\xi}=\log_{\point}(\pointBis)$};
    
    \node at (x) {\scriptsize$\bullet$};
    \node[anchor=north] at (x) {\scriptsize$\point$};
    \node at (y) {\scriptsize$\bullet$};
    \node[anchor=north] at (y) {\scriptsize$\pointBis=\exp_{\point}(\MAT{\xi})$};
\end{scope}

\end{tikzpicture}
    \caption{Illustration of geodesics (left), Riemannian exponential and logarithm mappings (right).
    The Riemannian distance $\delta(\point,\pointBis)$ is the length of the geodesic joining $\point$ and $\pointBis$.}
    \label{fig:geodesic_exp_log}
\end{figure}

Geodesics allow to define the Riemannian exponential mapping.
By definition, for all $\point\in\manifold{\nfeatures}$, this is the mapping from $\tangentSpace{\nfeatures}{\point}\simeq\ambientSpace{\nfeatures}$ onto $\manifold{\nfeatures}$ such that, for all $\tangentVector\in\ambientSpace{\nfeatures}$, $\exp_{\point}(\tangentVector)=\gamma(1)$, where $\gamma$ is the geodesic such that $\gamma(0)=\point$ and $\dot{\gamma}(t)=\tangentVector$.
Thus, for all $\point\in\manifold{\nfeatures}$ and $\tangentVector\in\ambientSpace{\nfeatures}$, we have
\begin{equation}
    \exp_{\point}(\tangentVector)
    \; = \; \point\exp(\point^{-1}\tangentVector)
    \; = \; \exp(\tangentVector\point^{-1})\point
    \; = \; \point^{\nicefrac12}\exp(\point^{-\nicefrac12}\tangentVector\point^{-\nicefrac12})\point^{\nicefrac12}.
\label{eq:Rexp}
\end{equation}
From there we can define the Riemannian logarithm mapping, which is the inverse of the Riemannian exponential mapping.
Given $\point\in\manifold{\nfeatures}$, it is the mapping from $\manifold{\nfeatures}$ onto $\tangentSpace{\nfeatures}{\point}\simeq\ambientSpace{\nfeatures}$ such that, for $\pointBis\in\manifold{\nfeatures}$, $\log_{\point}(\pointBis)$ is the solution to equation $\exp_{\point}(\log_{\point}(\pointBis))=\pointBis$.
In our case, we have
\begin{equation}
    \log_{\point}(\pointBis)
    \; = \; \point\log(\point^{-1}\pointBis)
    \; = \; \log(\pointBis\point^{-1})\point
    \; = \; \point^{\nicefrac12}\log(\point^{-\nicefrac12}\pointBis\point^{-\nicefrac12})\point^{\nicefrac12}.
\end{equation}
Illustrations of Riemannian exponential and logarithm mappings are given in Figure~\ref{fig:geodesic_exp_log}.

The last object from Riemannian geometry presented in this chapter is the Riemannian distance.
The distance between two points corresponds to the length of the geodesic joining them.
Formally, it is defined as
\begin{equation}
    \delta(\point,\pointBis) = \int_0^1 \langle\dot{\gamma}(t),\dot{\gamma}(t)\rangle_{\gamma(t)}^{\nicefrac12} \D t,
\end{equation}
where $\gamma$ is the geodesic such that $\gamma(0)=\point$ and $\gamma(1)=\pointBis$.
The Riemannian distance on $\manifold{\nfeatures}$ associated to the metric of Theorem~\ref{thm:fim_metric} is given in Theorem~\ref{thm:distance_ces} along with the proof.
It was derived in~\cite{breloy2018intrinsic}.

\begin{theorem}[Fisher-Rao distance of C-CES distributions]
    The square of the Fisher distance of C-CES distributions over $\manifold{\nfeatures}$ is defined, for all $\point$ and $\pointBis\in\manifold{\nfeatures}$, by
    \begin{equation*}
        \squaredist{\point}{\pointBis}
        = \alpha \|\log(\point^{-1}\pointBis)\|_F^2
        + \beta (\log\det(\point^{-1}\pointBis))^2.
    \end{equation*}
\label{thm:distance_ces}
\end{theorem}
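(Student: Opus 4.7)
The strategy is to exploit that the Riemannian distance is obtained by integrating the speed along a geodesic, and that geodesics for a Levi-Civita connection have constant speed with respect to the underlying metric. Concretely, I will use the closed-form expression for the geodesic from $\point$ to $\pointBis$ given in Theorem~\ref{thm:geodesics}, plug $\log_{\point}(\pointBis)$ into the affine invariant metric~\eqref{eq:AI_metric}, and simplify via a conjugation trick.

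First, I would start from $\gamma(t) = \point^{\nicefrac12}\exp(tL)\point^{\nicefrac12}$ with $L=\log(\point^{-\nicefrac12}\pointBis\point^{-\nicefrac12})$, noting that $L$ is Hermitian because its argument lies in $\manifold{\nfeatures}$. Since $L$ commutes with $\exp(tL)$, a short computation gives $\gamma(t)^{-1}\dot\gamma(t)=\point^{-\nicefrac12}L\point^{\nicefrac12}$, which is similar to $L$ for every $t$. Consequently both $\tr((\gamma(t)^{-1}\dot\gamma(t))^2)=\tr(L^2)$ and $\tr(\gamma(t)^{-1}\dot\gamma(t))=\tr(L)$ are independent of $t$, so the integrand $\mathfrak{g}_{\gamma(t)}(\dot\gamma(t),\dot\gamma(t))$ appearing in the definition of $\squaredist{\point}{\pointBis}$ is constant. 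The integral over $[0,1]$ therefore collapses to $\mathfrak{g}_{\point}(\log_{\point}(\pointBis),\log_{\point}(\pointBis))$.

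Second, I would evaluate this metric explicitly. With $\tangentVector=\log_{\point}(\pointBis)=\point^{\nicefrac12}L\point^{\nicefrac12}$, the similarity identity above immediately yields $\tr(\point^{-1}\tangentVector\point^{-1}\tangentVector)=\tr(L^2)=\|L\|_F^2$ (Hermitian Frobenius norm) and $\tr(\point^{-1}\tangentVector)=\tr(L)=\log\det(\point^{-\nicefrac12}\pointBis\point^{-\nicefrac12})=\log\det(\point^{-1}\pointBis)$, using $\tr\log=\log\det$ and multiplicativity of $\det$. The eigenvalues of $L$ are the logarithms of those of $\point^{-1}\pointBis$, so $\|L\|_F^2$ coincides with $\|\log(\point^{-1}\pointBis)\|_F^2$ under the standard convention (ubiquitous in the affine invariant literature) that reads this norm spectrally. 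Substituting into~\eqref{eq:AI_metric} delivers the announced formula.

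The main obstacle is not really technical, but conceptual: one needs the constancy of the geodesic speed, which is a standard metric-compatibility consequence of the Levi-Civita connection that was not stated explicitly in the excerpt. Doing the direct computation on the explicit geodesic (as sketched above) sidesteps the need for that abstract fact. A secondary subtlety worth flagging in the written proof is the notational abuse $\|\log(\point^{-1}\pointBis)\|_F$, since $\point^{-1}\pointBis$ is generally non-Hermitian and the literal Frobenius norm of its matrix logarithm does not in general equal $\|L\|_F$; the spectral reading is what matches the computation.
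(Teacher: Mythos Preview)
Your proposal is correct and follows essentially the same approach as the paper: show the geodesic speed is constant by exhibiting that $\gamma(t)^{-1}\dot\gamma(t)$ (equivalently $\dot\gamma(t)\gamma(t)^{-1}$) is similar to a fixed matrix, then evaluate the metric at $t=0$ with $\dot\gamma(0)=\log_{\point}(\pointBis)$ and simplify via $\tr\log=\log\det$. The paper uses the identity $\dot\gamma(t)\gamma(t)^{-1}=\tangentVector\point^{-1}$ pulled directly from the proof of Theorem~\ref{thm:geodesics}, whereas you work with the symmetric form $\gamma(t)=\point^{\nicefrac12}\exp(tL)\point^{\nicefrac12}$; both routes are equivalent. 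Your remark on the notational abuse in $\|\log(\point^{-1}\pointBis)\|_F$ is a fair point that the paper glosses over (its proof actually lands on $\tr((\log(\point^{-1}\pointBis))^2)$ before restating it as the Frobenius norm).
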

\begin{proof}
    From the proof of Theorem~\ref{thm:geodesics}, $\dot{\gamma}(t)\gamma(t)^{-1}=\tangentVector\point^{-1}$ for all $t\in[0,1]$.
    Thus, we can deduce that $\metric{\gamma(t)}{\dot{\gamma}(t)}{\dot{\gamma}(t)}=\metric{\gamma(0)}{\dot{\gamma}(0)}{\dot{\gamma}(0)}$ for all $t\in[0,1]$.
    Therefore, $\squaredist{\point}{\pointBis}=\metric{\gamma(0)}{\dot{\gamma}(0)}{\dot{\gamma}(0)}$, with $\gamma(0)=\point$ and $\dot{\gamma}(0) = \point\log(\point^{-1}\pointBis)$.
    It follows that
    \begin{equation*}
        \squaredist{\point}{\pointBis} = \alpha \tr((\log(\point^{-1}\pointBis))^2) + \beta (\tr(\log(\point^{-1}\pointBis)))^2.
    \end{equation*}
    To conclude, it is enough to recall that $\tr(\log(\MAT{A}))=\log\det(\MAT{A})$.
\end{proof}

\Remark{
We previously noticed that the Levi-Civita connection and geodesics do not depend on the coefficients $\alpha$ and $\beta$ of the metric.
However, since the Riemannian distance integrates the metric along the geodesics, it does well depend on these factors.
This means that the Fisher-Rao distance (Riemannian distance according to the Fisher in formation geometry) actually depends on the underlying C-CES distribution.
}

\section{Covariance matrix estimation with Riemannian optimization}
\label{sec:Optim}

The estimation of the covariance matrix of a set of observations is a ubiquitous problem in signal processing and machine learning.
Among many applications involving this quantity, we can mention: adaptive filtering and detection, metric learning in classification, data analysis (e.g., graph learning), and dimension reduction.
This section discusses covariance matrix estimation within the class of C-CES, and illustrates how the concepts related to Fisher-Rao information geometry can be leveraged in this context.
First, Section \ref{sec:MLE_reminders} provides some reminders on covariance matrix estimation in the C-CES framework (cf. Section 6 of the background chapter for more details).
Second, section \ref{sec:MLE_RiemOpt} presents an introduction to Riemannian optimization, where maximum likelihood estimation of C-CES models is used as a driving example.
Finally, Section \ref{sec:beyondMLE}
shortly presents how this framework can be leveraged to more general regularized covariance matrix estimation problems and points to references on the matter.

\subsection{Reminders on covariance matrix estimation within CES}
\label{sec:MLE_reminders}

Given a $\nsamples$-sample $ \{\data_i\}_{i=1}^{\nsamples} $ assumed to be iid from $\data\sim\distribution{\point}{\densityGenerator}$, with unknown covariance matrix $\point$, we consider inferring this matrix.
The most common approach to tackle this problem consists in maximizing the log-likelihood function in~\eqref{eq:log_likelihood}.
The maximum likelihood estimator is thus obtained as a solution to the optimization problem
\begin{equation}
    \minimize_{\point\in\manifold{\nfeatures}} \quad \costfunction(\point)
\label{eq:opt_mle}
\end{equation}
where $\costfunction$ denotes in short the negative log-likelihood of the sample set $ \{\data_i\}_{i=1}^{\nsamples} $, i.e.:
\begin{equation}
    \costfunction(\point)
    =
    -\loglikelihood \left(\{\data_i\}_{i=1}^{\nsamples} | \point \right),
    \label{eq:L_short}
\end{equation}
with $\loglikelihood$ defined in~\eqref{eq:log_likelihood}.
The solution of~\eqref{eq:opt_mle} yields the MLE in the form of a fixed point equation
\begin{equation}
\label{eq:MLE}
\pointBis = \frac{1}{\nsamples}  \sum_{i=1}^{\nsamples} 
\psi\left(  \data_i^H \pointBis^{-1} \data_i\right) 
 \data_i  \data_i^H
{=}_d   \mathcal{T}_{\psi} \left( \pointBis \right),
\end{equation}
where $\psi(t) = - g'(t)/g(t)$.
This solution is most commonly evaluated thanks to a fixed-point algorithm
\begin{equation}
    \point_{(k+1)} = 
    \mathcal{T}_{\psi} \left( \point_{(k)} \right)
    = \frac1{\nsamples}\sum_{i=1}^{\nsamples} \psi(\data_i^H\point_{(k)}^{-1}\data_i) \, \data_i \data_i^H.
\label{eq:mle_fixedpoint}
\end{equation}
The existence and uniqueness of the fixed-point solution \eqref{eq:MLE}, as well as the convergence of the fixed-point algorithm \eqref{eq:mle_fixedpoint} is subject subject to conditions on 
the function $\psi$ (resp. the density generator $\densityGenerator$) and the sample set $\{\data_i\}_{i=1}^{\nsamples}$, e.g., obtained in \cite[Theorems 6 and 7]{ollila2012complex}. A notable condition in the absolutely continuous case is that the sample size is required to be larger than the dimension, i.e., $n>p$.

\Remark{
In practice, the true density generator $\densityGenerator$ may not be known or accurately specified.
In the robust estimation theory, an $M$-estimator of the scatter matrix \cite{maronna1976robust, tyler1987distribution} refers to an estimator built from \eqref{eq:MLE}-\eqref{eq:mle_fixedpoint} using a function $\psi(t)$ that is not necessarily linked to the density generator $\densityGenerator$ (cf. Section 6.3 of the background chapter). 
In this chapter, we focus on the example of the MLE, but the tools that will be presented apply to any generic cost function~$\costfunction$.
}

\subsection{Computing MLEs with Riemannian optimization}

\label{sec:MLE_RiemOpt}



%
%
%

Riemannian optimization~\cite{absil2009optimization,boumal2023introduction} is a general framework to solve optimization problems on manifolds.
This extends to Riemannian manifolds classical Euclidean optimization methods such as steepest gradient descent, conjugate gradient, Broyden-Fletcher-Goldfarb-Shanno (BFGS) algorithm, Newton method, trust region, \emph{etc}.
This section introduces Riemannian optimization on $\manifold{\nfeatures}$ as a framework to solve~\eqref{eq:opt_mle} that can leverage tools from the Fisher-Rao information geometry.
At the end of this section, we will see that this framework actually yields the fixed point algorithm~\eqref{eq:mle_fixedpoint} as a special case (specifically, a Riemannian steepest gradient descent with a specific choice of metric, retraction, and step-size).

We consider an optimization problem of the from~\eqref{eq:opt_mle} that has no obvious closed-form solution on $\manifold{\nfeatures}$.
In order to evaluate this solution, we resort to iterative methods, i.e., methods that yield a sequence of iterates $\{\point_{(k)}\}$ in $\manifold{\nfeatures}$ from a starting point $\point_{(0)}\in\manifold{\nfeatures}$.
This sequence is constructed so that it eventually converges to a critical point of the objective in ~\eqref{eq:opt_mle}.
When the variable is constrained to lie in the manifold $\manifold{\nfeatures}$, a generic first-order Riemannian optimization method operates as follows: 
\begin{enumerate}
    \item At iterate $\point_{(k)}\in\manifold{\nfeatures}$, a descent direction in the tangent space, denoted $\tangentVector_{(k)}\in \tangentSpace{\nfeatures}{\point_{(k)}}\simeq\ambientSpace{\nfeatures}$, is computed by leveraging the Riemannian gradient.
    \item The direction descent $\tangentVector_{(k)}$ is used to obtain the next iterate $\point_{(k+1)}$ on $\manifold{\nfeatures}$. This is achieved through a retraction on $\manifold{\nfeatures}$, which is an operator that maps tangent vectors back onto the manifold.
\end{enumerate}
An illustration of such an optimization process is presented in Figure~\ref{fig:optim}, while the design of these two steps is discussed to solve~\eqref{eq:opt_mle} on $\manifold{\nfeatures}$ in the following.
\begin{figure}
    \centering
    \begin{tikzpicture}[scale=2]

\begin{scope}[shift={(1.3,2)},scale=1.5]
	\draw  (1,0) arc (30:150:1.155)
	      plot [smooth, domain=pi:2*pi] ({cos(\x r)},{0.2*sin(\x r)});
	\draw [dotted] plot [smooth, domain=0:pi] ({cos(\x r)},{0.2*sin(\x r)});
	\draw[font=\small] (1.1,0.3) node {$\manifold{\nfeatures}$};

	\draw [line width=1pt, draw=myblue, dashed]  plot [smooth,domain=-0.455:0.405] (1.1*\x, {-0.1*(sin(2*pi*\x r) - sin(2*pi*0.5)) +0.17});

	\coordinate (z) at (-0.2,0.27);
	\draw [fill=gray!20,opacity=0.4] (z)++(0.5,0.15) -- ++(-0.4,-0.5) -- ++(-0.5,+0.25) -- ++(+0.4,+0.5) -- cycle;
	\node[font=\footnotesize] at (-0.5,0.65) {$\tangentSpace{\nfeatures}{\point_{(k)}}$};

    \draw[draw=myred, dashed, line width=0.8pt] (z)++(0.25,-0.05) -- (0.1,0.12);
 
	\draw[line width=1pt,->,>=stealth,draw=myorange,fill=myorange] (z) -- ++(0.25,-0.05) node[very near end,above,font=\footnotesize,text=myorange] at (z) {$\tangentVector_{(k)}$}; 
	\draw[font=\scriptsize] (z) node {$\bullet$};
	\node[above,font=\footnotesize] at (z) {$\point_{(k)}$};

    \node[font=\scriptsize] at (0.1,0.12) {$\bullet$};
    \node[font=\footnotesize, anchor=south west] at (0.1,0.08) {$\point_{(k+1)}$};

    \coordinate (x) at (-0.5,0.2);
    \node[font=\scriptsize] at (x) {$\bullet$};
    \node[font=\footnotesize,anchor=north west] at (x) {$\point_{(0)}$};

    \coordinate (y) at (0.45,0.12);
    \node[font=\scriptsize] at (y) {$\bullet$};
    \node[font=\footnotesize,anchor=north west] at (y) {$\point_{*}$};
	
	\draw[dotted,thin] (x) -- ++ (0,-0.93);
	\draw[dotted,thin] (y) -- ++ (0,-1.26);

	\draw[dotted,thin] (z) -- ++ (0,-1.22);

\end{scope}

\begin{axis}[
	width=0.365\linewidth,
	height=80,
	draw=black,
    line width=0.2pt,
	axis lines=center,
	axis on top=true,
	xmin=0,
	xmax=10,
	xtick={\empty},
	ymin=0,
	ymax=8.5,
	ylabel={},
	ylabel style ={at={(axis description cs:-0.02,.9)},anchor=east},
	yticklabel={\empty},
	ytick={\empty},
]
	\addplot[mark=*,mark size=0.5pt, only marks,draw=black,fill=black] coordinates {
        (2.05,6.2)
		(3.7,4.0)
        (7.35,2.0)
	};
	\addplot[domain=2.05:7.35, samples=10, draw=myblue, dashed, line width=0.5pt]{3/20*(x-7.35)^2+2};
	\addplot[color=myorange,domain=2.0:6.0, samples=10, line width=0.5pt]{-1.1*(x-3.7)+3.92};
\end{axis}

\node[font=\scriptsize,text=myorange, rotate=-30] at (1,0.4) {$\Diff \costfunction(\point_{(k)})[\tangentVector_{(k)}]$};

\node[font=\scriptsize,anchor=east] at (0.57,1.05) {$\costfunction(\point_{(0)})$};

\node[font=\scriptsize,anchor=west] at (0.96,0.66) {$\costfunction(\point_{(k)})$};

\node[font=\scriptsize,anchor=west] at (1.95,0.4) {$\costfunction(\point_{*})$};

\end{tikzpicture}
    \caption{Illustration of Riemannian optimization.
    Given some initialization $\point_{(0)}$, the goal is to reach the minimum $\point_*$.
    At $\point_{(k)}$, the descent direction $\tangentVector_{(k)}$ is such that it induces a decrease in $\costfunction$, \emph{i.e.}, $\Diff \costfunction(\point_{(k)})[\tangentVector_{(k)}]<0$ (slope of the orange line).}
    \label{fig:optim}
\end{figure}
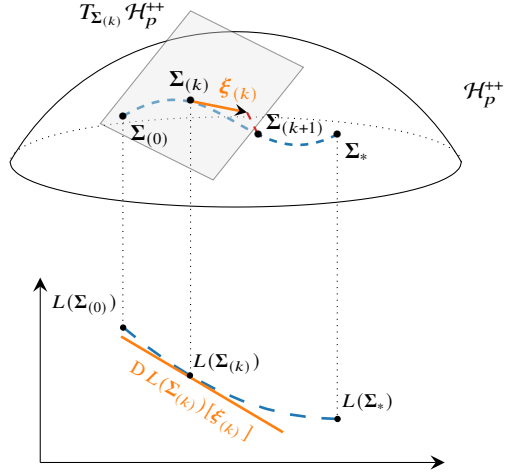

For the first step, the steepest descent direction is given by the gradient, which is defined through the metric in the Riemannian setting.
The Riemannian gradient of the negative log-likelihood $\costfunction$ at $\point\in\manifold{\nfeatures}$ according to the metric of Theorem~\ref{thm:fim_metric} is the unique tangent vector $\grad \costfunction(\point)\in\tangentSpace{\nfeatures}{\point}\simeq\ambientSpace{\nfeatures}$ such that, for all $\tangentVector\in\ambientSpace{\nfeatures}$, we~have
\begin{equation}
    \metric{\point}{\grad \costfunction(\point)}{\tangentVector}
    = \Diff \costfunction(\point)[\tangentVector].
\end{equation}
This Riemannian gradient is provided in Proposition~\ref{prop:rgrad}.
%

\begin{proposition}[Riemannian gradient of $\costfunction$]
    The Riemannian gradient $\grad \costfunction(\point)$ of the negative log-likelihood $\costfunction$ defined in~\eqref{eq:L_short} at $\point\in\manifold{\nfeatures}$ according to metric~\eqref{eq:AI_metric}~is
    \begin{multline*}
        \grad\costfunction(\point)
        = \left( \frac{\nsamples}{\alpha+p\beta} + \frac{\beta}{\alpha(\alpha+p\beta)}\sum_{i=1}^\nsamples \psi(\data_i^H\point^{-1}\data_i) \, \data_i^H\point^{-1}\data_i \right) \point
        \\
        - \frac1\alpha \sum_{i=1}^{\nsamples} \psi(\data_i^H\point^{-1}\data_i) \, \data_i\data_i^H. 
    \end{multline*}
\label{prop:rgrad}
\end{proposition}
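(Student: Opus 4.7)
The plan is to derive $\grad\costfunction(\point)$ directly from its defining equation $\metric{\point}{\grad\costfunction(\point)}{\tangentVector}=\Diff\costfunction(\point)[\tangentVector]$ for all $\tangentVector\in\tangentSpace{\nfeatures}{\point}$, by first computing the directional derivative in a convenient trace form and then inverting the affine invariant metric.

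First, I would compute the directional derivative of $\costfunction(\point)=\nsamples\log|\point|-\sum_{i=1}^\nsamples\log\densityGenerator(\data_i^H\point^{-1}\data_i)$ using the standard identities $\Diff\log|\point|[\tangentVector]=\tr(\point^{-1}\tangentVector)$, $\Diff(\point^{-1})[\tangentVector]=-\point^{-1}\tangentVector\point^{-1}$ and the chain rule, with $\psi=-\densityGenerator'/\densityGenerator$. Cyclic permutation under the trace then yields an expression of the form $\Diff\costfunction(\point)[\tangentVector]=\reel(\tr(\MAT{M}\tangentVector))$ with
\begin{equation*}
\MAT{M} \;=\; \nsamples\point^{-1} \;-\; \sum_{i=1}^\nsamples \psi(\data_i^H\point^{-1}\data_i)\,\point^{-1}\data_i\data_i^H\point^{-1}.
\end{equation*}

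Second, using the explicit form of the metric in~\eqref{eq:AI_metric}, I would rewrite the left-hand side of the defining equation as
\begin{equation*}
\reel\tr\!\Big(\big(\alpha\,\point^{-1}\grad\costfunction(\point)\point^{-1}+\beta\,\tr(\point^{-1}\grad\costfunction(\point))\,\point^{-1}\big)\tangentVector\Big).
\end{equation*}
Since this must agree with $\reel(\tr(\MAT{M}\tangentVector))$ for every Hermitian $\tangentVector$, identification gives the matrix equation $\alpha\,\point^{-1}\grad\costfunction(\point)\point^{-1}+\beta\,\tr(\point^{-1}\grad\costfunction(\point))\,\point^{-1}=\MAT{M}$, or equivalently, after multiplying on both sides by $\point$,
\begin{equation*}
\alpha\,\grad\costfunction(\point) + \beta\,\tr(\point^{-1}\grad\costfunction(\point))\,\point \;=\; \point\MAT{M}\point.
\end{equation*}

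Third, the key step is to extract the scalar $c\overset{\text{def}}{=}\tr(\point^{-1}\grad\costfunction(\point))$ in closed form. Taking $\tr(\point^{-1}\cdot)$ on both sides and using $\tr(\point\MAT{M})=\nfeatures\nsamples-\sum_i\psi(\data_i^H\point^{-1}\data_i)\data_i^H\point^{-1}\data_i$ yields $(\alpha+\nfeatures\beta)\,c=\tr(\point\MAT{M})$, which is well-defined since $\alpha+\nfeatures\beta>0$ under the positivity conditions on the metric. Substituting $c$ back and isolating $\grad\costfunction(\point)$ gives
\begin{equation*}
\grad\costfunction(\point) \;=\; \tfrac{1}{\alpha}\big(\point\MAT{M}\point - \beta c\,\point\big),
\end{equation*}
and expanding $\point\MAT{M}\point=\nsamples\point-\sum_i\psi(\data_i^H\point^{-1}\data_i)\data_i\data_i^H$ then combining the coefficients of $\point$ via the algebraic simplification $\nsamples/\alpha-\nsamples\nfeatures\beta/(\alpha(\alpha+\nfeatures\beta))=\nsamples/(\alpha+\nfeatures\beta)$ produces exactly the announced formula.

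The main obstacle is not the differential calculus, which is routine with the identities already used in the proof of Theorem~\ref{thm:fim_metric}, but the metric inversion: because the affine invariant metric couples a ``trace of product'' term with a ``product of traces'' term, one cannot simply read off the gradient from $\MAT{M}$. The decisive trick is to view the inversion as a rank-one perturbation and exploit the trace operation to reduce it to a scalar equation for $c$, after which everything follows from bookkeeping.
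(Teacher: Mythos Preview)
Your proof is correct and follows essentially the same approach as the paper: compute $\Diff\costfunction(\point)[\tangentVector]$ in the form $\reel(\tr(\MAT{M}\tangentVector))$, then invert the affine invariant metric. The only cosmetic difference is that the paper states the inverse map as an ansatz---observing that $\tilde{\MAT{A}}=\tfrac{1}{\alpha}\MAT{A}-\tfrac{\beta}{\alpha(\alpha+p\beta)}\tr(\MAT{A})\MAT{I}_{\nfeatures}$ satisfies $\tr(\MAT{A}\MAT{B})=\alpha\tr(\tilde{\MAT{A}}\MAT{B})+\beta\tr(\tilde{\MAT{A}})\tr(\MAT{B})$---whereas you derive that same inversion constructively via the scalar $c=\tr(\point^{-1}\grad\costfunction(\point))$; both are the standard rank-one (Sherman--Morrison type) inversion and lead to the identical bookkeeping.
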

\begin{proof}
    From the beginning of the proof of Theorem~\ref{thm:fim_metric}, we get that the directional derivative of $\costfunction$ at $\point\in\manifold{\nfeatures}$ in direction $\tangentVector\in\ambientSpace{\nfeatures}$ is
    \begin{equation*}
        \begin{array}{rcl}
            \Diff\costfunction(\point)[\tangentVector] & = & \nsamples\tr(\point^{-1}\tangentVector) + \sum_{i=1}^{\nsamples} \frac{g'}g(\data_i^H\point^{-1}\data_i) \tr(\point^{-1}\tangentVector\point^{-1}\data_i\data_i^H)
             \\
             & = & \tr\left(\point^{-1} \left( n\point - \sum_{i=1}^{\nsamples} \psi(\data_i^H\point^{-1}\data_i\right) \data_i\data_i^H ) \point^{-1}\tangentVector \right).
        \end{array}
    \end{equation*}
    Thus, for $\alpha=1$ and $\beta=0$, we immediately get the result by identification.
    To obtain the result in the general case, notice that, given $\MAT{A}$ and $\MAT{B}$ in $\ambientSpace{\nfeatures}$, if we set $\MAT{\tilde{A}} = \frac1\alpha\MAT{A} - \frac{\beta}{\alpha(\alpha+p\beta)}\tr(\MAT{A})\MAT{I}_{\nfeatures}$, then we have $\tr(\MAT{A}\MAT{B})=\alpha\tr(\MAT{\tilde{A}}\MAT{B}) + \beta\tr(\MAT{\tilde{A}})\tr(\MAT{B})$.
    Taking $\MAT{A} = \point^{-\nicefrac12} ( n\point - \sum_{i=1}^n \psi(\data_i^H\point^{-1}\data_i ) \data_i\data_i^H ) \point^{-\nicefrac12}$ and $\MAT{B}=\point^{-\nicefrac12}\tangentVector\point^{-\nicefrac12}$, and basic calculations allow to conclude.
\end{proof}

To perform the second step, it remains to define a retraction that maps tangent vectors back onto the manifold.
Formally, given $\point\in\manifold{\nfeatures}$, a retraction is a mapping $R_{\point}:\tangentSpace{\nfeatures}{\point}\simeq\ambientSpace{\nfeatures}\to\manifold{\nfeatures}$ such that, for all $\tangentVector\in\ambientSpace{\nfeatures}$,
\begin{equation}
    R_{\point}(\tangentVector) = \point + \tangentVector + o(\|\tangentVector\|).
\label{eq:retr_def}
\end{equation}
From a geometric point of view, the Riemannian exponential mapping
provides the ideal retraction for a manifold equipped with a Riemannian metric (in the sense that it is the most reflective of the considered geometry).
In our case, it is defined in~\eqref{eq:Rexp} and illustrated in Figure~\ref{fig:geodesic_exp_log}.
However, this retraction involves computing the matrix exponential of some Hermitian matrix, which can be computationally costly and/or numerically unstable, as the exponential tends quickly to infinity or zero.
From a practical point of view, it might thus be more advantageous to employ alternate retractions. 
Notice that~\eqref{eq:retr_def} means that a proper retraction is (at least) a first-order approximation of the Riemannian exponential mapping.
Since $\manifold{\nfeatures}$ is open in $\ambientSpace{\nfeatures}$, a proper first order approximation is simply obtained as
\begin{equation}
    R^{(1)}_{\point}(\tangentVector) = \point + \tangentVector.
\label{eq:retr_1st_order}
\end{equation}
The main limitation of $R^{(1)}$ is that, given $\point\in\manifold{\nfeatures}$, there are many $\tangentVector\in\ambientSpace{\nfeatures}$ such that $R^{(1)}_{\point}(\tangentVector)\notin\manifold{\nfeatures}$.
This means that the iterative algorithms that employ this retraction are not guaranteed to be numerically stable.
%
%
To overcome this issue, Proposition~\ref{prop:retr_2nd_order} provides a retraction that is a second-order approximation of the Riemannian exponential~\eqref{eq:Rexp} (initially proposed in~\cite{jeuris2012survey}), that does not suffer the same limitation as $R^{(1)}$.
%

\begin{proposition}[Second order retraction]
    The retraction $R^{(2)}$ such that, for all $\point\in\manifold{\nfeatures}$ and $\tangentVector\in\ambientSpace{\nfeatures}$,
    \begin{equation*}
        R^{(2)}_{\point}(\tangentVector) = \point + \tangentVector + \frac12\tangentVector\point^{-1}\tangentVector
    \end{equation*}
    is a second order approximation of the Riemannian exponential mapping~\eqref{eq:Rexp}.
    Furthermore, for all $\point\in\manifold{\nfeatures}$ and $\tangentVector\in\ambientSpace{\nfeatures}$, $R^{(2)}_{\point}(\tangentVector)$ belongs to $\manifold{\nfeatures}$.
\label{prop:retr_2nd_order}
\end{proposition}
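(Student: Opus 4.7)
The plan is to verify the two claims separately: second-order approximation of the Riemannian exponential, and strict positive-definiteness of the output.

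For the approximation claim, I will start from the explicit formula for the Riemannian exponential given in~\eqref{eq:Rexp}, namely $\exp_{\point}(\tangentVector) = \point\exp(\point^{-1}\tangentVector)$, and expand the matrix exponential as a Taylor series: $\exp(\MAT{A}) = \MAT{I} + \MAT{A} + \tfrac12\MAT{A}^2 + O(\|\MAT{A}\|^3)$. Substituting $\MAT{A} = \point^{-1}\tangentVector$ and multiplying on the left by $\point$ gives
\begin{equation*}
    \exp_{\point}(\tangentVector) = \point + \tangentVector + \tfrac12 \tangentVector\point^{-1}\tangentVector + O(\|\tangentVector\|^3),
\end{equation*}
so that $R^{(2)}_{\point}(\tangentVector) - \exp_{\point}(\tangentVector) = O(\|\tangentVector\|^3)$. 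This shows the claimed second-order agreement, and in passing also confirms the first-order retraction property~\eqref{eq:retr_def}.

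For the membership in $\manifold{\nfeatures}$, Hermitianity is immediate since $\point,\tangentVector\in\ambientSpace{\nfeatures}$ and $(\tangentVector\point^{-1}\tangentVector)^H = \tangentVector(\point^{-1})^H\tangentVector = \tangentVector\point^{-1}\tangentVector$. The key step is the factorization
\begin{equation*}
    \point + \tangentVector + \tfrac12 \tangentVector\point^{-1}\tangentVector \;=\; \tfrac12 \point + \tfrac12 (\point+\tangentVector)\point^{-1}(\point+\tangentVector),
\end{equation*}
which one verifies by expanding the product $(\point+\tangentVector)\point^{-1}(\point+\tangentVector) = \point + 2\tangentVector + \tangentVector\point^{-1}\tangentVector$. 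Since $\point\in\manifold{\nfeatures}$ and $\point^{-1}\in\manifold{\nfeatures}$, the quadratic form $(\point+\tangentVector)\point^{-1}(\point+\tangentVector)$ is Hermitian positive semi-definite (it is of the form $\MAT{A}^H\MAT{P}\MAT{A}$ with $\MAT{P}\in\manifold{\nfeatures}$). Adding the strictly positive definite term $\tfrac12\point$ yields a strictly positive definite matrix, hence $R^{(2)}_{\point}(\tangentVector)\in\manifold{\nfeatures}$.

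The routine part is the Taylor expansion; the only non-obvious step is spotting the factorization that exhibits $R^{(2)}_{\point}(\tangentVector)$ as a sum of a positive definite and a positive semi-definite Hermitian matrix. Once this algebraic identity is in hand, both conclusions follow directly and the proof is complete.
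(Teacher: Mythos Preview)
Your proof is correct. The first part (Taylor expansion of the matrix exponential) is exactly what the paper does. For the positivity part, however, you take a genuinely different route: the paper rewrites
\[
R^{(2)}_{\point}(\tangentVector) = \point^{\nicefrac12}\bigl(\MAT{I}_{\nfeatures} + \point^{-\nicefrac12}\tangentVector\point^{-\nicefrac12} + \tfrac12(\point^{-\nicefrac12}\tangentVector\point^{-\nicefrac12})^2\bigr)\point^{\nicefrac12},
\]
diagonalizes $\point^{-\nicefrac12}\tangentVector\point^{-\nicefrac12}$, and concludes by observing that the scalar quadratic $\lambda\mapsto 1+\lambda+\tfrac12\lambda^2$ is strictly positive. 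Your approach instead spots the algebraic identity $R^{(2)}_{\point}(\tangentVector) = \tfrac12\point + \tfrac12(\point+\tangentVector)\point^{-1}(\point+\tangentVector)$ and reads off positive-definiteness directly as a sum of a positive-definite and a positive-semidefinite term. Your argument is slightly more elementary (no square roots, no eigendecomposition) and arguably cleaner; the paper's eigenvalue argument has the mild advantage of being the systematic way one would check positivity of any polynomial retraction obtained by truncating the exponential series.
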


\begin{proof}
    Recall that the matrix exponential of $\MAT{A}$ is $\exp(\MAT{A})=\sum_{k=0}^{\infty} \frac{\MAT{A}^k}{k!}$.
    Hence the second order approximation is $\exp(\MAT{A})=\MAT{I}_{\nfeatures}+\MAT{A}+\frac12\MAT{A}^2+o(\|\MAT{A}\|^2)$.
    Applying this to~\eqref{eq:Rexp}, we obtain $\exp_{\point}(\tangentVector)=\point(\MAT{I}_{\nfeatures}+\point^{-1}\tangentVector+\frac12\point^{-1}\tangentVector\point^{-1}\tangentVector) + o(\|\tangentVector\|^2)$.
    Basic calculations yield the result.
    Moreover, it is obviously a proper retraction.
    It remains to show that we always get a matrix in $\manifold{\nfeatures}$.
    To do so, notice that 
    \begin{equation*}
        R^{(2)}_{\point}(\tangentVector) = \point^{\nicefrac12}(\MAT{I}_{\nfeatures} + \point^{-\nicefrac12}\tangentVector\point^{-\nicefrac12} + \frac12(\point^{-\nicefrac12}\tangentVector\point^{-\nicefrac12})^2)\point^{\nicefrac12}.
    \end{equation*}
    Let the eigenvalue decomposition $\point^{-\nicefrac12}\tangentVector\point^{-\nicefrac12}=\MAT{U}\MAT{\Lambda}\MAT{U}^H$.
    Then 
    \begin{equation*}
        R^{(2)}_{\point}(\tangentVector)=\point^{\nicefrac12}\MAT{U}(\MAT{I}_{\nfeatures} + \MAT{\Lambda} + \frac12\MAT{\Lambda}^2)\MAT{U}^H\point^{\nicefrac12}.
    \end{equation*}
    The result follows from the fact that the second order polynomial $\lambda\mapsto 1+\lambda+\frac12\lambda^2$ is strictly positive for all values of $\lambda$.
\end{proof}

We now have everything needed to define an iterative algorithm that solves the MLE optimization problem~\eqref{eq:opt_mle}.
Given the retraction $R$, we can, for instance, define the Riemannian gradient descent that yields the sequence of iterates
\begin{equation}
    \point_{(k+1)} = R_{\point_{(k)}}(-\lambda_k\grad \costfunction(\point_{(k)})),
\label{eq:RGD_mle}
\end{equation}
where $\lambda_k$ is the step size, which can be set by the user or computed through a line search; see e.g.~\cite{absil2009optimization,boumal2023introduction}.

Our final point in this section is to show that the fixed point algorithm~\eqref{eq:mle_fixedpoint} is, in fact, a particular case of~\eqref{eq:RGD_mle}.
Indeed, if we choose $\alpha=1$ and $\beta=0$, the Riemannian gradient of Proposition~\ref{prop:rgrad} is
\begin{equation*}
    \grad \costfunction(\point) = n\point - \sum_{i=1}^n \psi(\data_i^H\point^{-1}\data_i) \, \data_i\data_i^H.
\end{equation*}
Algorithm~\eqref{eq:mle_fixedpoint} is then obtained from~\eqref{eq:RGD_mle} by choosing the first order retraction~\eqref{eq:retr_1st_order} and constant step size $\lambda_k=\frac1n$.
Notice that in this particular case, the choice of the first order retraction~\eqref{eq:retr_1st_order} is a valid choice because the particular structure of the gradient ensures that all iterates remain in $\manifold{\nfeatures}$.
Though alternate choices of $\alpha$ and $\beta$ in the metric, step size, and retraction could improve the convergence speed in some cases, this fixed-point is generally a good all-purpose candidate to compute MLEs as in \eqref{eq:MLE}.
However, having recast it from the prism of Riemannian geometry opens many perspectives, which are discussed in the next section.

\subsection{Beyond MLE and fixed-point algorithms}
\label{sec:beyondMLE}

The MLEs (and $M$-estimators) are known for their good asymptotic performance in terms of estimation accuracy~\cite{couillet2015random, dravskovic2019asymptotics, dravskovic2018new,ollila2011complex, zhang2014marchenko}.
Still, they suffer from two limitations:
$i$) they do not exist when the sample set is lower than the dimension ($\nsamples<\nfeatures$); 
$ii$) they can be inaccurate when $\nsamples\simeq\nfeatures$, as they do not leverage any bias-variance trade-off improvement.
These limitations motivated the development of generalized estimation procedures by expressing new estimators as solutions to penalized optimization problems of the form:
\begin{equation} \label{eq:reg_mest}
     \minimize_{\point\in\manifold{\nfeatures}} \quad   
     \costfunction(\point)
    + \lambda h(\point)
\end{equation}
where $\costfunction$ is the negative log-likelihood as in~\eqref{eq:L_short}, $\lambda\in\realSpace^+$ is a regularization parameter, and $h$ is a penalty function that promotes some form of regularization.
Among many options considered in the literature for $h$, we can mention shrinkage to a target matrix~\cite{ollila2014regularized,pascal2014generalized,sun2014regularized}, shrinkage of the eigenvalues~\cite{Wiesel2012unified,breloy2019spectral}, promoting a sparse graphical structure~\cite{hippert2022learning,zhang2013multivariate}, or pooling from groups of observations~\cite{collas2022robust,ollila2016simultaneous}.
For appropriate choices of regularization penalty and parameters, the regularized estimators, as formulated in~\eqref{eq:reg_mest}, can overcome the aforementioned issues of their non-regularized counterparts.
In this scope, the Riemannian geometry provides useful tools to address and study~\eqref{eq:reg_mest}, which is discussed next.

\subsubsection{Riemannian options for computing solutions of \eqref{eq:reg_mest}}  

The optimization problems expressed in \eqref{eq:reg_mest} generally do not exhibit closed-form or fixed-point solutions and, thus, require the use of iterative algorithms to be evaluated.
In this setup, the Riemannian optimization framework is a good candidate in order to ensure that the variable remains in $\manifold{\nfeatures}$ along the iterations.
Beyond the introduction of the Riemannian gradient descent presented in Section \ref{sec:MLE_RiemOpt}, this flexible framework extends to many other algorithms:
\begin{itemize}
    \item Conjugate gradient, or BFGS-type algorithms, require the notion of Riemannian vector transport operator~\cite[Section 10.3]{boumal2023introduction}, which allows to transport tangent vectors between tangent spaces at different points.

    \item 
    Second-order methods, such as trust region or Newton methods
    require the definition of the Riemannian Hessian~\cite[Section 5.5]{boumal2023introduction}.

    \item For large dimensional datasets, stochastic optimization methods can also be extended to the Riemannian setting \cite{bonnabel2013stochastic, zhang2016riemannian, bouchard2020riemannian}.
    
\end{itemize}
A last remark is that in these algorithms, the metric is left as a choice that conditions the gradient and possibly the retraction.
There are various options for $\manifold{\nfeatures}$ (cf. Section \ref{sec:hpd_as_riem}), with their respective pros and cons.
It is still noticed that the gradient obtained from the Fisher information metric, also referred to as the natural gradient \cite{amari1998natural}, is generally experienced to yield a faster convergence when dealing with a cost function related to the statistical model of the data (cf. examples in \cite{han2021riemannian, collas2023riemannian}).

\subsubsection{Geodesic convexity on $\manifold{\nfeatures}$}

The classical results on the existence and uniqueness of the MLEs \cite[Theorems 6 and 7]{ollila2012complex} do not directly extend to the formulation in \eqref{eq:reg_mest}, so one might inquire about the optimally of the solution obtained by reaching a local minimum of this problem.
In this scope, the Riemannian perspective offers some answers by generalizing the property of convexity.
First, we recall that the geometry induced by the Fisher information metric \eqref{eq:AI_metric} yields geodesic curves $\gamma(t)$ as defined in Theorem \ref{thm:geodesics} between any two points $\point_0,\point_1 \in \manifold{\nfeatures}$.
A function $f$ is then said to be geodesically convex ($g$-convex) on $\manifold{\nfeatures}$ if $\forall \point_0, \point_1 \in \manifold{\nfeatures}$, it satisfies the inequality
\begin{equation} \label{eq:gcvx}
f (\gamma(t))
\leq
(1-t)
f (\point_0)
+
t
f (\point_1),~\forall t\in [0,1].
\end{equation}
If the above inequality is strict, the function is then said to be strictly $g$-convex.
The $g$-convexity enjoys properties similar to those of the convexity in the standard Euclidean case, in particular:
\begin{theorem}(Global minimizer of $g$-convex functions on $\manifold{\nfeatures}$).\\
Let $f: \manifold{\nfeatures} \to \realSpace$ be $g$-convex as defined in \eqref{eq:gcvx}, then any local minimum of $f$ over $ \manifold{\nfeatures} $ is a global minimum. Furthermore, if $f$ is strictly $g$-convex, this global minimum is unique.
\end{theorem}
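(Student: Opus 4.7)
The plan is to mimic the classical Euclidean convex-analysis argument, using the geodesic in place of the straight line segment. Since Theorem~\ref{thm:geodesics} gives an explicit, continuous parameterization $\gamma:[0,1]\to\manifold{\nfeatures}$ joining any two points, the standard proof goes through with only cosmetic changes.

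First I would establish the global minimum claim by contradiction. Suppose $\point_\star\in\manifold{\nfeatures}$ is a local minimum of $f$ but not global: there exists $\pointBis\in\manifold{\nfeatures}$ with $f(\pointBis)<f(\point_\star)$. Apply Theorem~\ref{thm:geodesics} with $\gamma(0)=\point_\star$ and $\gamma(1)=\pointBis$; the explicit formula $\gamma(t)=\point_\star^{\nicefrac12}(\point_\star^{-\nicefrac12}\pointBis\point_\star^{-\nicefrac12})^t\point_\star^{\nicefrac12}$ makes it manifest that $t\mapsto\gamma(t)$ is continuous and $\gamma(t)\to\point_\star$ as $t\to 0^+$. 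By the $g$-convexity inequality~\eqref{eq:gcvx} applied on this geodesic,
\begin{equation*}
    f(\gamma(t))\leq (1-t)f(\point_\star) + t f(\pointBis) = f(\point_\star) - t\bigl(f(\point_\star)-f(\pointBis)\bigr) < f(\point_\star)
\end{equation*}
for every $t\in(0,1]$. Sending $t\to 0^+$ places points with strictly smaller objective value arbitrarily close to $\point_\star$, contradicting local minimality.

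For the uniqueness statement under strict $g$-convexity, I would again argue by contradiction. Suppose $\point_0$ and $\point_1$ are two distinct global minimizers with common value $m=f(\point_0)=f(\point_1)$. Taking the geodesic $\gamma$ between them and evaluating strict $g$-convexity at $t=\tfrac12$ yields
\begin{equation*}
    f(\gamma(\tfrac12)) < \tfrac12 f(\point_0) + \tfrac12 f(\point_1) = m,
\end{equation*}
contradicting the assumption that $m$ is the global minimum value.

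The only non-routine point is justifying $\gamma(t)\to\point_\star$ as $t\to 0^+$ in any reasonable topological sense so that the local-minimality hypothesis applies; I expect this to be essentially free because $\manifold{\nfeatures}$ is open in $\ambientSpace{\nfeatures}$ and the expression for $\gamma(t)$ in Theorem~\ref{thm:geodesics} is jointly continuous in $t$ (via continuity of the matrix power $(\cdot)^t$). No additional regularity on $f$ beyond what is implicit in the $g$-convexity definition is needed, so this is really the complete argument.
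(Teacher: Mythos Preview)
Your argument is correct and is the standard proof of this fact. Note, however, that the paper does not actually supply a proof for this theorem: it is stated as a well-known property (with references to \cite{wiesel2015structured, duembgen2016geodesic} and related works) and used as motivation for the design of regularized estimators, so there is nothing to compare against.
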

This property offers an alternate proof for the uniqueness of MLEs as in \eqref{eq:opt_mle} \cite{ollila2014regularized}, and had practical impacts for the design of regularized covariance matrix estimators as in \eqref{eq:reg_mest}: many examples of penalty functions (with various regularization effects) can be found in the overviews in \cite{wiesel2015structured, duembgen2016geodesic}, and the references \cite{auderset2005angular, Wiesel2012unified, Wiesel2012geodesic, wiesel2015structured, ollila2014regularized, duembgen2016geodesic}.

\section{Intrinsic Cramér-Rao Bound for covariance matrix estimation}

\label{sec:iCRLB}

The Cramér-Rao inequality is a staple tool in statistics that characterizes the optimal mean-squared error an unbiased estimator can reach given a model and setup \cite{kay1993fundamentals}.
This tool can either be used to validate estimation procedures, or to design systems so that a certain level of accuracy is guaranteed to be theoretically reachable.
While the Euclidean formulation of this inequality was briefly introduced in Section \ref{sec:background_ces}, the so-called \textit{intrinsic} Cram\'er-Rao bounds extend it to parameters living in a manifold, and for any chosen Riemannian metric.
This perspective is especially interesting as: 
$i$) some metrics can be more meaningful to assess the estimation performance in a given application; $ii)$ a suitable Riemannian geometry (as opposed to the Euclidean one) can reveal hidden properties that make the bound more informative (such as curvature terms, intrinsic biases, etc.).
First, Section \ref{sec:icrb_background} introduces the background on intrinsic Cramér-Rao bound from \cite{smith2005covariance}, where the C-CES model is used as a driving example.
We also refer the reader to \cite[Chapter 6]{boumal2014thesis} and the reference \cite{barrau2013note}, for more details on the topic.
Then, Cramér-Rao bounds are derived for various distances in the context of covariance matrix estimation within C-CES distributions \cite{breloy2018intrinsic} in Section \ref{sec:icrb_CES}.

\subsection{Introduction to intrinsic Cramér-Rao bounds}

\label{sec:icrb_background}

This subsection will present tools that can be applied to any chosen Riemannian geometry on $\manifold{\nfeatures}$.
The needed objects are the Riemannian metric, logarithm mapping and square of the distance, which are denoted $\metricGeneric{\cdot}{\cdot}{\cdot}$, $\logGeneric_{\cdot}(\cdot)$ and $\squaredistGeneric{\cdot}{\cdot}$, respectively.
As in Section \ref{sec:Optim}, we consider the problem of estimating the matrix $\point$ from a given $\nsamples$-sample $\{\data_i\}_{i =1}^{\nsamples}$ assumed to be iid from $\data\sim\distribution{\point}{\densityGenerator}$.
We denote $\estimator$ an estimator of this parameter; e.g. the MLE presented in Section~\ref{sec:Optim}.
We then consider the evaluation of the performance of such estimator $\estimator$.
To do so, we exploit the chosen Riemannian metric $\metricGeneric{\point}{\cdot}{\cdot}$.
Such metric can, for example, be the Fisher information one~\eqref{eq:AI_metric}, or one of the many other options from the literature \cite{thanwerdas2023n}.
The performance criterion is the resulting square of the Riemannian distance, i.e., the error is measured through $\squaredistGeneric{\point}{\estimator}$.
The intrinsic Cramér-Rao theory from~\cite{smith2005covariance} then allows us to obtain a lower bound on the expectation of this error for any unbiased estimator $\estimator$.
Eventually, this retrieves the well-known inequality `` $\mathbf{C}\succeq \mathbf{F}^{-1}$,'' with $\mathbf{C}\in\realSpace^{\nfeatures^2\times\nfeatures^2}$ being the covariance matrix of the estimation error and $\mathbf{F}\in\realSpace^{\nfeatures^2\times \nfeatures^2}$ being the Fisher information matrix, where $\nfeatures^2=\dim(\manifold{\nfeatures})$.
However, these parameters have different definitions due to the specific nature of the considered objects.
The point of this section is to briefly present the key ingredients to obtain such inequality and the corresponding main theorem.

First, we need to generalize the notion of estimation error vector $\errVec\in\realSpace^{\nfeatures^2}$ to the Riemannian context.
Notice that, in the Euclidean case, such vector is generally constructed by vectorizing the entry-wise subtraction of the covariance matrix $\point$ to its estimate $\estimator$, i.e., $\errVec^{\mathcal{E}}=\vech(\estimator-\point)$, where $\vech(\cdot)$ denotes the half-vectorization operator.
%
%
As it happens, from a Riemannian geometry point of view, $\estimator-\point$ corresponds to the Euclidean logarithm mapping at $\point$.
Therefore, the Riemannian logarithm $\logGeneric_{\point}(\estimator)$ provides a natural way to extend the error to any geometry.
It is indeed an element of the tangent space $\tangentSpace{\nfeatures}{\point}$ of $\point$ that ``points towards'' $\estimator$, and whose norm corresponds to the Riemannian distance.
It remains to actually get an error vector $\errVec\in\realSpace^{\nfeatures^2}$ from $\logGeneric_{\point}(\estimator)$.
To do so, we leverage a basis $\{\tangentVector_q  \}_{q=1}^{\nfeatures^2}$ of $\tangentSpace{\nfeatures}{\point}\simeq\ambientSpace{\nfeatures}$ that is orthonormal with respect to the chosen metric $\metricGeneric{\point}{\cdot}{\cdot}$.
%
%
In practice, such a basis can be obtained either analytically from mathematical calculations or numerically, thanks to the Gram-Schmidt orthonormalization process.
This basis yields the decomposition
\begin{equation}
    \logGeneric_{\point}(\estimator)  = \sum_{q=1}^{\nfeatures^2} \varepsilon_q \tangentVector_q,
\end{equation}
and we denote $\errVec=[\varepsilon_1,\cdots,\varepsilon_{\nfeatures^2}] \in \mathbb{R}^{\nfeatures^2}$ the corresponding coordinates error vector, obtained as
\begin{equation}
    \varepsilon_q = \;
    \metricGeneric{\point}{\logGeneric_{\point}(\estimator)}{\tangentVector_q}.
    \label{eq:err_vec2}
\end{equation}
Moreover, the norm of this vector corresponds to the Riemannian distance between $\point$ and $\estimator$, i.e.,
\begin{equation}
    \squaredistGeneric{\point}{\estimator}
    = \,
    \metricGeneric{\point}{\logGeneric_{\point}(\estimator)}{\logGeneric_{\point}(\estimator)}
    =
    \| \errVec \|_2^2,
\end{equation}
which will be instrumental in the next derivations.
The basis $\{\tangentVector_q\}_{q=1}^{\nfeatures^2}$ also yields a Fisher information matrix $\mathbf{F}$, with entries
\begin{equation}
    \mathbf{F}_{q\ell} = \,
    \metric{\point}{\tangentVector_q}{\tangentVector_\ell}^{\textup{FIM}}.
    \label{eq:matrix_fim_for_icrlb}
\end{equation}
The matrix $\mathbf{F}$ represents the Fisher information metric of Theorem~\ref{thm:fim_metric} according to this system of coordinates.
Then, from~\cite[Corrolary 2]{smith2005covariance}, we obtain Theorem~\ref{thm:icrlb}.
\begin{theorem}[Intrinsic Cramér-Rao bound]
    Let $\point\in\manifold{\nfeatures}$.
    Let $\{\data_i\}_{i=1}^{\nsamples}$ a iid $\nsamples$-sample from $\data\sim\distribution{\point}{\densityGenerator}$. 
    Let $\estimator$ an unbiased estimator of $\point$ with corresponding error vector $\errVec$ defined in~\eqref{eq:err_vec2}.
    %
    Then
    \begin{equation*}
        \mathbf{C} = \expectation{\errVec\errVec^T}
        \succeq
        \mathbf{F}^{-1} +
        \textup{ curvature terms},
    \end{equation*}
    where $\mathbf{F}$ is the Fisher information matrix in \eqref{eq:matrix_fim_for_icrlb} and the curvature terms -- which are not detailed here -- depend on the Riemannian curvature tensor corresponding to the chosen geometry and on $\mathbf{F}$; see~\cite{smith2005covariance,boumal2013intrinsic,boumal2014thesis} for further details.
\label{thm:icrlb}
\end{theorem}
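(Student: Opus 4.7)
The plan is to reduce the statement to an application of the classical matrix Cram\'er--Rao inequality in the chosen orthonormal system of coordinates of $\tangentSpace{\nfeatures}{\point}$, and then to extract the correction that comes from the non-flatness of $\manifold{\nfeatures}$. First I would fix the orthonormal basis $\{\tangentVector_q\}_{q=1}^{\nfeatures^2}$ at $\point$, write the score vector $\mathbf{s}\in\realSpace^{\nfeatures^2}$ in that basis (so that its covariance is exactly the Fisher information matrix $\mathbf{F}$ in~\eqref{eq:matrix_fim_for_icrlb}), and represent the estimation error through $\errVec$ as in~\eqref{eq:err_vec2}. The intrinsic notion of unbiasedness is then $\expectation{\logGeneric_{\point}(\estimator)}=\mathbf{0}$, equivalently $\expectation{\errVec}=\mathbf{0}$.

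The classical covariance inequality for any two centered random vectors then gives
\begin{equation*}
\mathbf{C} = \expectation{\errVec\errVec^T} \;\succeq\; \expectation{\errVec\,\mathbf{s}^T}\,\mathbf{F}^{-1}\,\expectation{\errVec\,\mathbf{s}^T}^T,
\end{equation*}
so the argument reduces to identifying the cross-covariance $\MAT{M}=\expectation{\errVec\,\mathbf{s}^T}$. In the Euclidean setting where $\logGeneric_{\point}(\estimator)=\estimator-\point$, differentiating the unbiasedness relation under the integral with respect to the true parameter immediately yields $\MAT{M}=\mathbf{I}$, recovering the familiar $\mathbf{C}\succeq\mathbf{F}^{-1}$.

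The main obstacle is the intrinsic analogue of this differentiation step. One must describe how $\logGeneric_{\point}(\estimator)$ varies when $\point$ is moved along each basis direction $\tangentVector_q$, i.e.\ one needs the covariant derivative of $\logGeneric_{\cdot}(\estimator)$ at $\point$. This variation is controlled by the Jacobi equation for the exponential mapping: at first order it contributes the identity, while higher-order contributions are expressed in terms of the Riemannian curvature tensor associated with $\metricGeneric{\cdot}{\cdot}{\cdot}$ (for the affine-invariant family this tensor has a known closed form, but the general mechanism does not depend on it). Combining this Jacobi expansion with the interchange of expectation and derivative gives $\MAT{M}=\mathbf{I}+\MAT{R}(\point)+\ldots$, where $\MAT{R}(\point)$ is the curvature-induced correction.

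Substituting this expansion into the covariance inequality and keeping terms up to the targeted order yields the claimed bound $\mathbf{C}\succeq\mathbf{F}^{-1}+\textup{curvature terms}$. The delicate bookkeeping of these higher-order contributions, together with precise regularity hypotheses enabling the interchange of expectation and differentiation, is the crux of the argument; I would follow the derivation of~\cite[Corollary~2]{smith2005covariance} (see also~\cite{boumal2013intrinsic,boumal2014thesis}) for the explicit form of $\MAT{R}(\point)$ in terms of the Riemannian curvature tensor and $\mathbf{F}$.
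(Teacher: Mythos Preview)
Your sketch is reasonable and, in fact, goes beyond what the paper itself does: the paper does not prove this theorem at all but simply states it as a consequence of~\cite[Corollary~2]{smith2005covariance}. Your outline (covariance inequality with the score, identification of the cross-covariance via the differential of $\logGeneric_{\point}(\cdot)$, curvature correction through the Jacobi equation) is precisely the mechanism behind that cited result, so you are aligned with the paper's approach, just with the argument unpacked rather than deferred to the reference.
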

In practice, the curvature terms can usually be neglected in Theorem~\ref{thm:icrlb}.
Furthermore, taking the trace of the inequality yields the desired result, i.e.,
\begin{equation}
    \expectation{\squaredistGeneric{\point}{\estimator}}
    \geq
    \tr(\mathbf{F}^{-1}).
\label{eq:crb_on_dist}
\end{equation}
It offers a bound that can be derived for any chosen Riemannian distance $\squaredistGeneric{\cdot}{\cdot}$ (and corresponding metric $\metricGeneric{\cdot}{\cdot}{\cdot}$).

\Remark{
The inequality in Theorem \ref{thm:icrlb} interestingly takes into account the curvature of the manifold, which, for $\manifold{\nfeatures}$, only depends on the chosen metric.
In the Euclidean case, such curvature term is null, and we recover the standard Cramér-Rao inequality.
We also notice that the theorem in~\cite{smith2005covariance} also incorporates an intrinsic bias terms, which was excluded here for the sake of conciseness.
This intrinsic bias (expectation of the Riemannian logarithm) depends on the estimator and the chosen metric, and can reveal unexpected properties.
A main example is that the MLE of the covariance matrix of the Gaussian model appears unbiased in the Euclidean setting, but is, in fact, biased when using the Fisher information metric~\cite{smith2005covariance}.
Such analysis thus opens prospects for improved estimation from the intrinsic perspective.
}

\subsection{Bounds for various matrix distances in C-CES distributions}

\label{sec:icrb_CES}

This Section presents the derivation of special cases of Theorem~\ref{thm:icrlb} when considering various usual metrics.
Hence, it yields intrinsic Cramér-Rao bounds for the problem of covariance matrix estimation in C-CES distributions for the corresponding Riemannian distances.
Since the Fisher information metric is already obtained in Theorem~\ref{thm:fim_metric}, the derivation boils down to the following steps:
\begin{itemize}
\item[$a)$] Selecting the performance metric $\metricGeneric{\point}{\cdot}{\cdot}$ and computing $\{\tangentVector_q\}_{q=1}^{\nfeatures^2}$, a corresponding orthonormal basis of $\tangentSpace{\nfeatures}{\point}$;

\item[$b)$] Computing the elements of the  Fisher information matrix with this basis, according to~\eqref{eq:matrix_fim_for_icrlb};

\item[$c)$] Inverting the Fisher information matrix, applying Theorem \ref{thm:icrlb}, then \eqref{eq:crb_on_dist}.
\end{itemize}
These operations are conducted in the following for the Euclidean metric, the so-called natural Riemannian metric (the affine invariant metric~\eqref{eq:AI_metric} with $\alpha=1$ and $\beta=0$), and the Fisher-Rao metric of the assumed model (i.e., the metric of Theorem \ref{thm:fim_metric}: \eqref{eq:AI_metric} with $\alpha=\alpha_{\densityGenerator}$ and $\beta=\beta_{\densityGenerator}$, where $\alpha_{\densityGenerator}$ and $\beta_{\densityGenerator}$ are defined in~\eqref{eq:coefficients_metric}).
In order for the chosen values of $\alpha$ and $\beta$ to be clear, in this subsection, the metric~\eqref{eq:AI_metric} is denoted $\metric{\cdot}{\cdot}{\cdot}^{(\alpha,\beta)}$ and the distance of Theorem~\ref{thm:distance_ces} is denoted $\delta_{(\alpha,\beta)}^2(\cdot,\cdot)$.

\subsubsection{Euclidean distance}
\label{sec:crb_scatter_eucl}
We first recall the elementary tools of the Euclidean metric for $\manifold{\nfeatures}$:
\begin{equation}
\begin{array}{lcl}
    \text{Metric:} & ~ &
    \metric{\point}{\tangentVector}{\tangentVectorBis}^{\mathcal{E}}=\reel(\tr(\tangentVector\tangentVectorBis))
    \\[0.1cm]
    \text{Logarithm:} & ~ &
    \log^{\mathcal{E}}_{\point}(\estimator)=\estimator-\point
    \\[0.1cm]
    \text{Distance:} & ~ &
    \delta_{\mathcal{E}}^2(\point,\estimator)=\|\estimator-\point\|_2^2.
\end{array}
\label{eq:recall_eucl_metric}
\end{equation}
A basis of the tangent space $\tangentSpace{\nfeatures}{\point}$ that is orthonormal with respect to the metric in~\eqref{eq:recall_eucl_metric} can be obtained as follows:
\begin{enumerate}
\item For $1\leq i\leq\nfeatures$, $\tangentVector^{\mathcal{E}}_{ii}$ is a $\nfeatures\times\nfeatures$ symmetric matrix whose $i^{\textup{th}}$ diagonal element is one, zeros elsewhere
\item For $1\leq i<j\leq\nfeatures$, $\tangentVector^{\mathcal{E}}_{ij}$ is a $\nfeatures\times\nfeatures$ symmetric matrix whose $ij^{\textup{th}}$ and $ji^{\textup{th}}$ elements are both $\nicefrac{1}{\sqrt{2}}$, zeros elsewhere.
\item For $1\leq i<j\leq\nfeatures$, $\bar{\tangentVector}^{\mathcal{E}}_{ij}$ is a $\nfeatures\times\nfeatures$ Hermitian matrix whose $ij^{\textup{th}}$  and $ji^{\textup{th}}$ elements are $\nicefrac{\sqrt{-1}}{\sqrt{2}}$ and $\nicefrac{-\sqrt{-1}}{\sqrt{2}}$, respectively, zeros elsewhere.
\end{enumerate}
To shorten notations, we simply denote this basis $\{\tangentVector_q^{\mathcal{E}}\}_{q=1}^{\nfeatures^2},$ where the $\nfeatures^2$ elements are ordered following items 1), 2), and 3). 
The squared Euclidean distance between an estimator $\estimator$ and the true value $\point$ also corresponds to the summed squared errors on the coordinates in this basis.
We then have the following result:
\begin{theorem}[Cram\'er-Rao bound on Euclidean distance]
    Let $\estimator$ an unbiased estimator of $\point$ built from iid data $\{\data_i\}_{i=1}^{\nsamples}$ drawn from $\data\sim\distribution{\point}{\densityGenerator}$.
    The Euclidean distance between $\estimator$ and $\point$ is bounded in expectation as
    \begin{equation*}
        \mathbf{C}_{\mathcal{E}} = 
        \expectation{\delta_{\mathcal{E}}^2(\estimator,\point)}
        \geq
        \tr(\mathbf{F}_{\mathcal{E}}^{-1}),
    \end{equation*}
    where
    \begin{equation*}
        \left[\mathbf{F}_{\mathcal{E}}\right]_{q\ell} =
        \reel(
        n\alpha_{\densityGenerator} \tr(\point^{-1}\tangentVector^{\mathcal{E}}_q\point^{-1}\tangentVector^{\mathcal{E}}_{\ell}))
        +
        n\beta_{\densityGenerator} \tr(\point^{-1}\tangentVector^{\mathcal{E}}_q) \tr(\point^{-1}\tangentVector^{\mathcal{E}}_{\ell})
        ),
    \end{equation*}
    with $\alpha_{\densityGenerator}$ and $\beta_{\densityGenerator}$ defined in~\eqref{eq:coefficients_metric}.
\ref{thm:fim_metric}.
\label{thm:eucl_crb_ces}
\end{theorem}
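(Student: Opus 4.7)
The plan is to apply the three-step recipe outlined at the start of Section~\ref{sec:icrb_CES}, and to notice that essentially all the necessary ingredients are already in place. The Euclidean metric, logarithm and distance are recalled in~\eqref{eq:recall_eucl_metric}, and the candidate basis $\{\tangentVector_q^{\mathcal{E}}\}_{q=1}^{\nfeatures^2}$ of $\tangentSpace{\nfeatures}{\point}$ is already explicit. So the proof reduces to verifications and one direct computation.

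First, I would verify that $\{\tangentVector_q^{\mathcal{E}}\}_{q=1}^{\nfeatures^2}$ is orthonormal with respect to $\metric{\point}{\cdot}{\cdot}^{\mathcal{E}} = \reel(\tr(\cdot\,\cdot))$. This is a short entry-wise check on the three families of matrices (diagonal, real off-diagonal symmetric, and imaginary off-diagonal Hermitian); the normalizing factors $\nicefrac{1}{\sqrt{2}}$ and $\nicefrac{\sqrt{-1}}{\sqrt{2}}$ are precisely what produce orthonormality under the real-trace inner product. As a consequence, the coordinate expansion $\log_{\point}^{\mathcal{E}}(\estimator) = \estimator - \point = \sum_q \varepsilon_q \tangentVector_q^{\mathcal{E}}$ with $\varepsilon_q = \metric{\point}{\estimator-\point}{\tangentVector_q^{\mathcal{E}}}^{\mathcal{E}}$ is an isometry onto $\realSpace^{\nfeatures^2}$, so that $\|\errVec\|_2^2 = \|\estimator-\point\|_F^2 = \delta_{\mathcal{E}}^2(\estimator,\point)$.

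Second, I would compute the entries of the Fisher information matrix in this basis. By definition~\eqref{eq:matrix_fim_for_icrlb},
\[
[\mathbf{F}_{\mathcal{E}}]_{q\ell} = \metric{\point}{\tangentVector_q^{\mathcal{E}}}{\tangentVector_\ell^{\mathcal{E}}}^{\textup{FIM}},
\]
and this is given directly by Theorem~\ref{thm:fim_metric} applied to the pair $(\tangentVector_q^{\mathcal{E}},\tangentVector_\ell^{\mathcal{E}})$, yielding exactly the expression stated in Theorem~\ref{thm:eucl_crb_ces}. Here the $\reel(\cdot)$ is inherited from the real-trace convention of~\eqref{eq:AI_metric} discussed in the remark following that equation.

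Third, I would apply Theorem~\ref{thm:icrlb} to obtain $\mathbf{C} \succeq \mathbf{F}_{\mathcal{E}}^{-1}$ plus curvature terms, and then take the trace as in~\eqref{eq:crb_on_dist}. Because the Euclidean geometry on $\manifold{\nfeatures}$ (viewed as an open subset of $\ambientSpace{\nfeatures}$) is flat, the Riemannian curvature tensor vanishes and the curvature contribution in Theorem~\ref{thm:icrlb} is zero; combined with the isometry above, this yields $\expectation{\delta_{\mathcal{E}}^2(\estimator,\point)} = \tr(\mathbf{C}) \geq \tr(\mathbf{F}_{\mathcal{E}}^{-1})$. The only mildly delicate point will be this last appeal to the curvature vanishing and to the unbiasedness hypothesis (which kills the intrinsic bias term that was dropped from the statement of Theorem~\ref{thm:icrlb}); everything else is bookkeeping with the basis and a quotation of Theorem~\ref{thm:fim_metric}.
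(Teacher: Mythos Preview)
Your proposal is correct and follows exactly the approach the paper takes: the paper's proof is a one-line appeal to Theorem~\ref{thm:icrlb} and~\eqref{eq:crb_on_dist} using the Euclidean basis $\{\tangentVector_q^{\mathcal{E}}\}_{q=1}^{\nfeatures^2}$, and you have simply spelled out the verifications (orthonormality, isometry $\|\errVec\|_2^2=\delta_{\mathcal{E}}^2(\estimator,\point)$, and the vanishing of the curvature term in the flat case) that the paper leaves implicit.
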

\begin{proof}
The result is a direct application of Theorem \ref{thm:icrlb} and \eqref{eq:crb_on_dist} using the basis $\{\tangentVector_j^{\mathcal{E}}\}_{j=1}^{p^2}$.
\end{proof}
Remark that this corresponds to the Euclidean Cram\'er-Rao bounds obtained for several distributions in \cite{greco2013cramer, pascal2010statistical, besson2013fisher,mitchell1989information}. Also notice that we retrieve the same result as \cite[Theorem 5]{smith2005covariance} for the Gaussian distribution, i.e., $\alpha_{\densityGenerator}=1$ and $\beta_{\densityGenerator}=0$.

\subsubsection{Natural Riemannian distance}
\label{sec:crb_scatter_nat}

The natural Riemannian distance refers to the distance induced by the affine invariant metric~\eqref{eq:AI_metric} with the standard choice of coefficients $\alpha=1$ and $\beta=0$.
The elementary tools for this metric for $\manifold{\nfeatures}$ are
\begin{equation}
\begin{array}{lcl}
    \text{Metric:} & ~ &
    \metric{\point}{\tangentVector}{\tangentVectorBis}^{(1,0)}=\reel(\tr(\point^{-1}\tangentVector\point^{-1}\tangentVectorBis) )
    \\[0.1cm]
    \text{Logarithm:} & ~ & 
    \log_{\point}(\estimator)
    \; = \; \point\log(\point^{-1}\estimator)
    \\[0.1cm]
    \text{Distance:} & ~ & 
    \delta_{(1,0)}^2(\point,\estimator)  = \|\log(\point^{-1}\estimator)\|_2^2.
\end{array}
\label{eq:recall_nat_metric}
\end{equation}
Recall that the full description of this geometry is provided in Section~\ref{sec:RiemGeo}.
A basis of the tangent space $\tangentSpace{\nfeatures}{\point}$ that is orthonormal with respect to the metric in~\eqref{eq:recall_nat_metric} can be obtained by coloring the canonical basis of previous section as
\begin{equation}
\label{eq:nat_basis}
\tangentVector_q^{(1,0)} = \point^{\nicefrac12}\tangentVector_q^{\mathcal{E}}\point^{\nicefrac12}.
\end{equation}
The whole basis is denoted $\{\boldsymbol{\xi}_q^{(1,0)}\}_{q=1}^{\nfeatures^2}$. 
%
%
We then have the following result:
\begin{theorem}[Cram\'er-Rao bound on natural Riemannian distance]
    Let $\estimator$ an unbiased estimator of $\point\in\manifold{\nfeatures}$ built from iid data $\{\data_i\}_{i=1}^{\nsamples}$ drawn from $\data\sim\distribution{\point}{\densityGenerator}$.
    The Riemannian distance between $\estimator$ and $\point$ is bounded in expectation as
    \begin{equation}
        \expectation{\delta_{(1,0)}^2(\estimator,\point)}
        \geq
        \frac1{\nsamples} \left( \frac{\nfeatures^2-1}{\alpha_{\densityGenerator}} +  \frac{1}{\alpha_{\densityGenerator}+ \nfeatures \beta_{\densityGenerator}} \right),
    \label{eq:intrinsic_crb_ces}
    \end{equation}
    with $\alpha_{\densityGenerator}$ and $\beta_{\densityGenerator}$ defined in~\eqref{eq:coefficients_metric}.
\label{thm:icrb_nat}
\end{theorem}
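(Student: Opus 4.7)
The plan is to apply Theorem~\ref{thm:icrlb} and \eqref{eq:crb_on_dist} with the specific orthonormal basis $\{\tangentVector_q^{(1,0)}\}_{q=1}^{\nfeatures^2}$ constructed in \eqref{eq:nat_basis}. The central simplification comes from the fact that this basis is obtained by "whitening" the canonical Euclidean basis $\{\tangentVector_q^{\mathcal{E}}\}$, which makes both terms appearing in the Fisher information matrix of Theorem~\ref{thm:fim_metric} collapse to simple expressions.

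The first step is to evaluate the entries $[\mathbf{F}]_{q\ell}=\metric{\point}{\tangentVector_q^{(1,0)}}{\tangentVector_\ell^{(1,0)}}^{\textup{FIM}}$ using the formula of Theorem~\ref{thm:fim_metric} and the relation $\tangentVector_q^{(1,0)}=\point^{\nicefrac12}\tangentVector_q^{\mathcal{E}}\point^{\nicefrac12}$. The identity $\point^{-1}\tangentVector_q^{(1,0)}\point^{-1}\tangentVector_\ell^{(1,0)}=\point^{-\nicefrac12}\tangentVector_q^{\mathcal{E}}\tangentVector_\ell^{\mathcal{E}}\point^{\nicefrac12}$ together with cyclicity of the trace gives $\tr(\point^{-1}\tangentVector_q^{(1,0)}\point^{-1}\tangentVector_\ell^{(1,0)})=\tr(\tangentVector_q^{\mathcal{E}}\tangentVector_\ell^{\mathcal{E}})=\delta_{q\ell}$, by orthonormality of the Euclidean basis. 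Similarly, $\tr(\point^{-1}\tangentVector_q^{(1,0)})=\tr(\tangentVector_q^{\mathcal{E}})$, which equals $1$ for the $\nfeatures$ diagonal elements of the basis and $0$ for the off-diagonal ones.

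After ordering the basis so that the diagonal elements come first, this yields the compact form
\begin{equation*}
    \mathbf{F} \;=\; n\alpha_{\densityGenerator}\,\mathbf{I}_{\nfeatures^2} \;+\; n\beta_{\densityGenerator}\,\mathbf{t}\mathbf{t}^{\top},
    \qquad \mathbf{t}=[\underbrace{1,\dots,1}_{\nfeatures},\underbrace{0,\dots,0}_{\nfeatures^2-\nfeatures}]^{\top}.
\end{equation*}
This is a rank-one update of a scalar identity, which I would invert via the Sherman--Morrison formula, using $\mathbf{t}^{\top}\mathbf{t}=\nfeatures$:
\begin{equation*}
    \mathbf{F}^{-1} \;=\; \frac{1}{n\alpha_{\densityGenerator}}\mathbf{I}_{\nfeatures^2} \;-\; \frac{\beta_{\densityGenerator}}{n\alpha_{\densityGenerator}(\alpha_{\densityGenerator}+\nfeatures\beta_{\densityGenerator})}\mathbf{t}\mathbf{t}^{\top}.
\end{equation*}

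The final step is to take the trace (noting $\tr(\mathbf{t}\mathbf{t}^{\top})=\nfeatures$) and combine the two resulting fractions over the common denominator $n\alpha_{\densityGenerator}(\alpha_{\densityGenerator}+\nfeatures\beta_{\densityGenerator})$; an elementary algebraic rearrangement recognizes the numerator as $\nfeatures^2(\alpha_{\densityGenerator}+\nfeatures\beta_{\densityGenerator})-\nfeatures\beta_{\densityGenerator}$, which splits back into the announced sum $\frac{1}{n}\bigl(\frac{\nfeatures^2-1}{\alpha_{\densityGenerator}}+\frac{1}{\alpha_{\densityGenerator}+\nfeatures\beta_{\densityGenerator}}\bigr)$. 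Invoking \eqref{eq:crb_on_dist} concludes. There is no real obstacle here: the proof is essentially bookkeeping once one notices that the natural basis \eqref{eq:nat_basis} diagonalizes the trace term of the Fisher metric and makes the trace-product term concentrate on $\nfeatures$ coordinates, reducing the infinite-dimensional geometric computation to a one-line Sherman--Morrison inversion.
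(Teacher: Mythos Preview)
Your proposal is correct and follows essentially the same route as the paper's proof: both plug the whitened basis \eqref{eq:nat_basis} into the Fisher information metric of Theorem~\ref{thm:fim_metric}, reduce to $\mathbf{F}=n\alpha_{\densityGenerator}\mathbf{I}_{\nfeatures^2}+n\beta_{\densityGenerator}\mathbf{t}\mathbf{t}^{\top}$, and invoke Sherman--Morrison to obtain the trace of the inverse. The only cosmetic difference is that the paper normalizes $\mathbf{t}$ to a unit vector and reads off the eigenvalues directly, whereas you compute $\mathbf{F}^{-1}$ explicitly before tracing; and your closing phrase ``infinite-dimensional geometric computation'' is a slip, since everything here lives in dimension~$\nfeatures^2$.
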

\begin{proof}
    Plugging the basis $\{\tangentVector_q^{(1,0)}\}_{q=1}^{\nfeatures^2}$ of $\tangentSpace{\nfeatures}{\point}$ defined in~\eqref{eq:nat_basis} in~\eqref{eq:matrix_fim_for_icrlb} yields
    \begin{equation*}
        \left[ \mathbf{F}_{(1,0)} \right]_{q\ell}
        = \metric{\point}{\tangentVector^{(1,0)}_q}{\tangentVector^{(1,0)}_{\ell}}^{\textup{FIM}}
        = \reel(
        \nsamples\alpha_{\densityGenerator}\tr(\tangentVector^{\mathcal{E}}_q\tangentVector^{\mathcal{E}}_\ell)
        +
        \nsamples\beta_{\densityGenerator}\tr(\tangentVector^{\mathcal{E}}_q)\tr(\tangentVector^{\mathcal{E}}_\ell)
        ).
    \end{equation*}
    Hence, from the relations
    \begin{equation*}
        \tr(\tangentVector^{\mathcal{E}}_q\tangentVector^{\mathcal{E}}_\ell) = \delta_{q\ell}
        ~~~~~~~~
        \text{and}
        ~~~~~~~~
        \tr(\tangentVector^{\mathcal{E}}_q)\tr(\tangentVector^{\mathcal{E}}_\ell) =
        \left\{
        \begin{array}{l}
            1 ~\text{if}~ (q,\ell) \in \llbracket 1,p \rrbracket^2 \\
            0 ~\text{otherwise},
        \end{array}
        \right.
    \end{equation*} 
    we obtain the Fisher information matrix
    \begin{equation*}
        \mathbf{F}_{(1,0)} = 
        \nsamples\alpha_{\densityGenerator}\mathbf{I}_{\nfeatures^2}
        + \nsamples\beta_{\densityGenerator}
        \begin{bmatrix}
        \mathbf{1}_{\nfeatures\times\nfeatures} & \mathbf{0}_{\nfeatures\times\nfeatures(\nfeatures-1)} \\
        \mathbf{0}_{\nfeatures(\nfeatures-1)\times\nfeatures} & \mathbf{0}_{\nfeatures(\nfeatures-1)\times \nfeatures(\nfeatures-1)}  
        \end{bmatrix}
        ,
    \end{equation*}
    which is expressed as $\mathbf{F}_{(1,0)}=\nsamples\alpha_{\densityGenerator}\mathbf{I}_{\nfeatures^2} + \nsamples\nfeatures\beta_{\densityGenerator}\mathbf{v}\mathbf{v}^T$ with unitary vector $\mathbf{v}=\frac{1}{\sqrt{\nfeatures}}\left[ ~\mathbf{1}_{\nfeatures} ~|~ \mathbf{0}_{\nfeatures(\nfeatures-1)} ~ \right]$, i.e. $\mathbf{v}^T\mathbf{v}=1$.
    Hence, the inverse of the Fisher information matrix can be obtained by the Sherman-Morrison formula.
    In particular, its vector of eigenvalues can be directly identified as $\frac1{\nsamples}\left[ (\alpha_{\densityGenerator}+\nfeatures\beta_{\densityGenerator})^{-1}, \alpha_{\densityGenerator}^{-1},\ldots,\alpha_{\densityGenerator}^{-1} \right]$ and summed to obtain its trace. Theorem~\ref{thm:icrlb} and~\eqref{eq:crb_on_dist} are then applied to conclude.
\end{proof}

\Remark{
Contrarily to the Euclidean case of Theorem~\ref{thm:eucl_crb_ces}, the bound on the natural Riemannian distance in Theorem~\ref{thm:icrb_nat} does not depend on the parameter $\point$.
This is generally a desirable property, as it offers an interpretation grounded solely on intrinsic dimensions of the problem.
Additionally, simulation examples in Section~\ref{sec:simus} show that assessing the error with such criterion (that is more in accordance with the nature of the parameter) can also reveal unexpected properties of the estimates.
}

\subsubsection{Fisher-Rao distance}
\label{sec:intrinsic_crlb_ces}

The Fisher-Rao distance refers to the geodesic distance associated with the Fisher information metric (cf. Section \ref{sec:geodesicx_exp_log_dist}).
A subtlety is that we voluntarily omit the dependency on $\nsamples$ of the Fisher information metric of Theorem~\ref{thm:fim_metric}, i.e., the bound will be obtained for using a generic metric in~\eqref{eq:AI_metric} with $\alpha=\alpha_{\densityGenerator}$ and $\beta=\beta_{\densityGenerator}$.
This distinction has two main reasons: 
$i$) it appears more logical to evaluate performance with a distance whose expression does not vary with the sample support of the scenario $\nsamples$;
$ii$) this allows us to also stress that, though identical, two metrics play a separate role in the derivations: one is inherent to the statistical model, the other is a choice made to measure estimation accuracy.
Hence, the elementary tools on $\manifold{\nfeatures}$~are
\begin{equation}
    \begin{array}{lcl}
        \text{Metric:} & ~ &
        \metric{\point}{\tangentVector}{\tangentVectorBis}^{(\alpha_\densityGenerator,\beta_\densityGenerator)} =
        \reel(
        \alpha_\densityGenerator \tr(\point^{-1}\tangentVector\point^{-1}\tangentVectorBis)
        + \beta_\densityGenerator \tr(\point^{-1}\tangentVector)\tr(\point^{-1}\tangentVectorBis)
        )
        \\[0.1cm]
        \text{Logarithm:} & ~ & 
        \log_{\point}(\estimator) \; = \; \point\log(\point^{-1}\estimator)
        \\[0.1cm]
        \text{Distance:} & ~ & 
        \delta_{(\alpha_\densityGenerator,\beta_\densityGenerator)}^2(\point,\estimator)
        = \alpha_\densityGenerator \|\log(\point^{-1}\estimator)\|_2^2
        + \beta_\densityGenerator (\log\det(\point^{-1}\estimator))^2.
    \end{array}
\label{eq:recall_fisher_metric}
\end{equation}
Recall that full details on this geometry are provided in Section~\ref{sec:RiemGeo}.
Contrary to previous geometries, since the considered metric is the Fisher information one, we do not actually need to compute a basis of the tangent space $\tangentSpace{\nfeatures}{\point}$ to obtain the bound.
However, notice that if needed, such a basis can be obtained using the Gram-Schmidt orthogonalization process.
In this case, the Cramér-Rao bound is: 
\begin{theorem}[Cramér-Rao bound on Fisher-Rao distance]
    Let $\estimator$ be an unbiased estimator of $\point\in\manifold{\nfeatures}$ built from iid data $\{\data_i\}_{i=1}^{\nsamples}$ drawn from $\data\sim\distribution{\point}{\densityGenerator}$.
    The Fisher-Rao distance between $\estimator$ and $\point$ is bounded in expectation as
    \begin{equation*}
        \expectation{\delta_{(\alpha_\densityGenerator,\beta_\densityGenerator)}^2(\estimator,\point)}
        \geq
        \frac{\nfeatures^2}{\nsamples}.
    \end{equation*}
\label{thm:icrb_ces}
\end{theorem}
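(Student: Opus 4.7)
The plan is to apply the generic intrinsic Cram\'er-Rao inequality of Theorem~\ref{thm:icrlb} together with~\eqref{eq:crb_on_dist}, following exactly the three-step recipe described for the Euclidean and natural cases. The decisive simplification, compared to those earlier proofs, is that the metric used to \emph{measure} the estimation error now coincides (up to the sample-size factor) with the Fisher information metric itself, so the Fisher information matrix will trivialize in the appropriate basis.

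First I would pick any basis $\{\tangentVector_q\}_{q=1}^{\nfeatures^2}$ of $\tangentSpace{\nfeatures}{\point}\simeq\ambientSpace{\nfeatures}$ that is orthonormal with respect to the performance metric $\metric{\point}{\cdot}{\cdot}^{(\alpha_\densityGenerator,\beta_\densityGenerator)}$ defined in~\eqref{eq:recall_fisher_metric}. Existence is guaranteed by a Gram–Schmidt construction applied to any basis of $\ambientSpace{\nfeatures}$, and no closed-form expression is actually needed for the rest of the argument. By construction we have $\metric{\point}{\tangentVector_q}{\tangentVector_\ell}^{(\alpha_\densityGenerator,\beta_\densityGenerator)}=\delta_{q\ell}$, which is precisely the orthonormality condition required to form the error vector $\errVec$ in~\eqref{eq:err_vec2} and to identify $\squaredistGeneric{\point}{\estimator}$ with $\|\errVec\|_2^2$.

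Next I would compute the Fisher information matrix in this basis via~\eqref{eq:matrix_fim_for_icrlb}. Inspecting Theorem~\ref{thm:fim_metric}, the Fisher information metric reads
\begin{equation*}
    \metric{\point}{\tangentVector_q}{\tangentVector_\ell}^{\textup{FIM}}
    = \nsamples\,\metric{\point}{\tangentVector_q}{\tangentVector_\ell}^{(\alpha_\densityGenerator,\beta_\densityGenerator)},
\end{equation*}
so that $[\mathbf{F}]_{q\ell}=\nsamples\,\delta_{q\ell}$ and hence $\mathbf{F}=\nsamples\mathbf{I}_{\nfeatures^2}$. Its inverse is $\mathbf{F}^{-1}=\nicefrac{1}{\nsamples}\,\mathbf{I}_{\nfeatures^2}$, and $\tr(\mathbf{F}^{-1})=\nfeatures^2/\nsamples$. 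Applying Theorem~\ref{thm:icrlb} together with~\eqref{eq:crb_on_dist} then yields the claimed inequality $\expectation{\delta_{(\alpha_\densityGenerator,\beta_\densityGenerator)}^2(\estimator,\point)}\geq \nfeatures^2/\nsamples$.

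There is really no hard step here: the non-trivial work has already been done upstream in Theorems~\ref{thm:fim_metric} and~\ref{thm:icrlb}. The only subtlety worth stressing is that, unlike the Euclidean and natural cases, we never need to exhibit an explicit orthonormal basis, because the pairing of the estimation-error metric with the statistical metric forces $\mathbf{F}$ to be a scalar multiple of the identity independently of the basis choice. One should also briefly justify that the curvature terms in Theorem~\ref{thm:icrlb} can be dropped (or absorbed into the inequality $\mathbf{C}\succeq\mathbf{F}^{-1}$ before tracing), as is done implicitly in the preceding two theorems of the section.
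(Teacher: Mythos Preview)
Your proposal is correct and follows exactly the same approach as the paper: use an orthonormal basis with respect to $\metric{\point}{\cdot}{\cdot}^{(\alpha_\densityGenerator,\beta_\densityGenerator)}$, observe that $\metric{\point}{\cdot}{\cdot}^{\textup{FIM}}=\nsamples\,\metric{\point}{\cdot}{\cdot}^{(\alpha_\densityGenerator,\beta_\densityGenerator)}$ so that $\mathbf{F}=\nsamples\mathbf{I}_{\nfeatures^2}$, and conclude via Theorem~\ref{thm:icrlb} and~\eqref{eq:crb_on_dist}. Your additional remarks on the existence of the basis and the handling of curvature terms are sound but not strictly needed, as the paper treats these in the same implicit way as for the Euclidean and natural cases.
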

\begin{proof}
    By definition, we have $\metric{\point}{\cdot}{\cdot}^{\textup{FIM}}=\nsamples\metric{\point}{\cdot}{\cdot}^{(\alpha_\densityGenerator,\beta_\densityGenerator)}$.
    Hence, since the basis of interest, denoted $\{\tangentVector^{(\alpha_\densityGenerator,\beta_\densityGenerator)}_q\}_{q=1}^{\nfeatures^2}$, is orthonormal according to $\metric{\point}{\cdot}{\cdot}^{(\alpha_\densityGenerator,\beta_\densityGenerator)}$, it follows that $\mathbf{F}_{(\alpha_\densityGenerator,\beta_\densityGenerator)}=\nsamples\mathbf{I}_{\nfeatures^2}$.
    The trace of its inverse is therefore $p^2/n$ and the proof is concluded by applying Theorem \ref{thm:icrlb} and \eqref{eq:crb_on_dist}.
\end{proof}
We notice that Theorems \ref{thm:icrb_nat} and \ref{thm:icrb_ces} coincide in the Gaussian case ($\alpha_g=1$ and $\beta_g=0$).

\Remark{
Theorem~\ref{thm:icrb_ces} actually exemplifies a more of universal result, which illustrates that the Fisher-Rao distance is the most in accordance with the underlying statistical model.
Indeed, the proof strategy of Theorem~\ref{thm:icrb_ces} holds for any geometry induced by a statistical model (parameter manifold and probability density function).
Thus, the Fisher-Rao distance will always be bounded by a ratio between the intrinsic problem dimension and the number of samples.
}

\subsection{Simulation examples}

\label{sec:simus}

This section illustrates the results of Theorems \ref{thm:eucl_crb_ces}-\ref{thm:icrb_ces} for the multivariate $t$-distribution (cf. example of Section \ref{sec:background_ces}), and various covariance matrix estimators.
In the following, the scatter matrix is built as a $\nfeatures\times\nfeatures$ (with $\nfeatures=10$) Toeplitz matrix $\left[\point_T\right]_{ij}=\rho^{|i-j|}$ with $\rho=\nicefrac{0.9(1+\sqrt{-1})}{\sqrt{2}}$.
For samples distributed as $\data\sim\distribution{\point_T}{\densityGenerator_d}$, where $\densityGenerator_d$ is the density generator of the $t$-distribution with $d$ degrees of freedom, we study the performance of the following estimators of $\point_T$:
\begin{itemize}
\item SCM: the usual sample covariance matrix, defined as $\estimator_{\textup{SCM}} = \frac1{\nsamples}\sum_{i=1}^{\nsamples} \data_i \data_i^H$.
\item MLE: The estimator $\estimator_{\textup{MLE}}$ defined in~\eqref{eq:MLE} using the appropriate function $\psi(t)=-\phi(t)$, with $\phi$ defined in~\eqref{eq:phi_mle_t}.
\item Mismatched MLE: the $M$-estimator $\estimator_{\textup{m-MLE}}$ constructed as the MLE, except that the parameter $d$ is different from the true parameter. Here, $d=10$ is set regardless of the underlying distribution.
\end{itemize}
These performances are evaluated with respect to $\nsamples$ ($\nsamples$ ranging from $11$ to $10^3$) through the mean squared distances $\delta_{\mathcal{E}}^2$, $\delta_{(1,0)}^2$ and $\delta_{(\alpha_{\densityGenerator_d},\beta_{\densityGenerator_d})}^2$ (evaluated on $10^4$ Monte-Carlo simulations) and are compared to the corresponding Cramér-Rao lower bounds from Theorems \ref{thm:eucl_crb_ces}-\ref{thm:icrb_ces}.

The left column of Figure \ref{fig:fig1} displays the results for a $t$-distribution with $d=100$ degrees of freedom.
Notice that, in this case, data almost follow a Gaussian distribution 
(it is usually admitted that $d>30$ allows to assume Gaussianity of the data).
In this setting, $\estimator_{\textup{MLE}}\simeq\estimator_{\textup{SCM}}$ so these estimators reach similar performances.
For all performance measurements (different distances), the mismatched MLE appears not efficient at high sample support, which is due to a bias induced on the scale through the wrong choice of parameter $d$. 
Also, $\alpha\simeq1$ and $\beta\simeq0$, so $\metric{\cdot}{\cdot}{\cdot}^{(1,0)}$ and $\metric{\cdot}{\cdot}{\cdot}^{(\alpha_{\densityGenerator_d},\beta_{\densityGenerator_d})}$ generate almost identical distances and corresponding bounds, as observed in Figure~\ref{fig:fig1}.
Interestingly, as noted in~\cite{smith2005covariance}, these performance criteria show that the studied estimators are not efficient at low sample support.
The natural metric is able to reflect some empirical results in terms of application -- the SCM is known to provide an inaccurate estimation at low sample support --, while the Euclidean metric is apparently not, i.e., the Cramér-Rao bound and MSE on the Euclidean metric appear non-informative here.

The right column of Figure~\ref{fig:fig1} displays the same results for a $t$-distribution with $d=3$ degrees of freedom. 
Here, the distribution is heavy tailed and the SCM, as well as the mismatched MLE, fail to provide an accurate estimator of the scatter matrix.
In this case, the study of the Euclidean metric reveals that the MLE is not efficient at low sample support, however it converges to the bound as $\nsamples$ grows.
We notice that the convergence towards this regime appears to be slower through the study of the natural and C-CES Fisher-Rao metric, which may be an interesting point in order to quantify the number of samples needed to achieve good performance in terms of application purpose.

\begin{figure}
    \centering
    \input{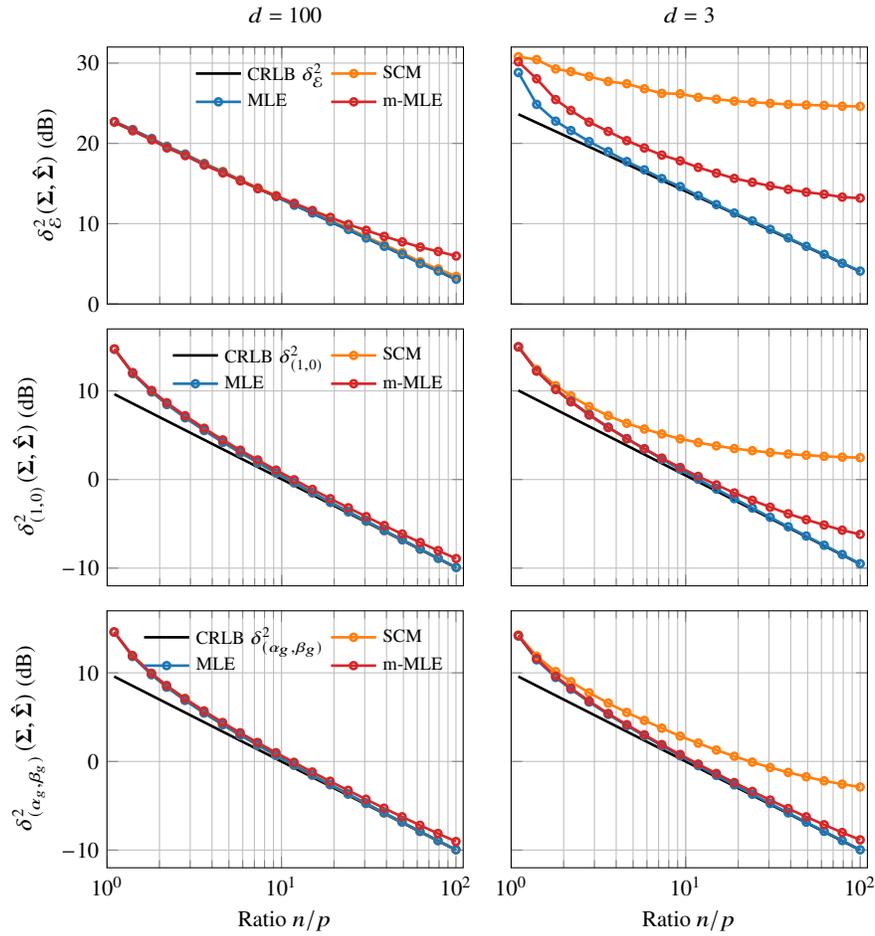}
    \caption{
    From top to bottom: Euclidean, Natural, CES Fisher-Rao CRLB and mean squared distance scatter matrix for t-distribution with $d$ degrees of freedom versus $\nsamples/\nfeatures$ for $\nfeatures=10$. On the left, $d=100$ (close to Gaussian case) and, on the right, $d=3$.
    }
    \label{fig:fig1}
\end{figure}


%

\section{Riemannian classification with the Fisher-Rao distance}
\label{sec:ClassifEEG}

Classification is a ubiquitous task in machine learning.
From a statistical point of view, the problem generally consists of attributing a class to each sample (or batch of samples) from an unlabelled mixture of different distributions.
The Fisher-Rao geometry provides a tool that can be efficiently leveraged in this context: as most classification methods are based on the Euclidean distance between samples, these can be transposed to the Riemannian setting by using the Fisher-Rao distance on the statistical feature space (i.e., the parameters of the assumed model).
Such transposition is often beneficial as it leverages a metric that is in accordance with the model (e.g., it can account for its natural geometric invariance).
In this regard, Section \ref{sec:classif_framework} presents a generic framework driven by the Fisher-Rao geometry.
An example based on CES models and the nearest centroïd classifier is derived in Section \ref{sec:Riem_mdm} and applied to EEG recordings in Section \ref{sec:appli_EEG}.

\subsection{A Fisher-Rao Riemannian classification framework}


\label{sec:classif_framework}

The use of statistical features (or descriptors) is common in batch sample classification, as these tend to be more discriminative than raw data.
Interestingly, when assuming a statistical model for the batches, the model parameters appear as a natural choice for such statistical features, and the Fisher-Rao distance as a natural tool to compare them.
For example, assuming two C-CES models with the same probability density function $f$, but different parameters $\point_1$ and $\point_2$, the Fisher-Rao distance (cf. Theorem \ref{thm:distance_ces} and \eqref{eq:recall_fisher_metric}) acts distance between statistical models through the following relation:
\begin{equation}
    \underbrace{
    \delta_{\rm FR}(
    f(\data|\point_1)
    ,
    f(\data|\point_2)
    )}_{\text{dist. between models}}
    \overset{\text{def}}{=}
    \underbrace{
    \delta_{\textup{FR}} ( \point_1, \point_2).}_{\text{FR-dist. between parameters}}
    \label{eq:DR_as_dist_between_models}
\end{equation}
In practice, we handle empirical distributions (i.e., batches of samples), so this distance can be evaluated as:
\begin{equation}
    \underbrace{
    \hat{\delta}_{\textup{FR}}(
    \{ \data_{i,1} \}_{i=1}^n
    ,
    \{ \data_{i,2} \}_{i=1}^n
    )}_{\text{dist. between batches}}
    \overset{\text{def}}{=}
    \underbrace{
    \delta_{\textup{FR}} ( \estimator_1, \estimator_2),}_{\text{FR-dist. between estimated covariances}}
    \label{eq:DR_as_dist_between_models2}
\end{equation}
where $\{ \data_{i,1} \}_{i=1}^n$ (resp. $\{ \data_{i,2} \}_{i=1}^n$)
denotes a sample batch, and $\estimator_1$ (resp. $\estimator_2$) 
denotes an estimate of its covariance matrix, such as the maximum likelihood estimator presented in Section \ref{sec:Optim}.
From this perspective, a batch classification problem then turns into a problem of classifying covariance matrices on $\manifold{\nfeatures}$.
Such a task can be achieved by using a standard classification algorithm in which criteria and objects are carefully transposed according to the Fisher-Rao distance (rather than the Euclidean one).
For examples related to this setup: the Riemannian nearest centroïd (or minimum distance to mean) classifier~\cite{barachant2011multiclass,tuzel2008pedestrian}; the Riemannian $K$-means on $\manifold{\nfeatures}$ was, e.g., used in \cite{collas2021probabilistic, hippert2022robust}, Kernel methods based on Riemannian distances were studied in \cite{barachant2013classification,jayasumana2016kernels,jayasumana2013kernel}, and Riemannian Gaussian mixture models on $\manifold{\nfeatures}$ were proposed in \cite{said2017riemannian,said2017gaussian}.
The following section presents the Riemannian counterpart of the nearest centroid classifier for $\manifold{\nfeatures}$.

\Remark{
Beyond C-CES models, the presented framework generalizes to a generic (model-driven) Riemannian classification methodology, which can be summarized as follows: 
$i$) Model selection: we assume an underlying statistical model, whose parameters should differ between classes;
$ii$) Statistical Feature extraction: we estimate the corresponding parameters for each batch;
$iii$) Riemannian classification: the extracted features are classified by leveraging the Fisher-Rao distance.
}

\subsection{Nearest centroïd classifier on $\manifold{\nfeatures}$ for C-CES models}

\label{sec:Riem_mdm}

The Riemannian center of mass corresponding to the framework discussed in Section \ref{sec:classif_framework} when assuming a Gaussian model has been the reference method to classify electroencephalography (EEG) recordings for the past decade~\cite{barachant2011multiclass}.
This section extends this methodology to the C-CES distributions, and presents the necessary tools to compute the Riemannian center of masses on $\manifold{\nfeatures}$.

Formally, we focus here on the supervised classification of batches of data.
Formally, given an unknown batch of a $\nsamples$-sample $\{\data_i\}_{i=1}^{\nsamples}$ and $z$ fixed classes, a classifier $\classifier:(\dataSpace{\nfeatures})^{\nsamples}\to\llbracket1,\dots,\nclasses\rrbracket$ infers the class label $y\in\llbracket1,\dots,\nclasses\rrbracket$, i.e.,
\begin{equation}
    y=\classifier\left(\{\data_i\}_{i=1}^{\nsamples}\right).
\end{equation}
To provide accurate results, the classifier $\classifier$ is trained on $\nbatches$ batches of $\nsamples$ samples $\{\{\data_{i,j}\}_{i=1}^{\nsamples},\labels_j\}_{j=1}^{\nbatches}$ associated to known class labels $\labels_j\in\llbracket1,\dots,\nclasses\rrbracket$.
In practice, one usually aims to evaluate the accuracy of a classifier on some dataset $\dataset$.
To do so, the dataset is split into training and test sets, denoted
$\train$ and $\test$, respectively.
The classifier $\classifier$ is trained on $\train$ and prediction is performed on the testing set $\test$.
Predicted labels are then compared to actual labels, which yields the accuracy of $\classifier$ on the considered dataset.
Notice that there are different ways to build $\train$ and $\test$ from $\dataset$, see e.g., the documentation of scikit-learn~\cite{pedregosa2011scikit} for more details.

For the model selection step, we consider that each batch $\{\data_{i,j}\}_{i=1}^{\nsamples}$ is distributed according to $\data\sim\distribution{\point_j}{\densityGenerator}$.
The statistical parameter extraction step is then performed by maximum likelihood estimation on each batch (cf. Section \ref{sec:Optim}).
From there, the feature classification problem is set as $\dataset=\{\pointBis_j,\labels_j\}_{j=1}^{\nbatches}$ on $\manifold{\nfeatures}$.
We then exploit the Fisher-Rao distance $\delta$ of C-CES distribution defined in Theorem~\ref{thm:distance_ces} to generalize the nearest centroïd classifier, also referred to as minimum distance to mean (MDM) classifier, to $\manifold{\nfeatures}$, .
This classification algorithm consists of two steps:
\begin{itemize}
    \item First, it computes the center of mass of each class, also called class center, from covariance matrices in the training set $\train$.
    \item Then, it assigns the label of the closest class center to each covariance in $\test$.
\end{itemize}
Since the covariance matrices lie on the Riemannian manifold $\manifold{\nfeatures}$, the geodesic distance $\delta$ from Theorem~\ref{thm:distance_ces} is leveraged in both steps.

We now detail the first step.
For every class $\labels\in\llbracket1,\dots,\nclasses\rrbracket$, one must compute the class center $\classCenter^{(\labels)}$ from the training set $\train$.
It is the center of mass of the set $\{\pointBis_j\in\train: \labels_j=\labels\}$.
We thus need to be able to compute the center of mass $\classCenter$ of a set $\{\point_j\}_{j=1}^{\nbatches}$ of matrices in $\manifold{\nfeatures}$ according to the Fisher-Rao distance in Theorem~\ref{thm:distance_ces}.
Following~\cite{karcher1977riemannian}, the Riemannian center of mass is defined in the following Definition~\ref{def:center_of_mass}.

\begin{definition}[Riemannian center of mass on $\manifold{\nfeatures}$]
    The center of mass $\classCenter^{\star}$ of $\{\point_i\}_{i=1}^{\nbatches}$ on $\manifold{\nfeatures}$ is defined as the minimizer of the variance computed with the geodesic distance
    \begin{equation}
        \classCenter^{\star} = \argmin_{\classCenter\in\manifold{\nfeatures}} \quad V(\classCenter) 
        \label{eq:riemannian_mean}
    \end{equation}
    with $ \quad V(\classCenter) \overset{\text{def}}{=} \frac{1}{2\nbatches} \sum_{j=1}^{\nbatches}  \squaredist{\classCenter}{\point_j}.$
    \label{def:center_of_mass}
\end{definition}
Remark that if the Riemannian distance $\delta$ is replaced by its Euclidean counterpart, $\delta_{\mathcal{E}}(\point,\pointBis)=\|\point-\pointBis\|_2$, then the minimizer of $V$ becomes the arithmetic mean $\classCenter=\frac1{\nbatches}\sum_{j=1}^{\nbatches}\point_j$.
Unfortunately, for the Riemannian case, a closed-form solution of~\eqref{eq:riemannian_mean} remains unknown~\cite{moakher2005differential} except in very specific cases ($\nbatches=2$, commuting matrices,~...).
Hence, one must turn to an iterative optimization procedure.
As in~\cite{pennec2006riemannian}, we focus here on a Riemannian gradient descent on $\manifold{\nfeatures}$.
Recall from Section~\ref{sec:Optim} that, to employ this algorithm, we need to compute the Riemannian gradient of~\eqref{eq:riemannian_mean}, choose a retraction and a step size rule.
The Riemannian gradient of $V$ was derived in~\cite{karcher1977riemannian,moakher2005differential}, and is provided in Proposition~\ref{prop:rgrad_mean}.

\begin{proposition}[Riemannian gradient of $V$]
    the Riemannian gradient $\grad V(\classCenter)$ of the variance $V$ defined in~\eqref{eq:riemannian_mean} at $\classCenter\in\manifold{\nfeatures}$ is
    \begin{equation}
        \grad V(\classCenter) = -\frac{1}{\nbatches} \sum_{j=1}^{\nbatches} \log_{\classCenter}(\point_j) = -\frac{1}{\nbatches} \sum_{j=1}^{\nbatches} \classCenter^{\nicefrac12} \log(\classCenter^{-\nicefrac12} \point_j \classCenter^{-\nicefrac12}) \classCenter^{\nicefrac12}.
        \label{eq:grad_mean}
    \end{equation}
\label{prop:rgrad_mean}
\end{proposition}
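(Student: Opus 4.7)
The plan is first to establish, for each fixed $\pointBis\in\manifold{\nfeatures}$, the single-point identity
\begin{equation*}
    \grad_{\classCenter}\!\left( \tfrac{1}{2}\squaredist{\classCenter}{\pointBis} \right) \; = \; -\log_{\classCenter}(\pointBis),
\end{equation*}
and then to conclude equation~\eqref{eq:grad_mean} by linearity of the differential and of the Riemannian gradient. Since $V$ is an $\nbatches^{-1}$-weighted sum of such squared-distance terms, this directly yields $\grad V(\classCenter) = -\nbatches^{-1}\sum_{j=1}^{\nbatches}\log_{\classCenter}(\point_j)$; substituting the closed-form Riemannian logarithm derived in Section~\ref{sec:geodesicx_exp_log_dist} then gives the second equality stated in the proposition.

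To prove the single-point identity, my preferred route is the first variation of geodesic energy, the Riemannian counterpart of the elementary Euclidean fact $\nabla_{\classCenter}\tfrac{1}{2}\|\classCenter-\pointBis\|_F^2=\classCenter-\pointBis$. Pick any $\tangentVector\in\tangentSpace{\nfeatures}{\classCenter}$ and a smooth curve $s\mapsto\classCenter_s\in\manifold{\nfeatures}$ with $\classCenter_0=\classCenter$ and $\tfrac{d}{ds}\classCenter_s|_{s=0}=\tangentVector$. For each $s$ let $\gamma_s:[0,1]\to\manifold{\nfeatures}$ denote the geodesic joining $\classCenter_s$ to $\pointBis$, so that $\dot{\gamma}_s(0)=\log_{\classCenter_s}(\pointBis)$. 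Metric compatibility of the Levi-Civita connection (cf.\ Theorem~\ref{thm:connection}) implies that $\metric{\gamma_s(t)}{\dot{\gamma}_s(t)}{\dot{\gamma}_s(t)}$ is constant in $t$, and hence $\squaredist{\classCenter_s}{\pointBis}=\metric{\classCenter_s}{\log_{\classCenter_s}(\pointBis)}{\log_{\classCenter_s}(\pointBis)}$. Differentiating in $s$ at $s=0$ and invoking the classical first variation formula (whose boundary contribution at $\pointBis$ vanishes because that endpoint is fixed) yields $\tfrac{d}{ds}|_{s=0}\tfrac{1}{2}\squaredist{\classCenter_s}{\pointBis}=-\metric{\classCenter}{\log_{\classCenter}(\pointBis)}{\tangentVector}$. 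Matching against the defining relation $\metric{\classCenter}{\grad f(\classCenter)}{\tangentVector}=\Diff f(\classCenter)[\tangentVector]$ gives the claimed identity.

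A more computational alternative is to differentiate the explicit expression of Theorem~\ref{thm:distance_ces}: the $\beta$-term is straightforward via $\Diff\log\det(\classCenter^{-1}\pointBis)[\tangentVector]=-\tr(\classCenter^{-1}\tangentVector)$, while for the Frobenius term one can exploit the affine-invariance $\squaredist{\classCenter}{\pointBis}=\squaredist{\MAT{I}_{\nfeatures}}{\classCenter^{-\nicefrac12}\pointBis\classCenter^{-\nicefrac12}}$ to reduce the computation to a differential at the identity, where the derivative of the matrix logarithm collapses to the identity map on $\ambientSpace{\nfeatures}$. Matching the resulting bi-linear form in $\tangentVector$ against the affine-invariant metric~\eqref{eq:AI_metric} and then pulling back from the identity reproduces $-\classCenter^{\nicefrac12}\log(\classCenter^{-\nicefrac12}\pointBis\classCenter^{-\nicefrac12})\classCenter^{\nicefrac12}=-\log_{\classCenter}(\pointBis)$.

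The main obstacle is handling the differential of the matrix logarithm, which has no clean closed form in general. The variational argument sidesteps this entirely; the direct argument neutralizes it by reducing to the identity point, where only a first-order Taylor expansion is needed. Once the single-point identity is in hand, averaging over $j$ is immediate and yields~\eqref{eq:grad_mean} exactly as stated.
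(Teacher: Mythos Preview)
Your primary route is essentially the paper's own proof: both reduce to the single-point identity $\grad_{\classCenter}\tfrac12\squaredist{\classCenter}{\pointBis}=-\log_{\classCenter}(\pointBis)$, establish it via the first variation of geodesic energy (the boundary term at the fixed endpoint $\pointBis$ vanishes, leaving $\Diff v(\classCenter)[\tangentVector]=\langle\tangentVector,-\log_{\classCenter}(\pointBis)\rangle_{\classCenter}$), and then sum over $j$. Your computational alternative via affine-invariance is the kind of direct argument the paper explicitly chose \emph{not} to present, preferring the metric-agnostic Riemannian proof; both are valid, but the variational one generalizes beyond this specific distance.
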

\begin{proof}
    In~\cite{moakher2005differential}, a technical proof directly deriving the distance of Theorem~\ref{thm:distance_ces} for $\alpha=1$ and $\beta=0$ is provided.
    Here, we propose a more general Riemannian geometry proof, which do not depend on the distance or the manifold.
    The proved result is well-known and can for instance be found in~\cite{pennec2017hessian} without proof.
    Given $\point\in\manifold{\nfeatures}$, we aim to show that the gradient of the function $v(\classCenter)=\frac12\squaredist{\classCenter}{\point}$ is $\grad v(\classCenter)=-\log_{\classCenter}(\point)$, where $\log_{\cdot}(\cdot)$ is the Riemannian logarithm mapping corresponding to the Riemannian distance $\delta(\cdot,\cdot)$.
    Let $\classCenter(t)$ the geodesic such that $\classCenter(0)=\classCenter$ and $\dot{\classCenter}(0)=\tangentVector\in\ambientSpace{\nfeatures}$.
    It follows that $\Diff v(\classCenter)[\tangentVector]=\left.\frac{\D}{\D t}v(\classCenter(t))\right|_{t=0}$.
    Let $\gamma_t$ the geodesic joining $\classCenter(t)$ to $\point$.
    By construction, $H(s,t)=\gamma_t(s)$ is a variation of the geodesic $\gamma_0$~\cite[Definition 3.24]{gallot1990riemannian}.
    Furthermore, we have $\left.\frac{\D}{\D t}v(\classCenter(t))\right|_{t=0}=\left.\frac{\D}{\D t}E(\gamma_t)\right|_{t=0}$, where $E(\gamma_t)=\frac12\int_0^1\langle\dot{\gamma}_t(s),\dot{\gamma}_t(s)\rangle_{\gamma_t(s)}\D s$ is the energy of the geodesic $\gamma_t$.
    Let $Y(s)$ such that $H(t,s)=\exp_{\gamma_0(s)}(tY(s))$.
    From~\cite[Theorem 3.31]{gallot1990riemannian}, we get the first variation formula of energy
    \begin{equation*}
        \frac{\D}{\D t}E(\gamma_t) = [\langle Y(s), \dot{\gamma}_0(s)\rangle_{\gamma_0(s)}]_0^1 - \int_0^1 \langle Y(s), \nabla_{\dot{\gamma}_0(s)}\dot{\gamma}_0(s)\rangle_{\gamma_0(s)} \D s.
    \end{equation*}
    Since $\gamma_0$ is a geodesic, $\nabla_{\dot{\gamma}_0(s)}\dot{\gamma}_0(s)=0$.
    Hence, the second term vanishes.
    Moreover, $Y(1)=\frac1t\log_{\gamma_0(1)}(\gamma_t(1))$.
    Since $\gamma_0(1)=\gamma_t(1)=\point$, $Y(1)=0$.
    We also have $Y(0)=\frac1t\log_{\gamma_0(0)}(\gamma_t(0))=\frac1t\log_{\classCenter}(\classCenter(t))=\frac1t t\tangentVector=\tangentVector$.
    It follows that
    \begin{equation*}
        \frac{\D}{\D t}E(\gamma_t) = - \langle \tangentVector, \dot{\gamma}_0(0) \rangle_{\gamma_0(0)} = \langle \tangentVector, -\log_{\classCenter}(\point) \rangle_{\classCenter}.
    \end{equation*}
    We thus get $\Diff v(\classCenter)[\tangentVector]=\langle \tangentVector, -\log_{\classCenter}(\point) \rangle_{\classCenter}$.
    The result follows by identification.
    One can then conclude the proof of the proposition by using the sum property of the gradient operator.
\end{proof}
Then, the most common choice for the retraction is to take the Riemannian exponential mapping~\eqref{eq:Rexp}.
Furthermore, the stepsize in this case is often simply set to $1$.
It follows that, given some initialization $\classCenter_{(0)}$, the sequence of iterates is
\begin{equation}
    \begin{array}{rcl}
         \classCenter_{(k+1)} & = & \exp_{\classCenter_{(k)}}\left( \frac1{\nbatches}\sum_{i=1}^{\nbatches} \log_{\classCenter_{(k)}}(\point_j) \right)
         \\[7pt]
         & = & \classCenter_{(k)}^{\nicefrac12}\exp\left( \frac1{\nbatches} \sum_{i=1}^{\nbatches} \log(\classCenter_{(k)}^{-\nicefrac12} \point_j \classCenter_{(k)}^{-\nicefrac12}) \right)\classCenter_{(k)}^{\nicefrac12}.
    \end{array}
\label{eq:riemannian_mean_GD}
\end{equation}
The variance $V$~\eqref{eq:riemannian_mean} is a strictly geodesically convex function over $\manifold{\nfeatures}$~\cite{tang2021CESmean}.
Hence, its minimizer is unique.

\Remark{Notice that there is no dependence on $\alpha$ and $\beta$ in~\eqref{eq:riemannian_mean_GD}.
This means that the Riemannian center of mass according to the Fisher-Rao distance in Theorem~\ref{thm:distance_ces} is the same for every C-CES distribution.
} 

The computation of the class centers being solved, we now turn to the second step of the nearest centroïd classifier: the assignment to a class $\labels_j$ of each estimated covariance matrix $\pointBis_j$ belonging to the test set $\test$.
This is achieved by taking the class that corresponds to the minimal geodesic distance with respect to all class centers, i.e.,
\begin{equation}
    \labels_j = \argmin_{\labels\in\llbracket1,\dots,\nclasses\rrbracket} \quad \left\{ \, \squaredist{\classCenter^{(\labels)}}{\pointBis_j} \, \right\}_{\labels\in\llbracket1,\dots,\nclasses\rrbracket}.
\label{eq:label_asignment}
\end{equation}
The resulting nearest centroïd classifier on $\HPD_\nfeatures$ is summarized in Algorithm~\ref{algo:nearest_centroid}.

\begin{algorithm}[h]
	\KwIn{A training set $\train=\{(\pointBis_j, \labels_j)\}_{j=1}^\nbatchestrain$ and a test set $\test=\{\pointBis_j\}_{j=1}^\nbatchestest$.}
	\KwOut{Predictions of the test set $\{\labels_j\}_{j=1}^\mtest$.}
	\# Training\\
	\For{$\labels=1$ to $\nclasses$}{
		Compute the center of mass $\classCenter^{(\labels)}$ of $\{\pointBis_j \in \train: \, \labels_j=\labels\}$ with~\eqref{eq:riemannian_mean_GD}.
	}
	\# Testing\\
	\For{$j=1$ to $\mtest$}{
		Assign $\pointBis_j$ to the class with the nearest class center $\classCenter^{(y)}$ with~\eqref{eq:label_asignment}. \\
	}
	\caption{Nearest centroïd classifier on $\manifold{\nfeatures}$}
	\label{algo:nearest_centroid}
\end{algorithm}

\Remark{
The Gaussian assumption allows recover the classification algorithm from~\cite{barachant2011multiclass}, as in this case: 
$i$) the maximum likelihood estimator is the sample covariance matrix $\estimator_j = \frac1{\nsamples} \sum_{i=1}^{\nsamples} \data_{i,j} \data_{i,j}^H$;
$ii$) $\alpha=1$ and $\beta=0$ in the Fisher-Rao distance $\delta$ of Theorem~\ref{thm:distance_ces}. 
}

\subsection{Application to EEG classification}

\label{sec:appli_EEG}

One usually needs to classify EEG recordings in the context of brain-computer interfaces (BCI), where a subject interacts with a computer through brain activity.
There are several paradigms for BCI based on EEG.
The three main ones are: steady-states visually evoked potentials (SSVEP)~\cite{kalunga2016online}, motor imagery (MI)~\cite{tangermann2012review}, and event-related potentials (ERP)~\cite{arico2014influence}.
This example focuses on ERP data, where subjects are exposed to some stimuli (most often a visual one).
These induce a signal response in the brain: the so-called P300, which is a positive wave occurring 300 ms after the stimulus.
An ERP dataset consists in a set of trials separated into two classes: a target class (TA), for which the subject is exposed to a stimulus; and a non-target class (NT), for which there is no stimulus.
More specifically, we consider the BNCI2014$\_$009 dataset~\cite{arico2014influence}, which is available on the MOABB platform\footnote{
    https://github.com/NeuroTechX/moabb --
    A standard benchmark platform for BCI.
}.
This dataset contains data from 10 subjects, with 3 sessions each.
Data were acquired on 16 electrodes at 256 Hz and bandpass filtered between 0.1 Hz and 20 Hz.
Recordings were then downsampled to 128 Hz.
Each session of each subject contains 1728 trials of $0.8$s: 288 target and 1440 non-target ones.
Hence, each dataset (one session of one subject) yields $\dataset=\{\Data_j,\labels_j\}_{j=1}^{\nbatches}$ in $\realSpace^{\nfeatures\times\nsamples}\times\{\textup{TA},\textup{NT}\}$, where $\nfeatures=16$, $\nsamples=102$, $\nbatches=1728$ and $\nclasses=2$.

To perform classification of ERPs, raw data are not directly used.
Instead, following~\cite{barachant2014plug}, augmented data are leveraged.
Given the training set $\train=\{\Data_j\}_{j=1}^{\nbatchestrain}$, we compute the average target ERP with
\begin{equation}
    \mathbf{P}_{\textup{TA}} = \frac1{\nbatches_{\textup{TA}}} \sum_{\underset{\labels_j=\textup{TA}}{j=1}}^{\nbatchestrain} \Data_j,
\end{equation}
where $\nbatches_{\textup{TA}}$ is the number of target trials in the training set $\train$.
From there, augmented trials are defined as
\begin{equation}
    \AugmentedData_j = 
    \begin{bmatrix}
        \mathbf{P}_{\textup{TA}}
        \\
        \Data_j
    \end{bmatrix}.
\end{equation}
Covariance matrices $\pointBis_j$ are then estimated from these augmented trials both in the training and testing sets.
Finally, the nearest centroïd classifier in Algorithm~\ref{algo:nearest_centroid} is applied on these augmented covariance.
We compare two different versions here:
\begin{enumerate}
    \item \textbf{Gaussian version}: covariance matrices estimated through the sample covariance matrix (SCM) and nearest centroïd classifier employed with $\alpha=1$ and $\beta=0$.
    \item \textbf{$t$-distribution version}: covariance matrices estimated with the MLE of the $t$-distribution with $\nu=2.1$ degrees of freedom and nearest centroïd classifier used with $\alpha=\frac{\nu+\nfeatures}{\nu+\nfeatures+2}$ and $\beta=\alpha-1$.
\end{enumerate}
%


Achieved accuracies are presented in Figure~\ref{fig:MDM_BNCI}.
One can observe that both classifiers feature very good performance on this dataset.
One can further notice that they have very similar performance.
Indeed, on average, the nearest centroïd classifier with the $t$-distribution is better by $0.12\%$.
Considering that the SCM is much simpler to compute than the MLE of the $t$-distribution, one can argue that the nearest centroïd classifier associated with the Gaussian distribution is more advantageous on this dataset.
Due to the biological nature of the data, which can be expected to be noisy and contain a non-negligible amount of outliers, one could have expected that a heavy-tail distribution such as the $t$ with $\nu=2.1$ perform significantly better.
However, the dataset at hand has been curated and the preprocessing has been designed for the Gaussian distribution to work well.
Leveraging the $t$-distribution might be advanategous on real world non-curated data.

\begin{figure}
    \setlength\height{6.0cm} 
\setlength\width{0.8\linewidth}

\begin{center}
\begin{tikzpicture}

\begin{axis}[
        width  =\width,
        height =\height,
        at     ={(0,0)},
        xmin   = 0,
        xmax   = 4.5,
        xtick = {1.25,3.25},
        xticklabels = {MDM Gaussian, MDM Student $t$},
        ylabel = {ROC AUC},
        ymin   = 0.5,
        ymax   = 1,
        ytick={0.5,0.6,0.7,0.8,0.9,1},
        ymajorgrids,
    ]
    \addplot[only marks, mark size=1.3pt, mark options={solid, myorange,opacity=0.3}] table [x expr={1+0.015*\thisrowno{0}},y=MDM_Gaussian,col sep=comma] {figures/5_ClassifEEG/results.csv};

    \addplot[only marks, mark size=1.3pt, mark options={solid, myblue,opacity=0.3}] table [x expr={3+0.015*\thisrowno{0}},y=MDM_Student,col sep=comma] {figures/5_ClassifEEG/results.csv};

    \addplot[only marks, mark size=2pt, mark options={solid, myorange},error bars/.cd,y dir=both,y explicit,error bar style={color=myorange,line width=1pt}] table [x expr={1.25},y=MDM_Gaussian_mean,y error=MDM_Gaussian_std,col sep=comma] {figures/5_ClassifEEG/stats.csv};

    \addplot[only marks, mark size=2pt, mark options={solid, myblue},error bars/.cd,y dir=both,y explicit,error bar style={color=myblue,line width=1pt}] table [x expr={3.25},y=MDM_Student_mean,y error=MDM_Student_std,col sep=comma] {figures/5_ClassifEEG/stats.csv};
    
\end{axis}

\end{tikzpicture}
\end{center}
    \caption{AUC of ROC plots for the nearest centroïd classifiers exploiting the Gaussian distribution (left) and Student $t$-distribution with $\nu=2.1$ degrees of freedom (right) applied on the BNCI2014$\_$009 dataset~\cite{arico2014influence} (10 subjects, 3 sessions each).}
    \label{fig:MDM_BNCI}
\end{figure}
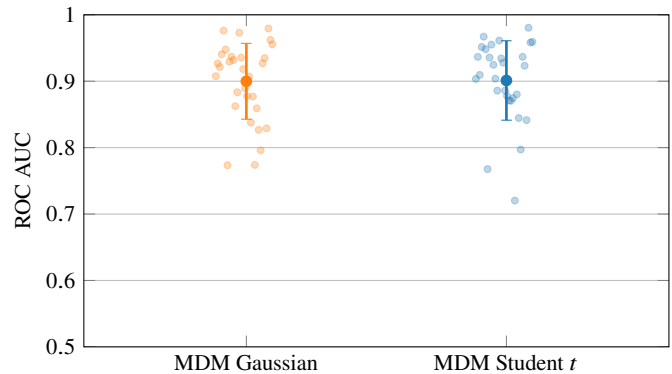


\section{Conclusion}
\label{sec:extension}

This chapter presented the Fisher-Rao geometry of C-CES distributions, and its practical uses in statistical signal processing and machine learning.
Remark that the methodology that consists in obtaining a Riemannian geometry from the Fisher information metric generalizes to any statistical model (assuming that the parameter space is a smooth manifold).
Hence, the approaches presented in this introduction can extend to many other models and applications.
Among other examples, such intrinsic analysis has been conducted for the estimation of rotations matrices \cite{boumal2014cramer} and for other Lie groups related to tracking problems \cite{labsir2021joint, labsir2023barankin}.
In other scopes more directly related to elliptical distributions, we can also mention that geometric tools were used for:
\begin{itemize}
    \item \textbf{Structured covariance matrices}: 
    In many applications, the covariance matrix is known to satisfy some form of structural constraint, that can be exploited to reduce the dimension of the estimation problem (see, e.g., \cite{wiesel2015structured, sun2016robust}).
    Geometric tools can then be leveraged by expressing the constrained space as a sub-manifold of $\HPD_{\nfeatures}$.
    For example: the Fisher information metric was used to obtain structured estimators in \cite{meriaux2019robust, meriaux2020matched};
    A geometry of Toeplitz matrices was studied in \cite{arnaudon2013riemannian};
    A framework for in probabilistic component analysis (low-rank structured covariance matrices) in C-CES was proposed in \cite{bouchard2021riemannian};
    Kronecker products preserve geodesic convexity \cite{Wiesel2012geodesic},
    ans such structure was considered in online covariance matrix estimation in~\cite{bouchard2021line}; geometry and structured covariance have also been considered for blind source separation~\cite{bouchard2021riemannian2}.

    \item \textbf{Non-centered models}: 
    The geodesics and Fisher-Rao distance of the model $\mathbf{x}\sim \mathcal{CES}(\boldsymbol{\mu},\mathbf{\Sigma})$ for the mean-and-covariance product manifold $\mathbb{C}^\nfeatures \times \HPD_\nfeatures$ remains intractable in the general case.
    Even for the Gaussian distribution $\mathbf{x}\sim \mathcal{N}(\boldsymbol{\mu},\mathbf{\Sigma})$, only special cases and approximations from geodesic triangles can be obtained \cite{calvo1991explicit, tang2018information, collas2022use}.
    Numerical methods to evaluate these geodesics and corresponding distances were proposed in \cite{nielsen2023simple, nielsen2023fisher}.
    Concerning estimation problems, Riemannian optimization was leveraged for non-centered mixture of scaled Gaussian distributions (a sub-family of C-CES distributions) in \cite{collas2021tyler, collas2023riemannian}.

    \item \textbf{Mixture models}: 
    Mixtures of C-CES can occur within the samples (the observation is the sum of multiple independent contributions) or within batches (the sample set aggregating multiple classes of C-CES).
    The within-sample mixture is typically used to cast robust models for probabilistic principal component analysis \cite{chen2009robust, sun2015low, hong2021heppcat}.
    In this context, geometric tools were developed for low-rank scaled Gaussian signal corrupted by white Gaussian noise in~\cite{collas2021probabilistic}.
    The within-batch mixture corresponds to a typical sample-wise classification problem.
    For this purpose, $g$-convex relaxations for Gaussian mixture models were studied in~\cite{hosseini2015matrix}.

\end{itemize}
As a final note, we also point out that \textit{information geometry} also refers to a much broader field than the scope covered by this chapter \cite{amari2016information, amari2021information}.
For comprehensive overviews of the many geometric structures behind families of probability distributions, we refer the readers to \cite{nielsen2020elementary, nielsen2022many}.

%
%
\bibliographystyle{spmpsci}
\bibliography{biblio}

\end{document}